\DeclarePairedDelimiter\parentheses{\lparen}{\rparen}	
\newtheorem{theorem}{Theorem}
\newtheorem{lemma}{Lemma}
\newtheorem{corollary}{Corollary}
\theoremstyle{definition}
\DeclareMathOperator{\bx}{{\bf x}}
\DeclareMathOperator{\by}{{\bf y}}
\DeclareMathOperator{\bz}{{\bf z}}
\DeclareMathOperator{\be}{{\bf e}}
\DeclareMathOperator{\bb}{{\bf b}}
\DeclareMathOperator{\bs}{{\bf s}}
\DeclareMathOperator{\bc}{{\bf c}}
\DeclareMathOperator{\bh}{{\bf h}}
\DeclareMathOperator{\bA}{{\bf A}}
\DeclareMathOperator{\tolsetAb}{\Sigma_{\forall\exists}\left([\bA], [\bb]\right)}
\DeclareMathOperator{\tolsetAx}{\Omega_{\forall\exists}\left([\bA], [\bx]\right)}
\DeclareMathOperator{\tolsetb}{\Sigma_{\forall\exists}\left(\bA, [\bb]\right)}
\DeclareMathOperator{\tolsetx}{\Omega_{\forall\exists}\left(\bA, [\bx]\right)}
\DeclareMathOperator{\tolsetmidAx}{\Omega_{\forall\exists}\left(mid([\bA]), [\bx]\right)}
\DeclareMathOperator{\tolsetmidAb}{\Sigma_{\forall\exists}\left(mid([\bA]), [\bb]\right)}
\DeclareMathOperator{\ba}{{\bf a}}
\DeclareMathOperator{\bq}{{\bf q}}
\DeclareMathOperator{\bv}{{\bf c}}
\DeclareMathOperator{\bJ}{{\bf J}}
\DeclareMathOperator{\bg}{{\bf g}}
\DeclareMathOperator{\bl}{{\bf l}}
\DeclareMathOperator{\image}{Im}
\DeclareMathOperator{\vertex}{vert}
\DeclareMathOperator{\conv}{conv}
\DeclareMathOperator{\diag}{diag}
\DeclareMathOperator{\MID}{mid}
\DeclareMathOperator{\RAD}{\Delta}
\DeclareMathOperator{\INF}{inf}
\DeclareMathOperator{\SUP}{sup}
\DeclareMathOperator{\MIN}{min}
\DeclareMathOperator{\MAG}{mag}
\DeclareMathOperator{\MIG}{mig}
\newcommand{\cmid}[1]{{\MID\parentheses*{#1}}}
\newcommand{\cabs}[1]{{|{#1}|}}
\newcommand{\crad}[1]{{\RAD{#1}}}
\newcommand{\cinf}[1]{{\INF{\left({#1}\right)}}}
\newcommand{\csup}[1]{{\SUP{\left({#1}\right)}}}
\newcommand{\cmag}[1]{{\MAG\parentheses*{#1}}}
\newcommand{\cmig}[1]{{\MIG{\left({#1}\right)}}}
\newcommand{\onum}[1]{\overline{{#1}}} 	
\newcommand{\unum}[1]{\underline{{#1}}} 	
\begin{document}

\title{Efficient Set-Based Approaches for the Reliable Computation of Robot Capabilities}

\author{Joshua K. Pickard,
        Vincent Padois,
        Milan Hlad\'ik,
        and David Daney,
\thanks{This work was supported by the Czech Science Foundation Grant P403-18-04735S.}
\thanks{J. K. Pickard was with Auctus, Inria - IMS (Univ. Bordeaux / Bordeaux INP / CNRS UMR 5218), 33405 Talence, France. He is now with Eigen Innovations, 444 Aberdeen St $\sharp$255, Fredericton, New Brunswick, Canada (email: joshua.pickard@inria.fr).}
\thanks{V. Padois, and D. Daney are with Auctus, Inria - IMS (Univ. Bordeaux / Bordeaux INP / CNRS UMR 5218), 33405 Talence, France (emails: vincent.padois@inria.fr and david.daney@inria.fr).}
\thanks{M. Hlad\'ik is with Charles University, Faculty of Mathematics and Physics, Department of Applied Mathematics, Malostransk\'e n\'am. 25, 11800, Prague, Czech Republic (email: milan\_hladik@centrum.cz).}
}

{}

\maketitle

\begin{abstract}

To reliably model real robot characteristics, interval linear systems of equations allow to describe families of problems that consider sets of values.
This allows to easily account for typical complexities such as sets of joint states and design parameter uncertainties.
Inner approximations of the solutions to the interval linear systems can be used to describe the common capabilities of a robotic manipulator corresponding to the considered sets of values.
In this work, several classes of problems are considered. For each class, reliable and efficient polytope, n-cube, and n-ball inner approximations are presented.

The interval approaches usually proposed are inefficient because they are too computationally heavy for certain applications, such as control. We propose efficient new inner approximation theorems for the considered classes of problems. This allows for usage with real-time applications as well as rapid analysis of potential designs.
Several applications are presented for a redundant planar manipulator including locally evaluating the manipulator's velocity, acceleration, and static force capabilities, and evaluating its future acceleration capabilities over a given time horizon.
\end{abstract}

\begin{IEEEkeywords}
Interval analysis, robot capabilities, inner approximation, interval linear system, tolerance solution
\end{IEEEkeywords}

%
\IEEEpeerreviewmaketitle

\section{Introduction}

\IEEEPARstart{T}{he} capability of a robot to autonomously adapt to an open dynamic environment is one of the ultimate challenges of robotics.

One of the keys to solve this challenge lies in the ability to dynamically control the motion of the robot as a function of the prediction of the environment dynamics in order to both optimally achieve the goals assigned to the robot while ensuring safety at all times. In other words, planning and control can no longer be considered as two separated problems solved sequentially. Indeed, this kind of approach works well only for simple robots evolving in static environments and solving basic tasks, where a large amount of time can be spent planning the motion offline and where the control problem consists of tracking, with almost no adaptation, the planned trajectory.

\begin{figure}[h!]
    \centering
    \includegraphics[width=0.5\textwidth]{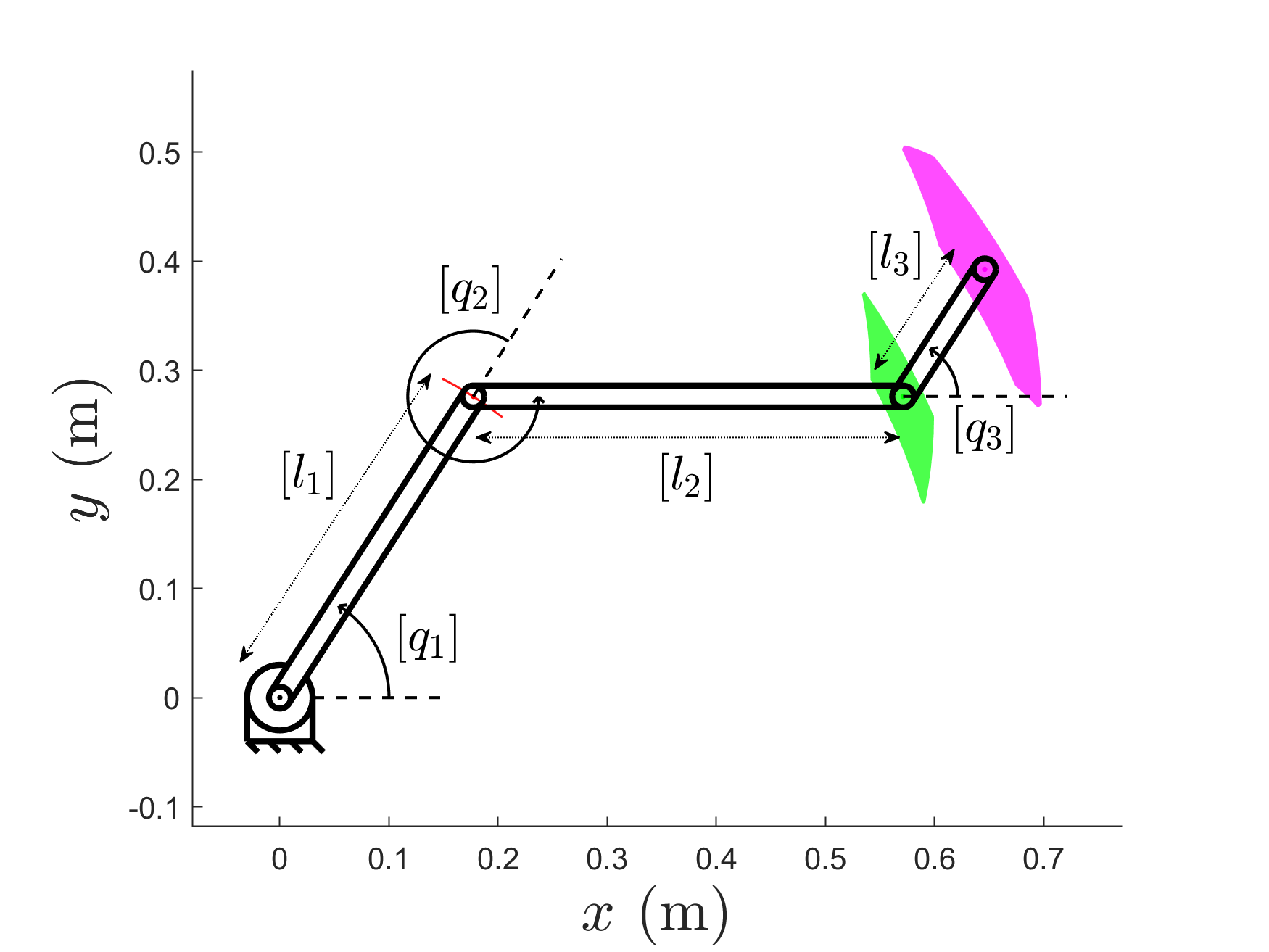}
    \caption{The sets of positions associated with the distal ends of the links for the set of configurations $[\bq] = (1.0,−1.0,1.0)\pm0.1$~rads for a family of three-link planar manipulators with parameter values $[\bl] = (0.3280,0.3940,0.1385)\pm 0.001$~m. The red, green, and purple regions are the corresponding positions of the distal ends of the first, second, and third links respectively.
    }
    \label{fig:threelinkpositions}
\end{figure}

Beyond the perception abilities needed to estimate the state of the robot with respect to its environment, dynamic contexts require to adapt the planned motion at all time. For this adaptation to be optimal, it should not be purely reactive but rather adapt the overall plan online, i.e., at each control instant. While the concept of online re-planning is appealing, it raises the question of the real-time tractability of the corresponding optimization problems. As explained in \cite{Ibanez2017}, in the very challenging case of humanoid robots, accounting for limited computation power, one can choose to make use of rather precise models of the robot's dynamics and constraints but the price to pay for this modeling complexity is the limited time horizon over which planning can be considered, leading to feasible but potentially sub-optimal robot behaviours. On the other hand, one can decide to plan over longer time horizons, for example using optimal control approaches or their receding horizon version such as Model Predictive Control (MPC). In return, one has to make use of simpler models to maintain a low computation complexity for online re-planning to be possible. This is typically the approach taken when planning for locomotion in humanoid robotics: given a target position in Cartesian space, one first computes a desired trajectory for an abstracted version of the system, typically its centre of mass (CoM) and the associated centroidal dynamics. Then one defines a sequence of contacts and associated stances that are both compatible with the desired CoM trajectory and physically viable. Finally, using a more complete model of the system, a whole-body control approach is used to compute the actuation inputs allowing to follow the trajectory joining two consecutive stances and associated contact states \cite{tonneau2018efficient}. The latter approach seems more appropriate in terms of the optimality of the obtained behaviours but also in terms of the involved computations. Indeed, while recent works on Differential Dynamic Programming \cite{mastalli2019crocoddyl} tackle the full model problem over the full horizon, these approaches remain computationally too complex for online re-planning on real robots and it actually does not seem quite necessary to plan precisely joint level actuation over a long time horizon.

If approaches resorting to simplified models seem more appropriate, an abstract model of the system still requires to be connected to a more physically realistic one for control purposes. While this connection can be made through task level abstractions, it is also necessary to account for the low level limits of the robot at the planning level in order to produce plans that can be achieved by the robots given its intrinsic (e.g., joint limits) and extrinsic (e.g., collision avoidance) constraints. Often in robotics, accounting for the joint level limitations at the more abstract level of task description is a matter of manual tuning and heuristics as solving this problem formally and in a computationally efficient manner is complex. Going beyond simple heuristics and assuming constant task level acceleration capabilities (\textit{e.g.} the ones provided by robot manufacturers), some work have been performed in the past to account for acceleration limits at the joint level while accounting for the potential incompatibilities between these acceleration limits and position ones \cite{delprete2018joint,rubrecht2012motion,decre2009extending}. While of interest, these works fail short at accounting in a formal way for the true limits of robots which are best expressed at the joint level in terms of joint position, velocity, torque and torque derivative. Indeed, one of the main difficulties with joint level acceleration limits is that they are state dependant. This is all the more true when considering the expression of these limits in Cartesian space as the mappings from joint space to task space, being them at the static or dynamic level, are both state dependant and non-linear. In order to tackle this complexity, there is a need for new methodologies that are able to efficiently, accurately and with a limited computation cost predict the robot's capabilities in static but more importantly in dynamic scenarios. Furthermore, these methodologies should be able to handle imprecise/variable systems by accounting for system complexities such as sets of joint states and design parameter uncertainties to produce reliable results. Accounting for these uncertainties is of high importance when dealing with floating base systems such as humanoids or mobile robots in general~\cite{8723340}. Thus, modeling errors and state changes over a given time horizon should be properly accounted for when computing capabilities.

With this overall motivation in mind, this work presents several classes of problems that can be used to reliably and efficiently compute polytope, n-cube, and n-ball inner approximations of the capabilities of robotic systems, being them imprecise or uncertain.

The descriptions of the capabilities of many robotics systems can be formulated in terms of linear systems of equations
\begin{equation}
    \bA \bx = \bb
\end{equation}
with $\bA \in \mathbb{R}^{m \times n}$, $\bx \in \mathbb{R}^n$, and $\bb \in \mathbb{R}^m$.
These linear equations may be used to describe specific capabilities of a robotic manipulator for a unique configuration ${\bq}$ or state $\{{\bq},{\bf \dot{q}}\}$.
In offline applications, the evaluation of robot capabilities allows to analyze the performance of a given design and may be used during the dimensional synthesis stage of robotic design. In online applications, the current and/or future robot capabilities can be computed and may be used to autonomously re-plan and control the motion of the robot.

For example, for serial manipulators at the velocity kinematic level one may consider the:
\begin{itemize}
\item \textit{velocity kinematics model}:
\begin{equation}\label{eq:velocity_problem}
    \begin{aligned}
    &{\bJ}({\bq})\dot{\bq} = \dot{\bx}\\
    \end{aligned}
\end{equation}
where ${\bJ}({\bq})$ is the Jacobian matrix, and $\dot{\bq}$ and $\dot{\bx}$ are the joint and operational velocities respectively.
\item \textit{kinetostatic model}
\begin{equation}\label{eq:wrench_problem}
    \begin{aligned}
    &{\bJ}({\bq})^T{\bf f} = \boldsymbol{\tau}\\
    \end{aligned}
\end{equation}
where ${\bf f}$ is the operational wrench and $\boldsymbol{\tau}$ is the joint torque/force vector.
\end{itemize}

At the dynamic level, one may also characterize the capabilities of a robot through the:
\begin{itemize}
\item \textit{forward acceleration model}:
\begin{equation}\label{eq:acceleration_problem}
    \begin{aligned}
    &\dot{\bJ}({\bq},\dot{\bq}) \dot{\bq} + {\bJ}({\bq}) \ddot{\bq} = \ddot{\bx}\\
    \end{aligned}
\end{equation}
where $\dot{\bJ}({\bq},\dot{\bq})$ is the time derivative of the Jacobian matrix, and $\ddot{\bq}$ and $\ddot{\bx}$ are the joint and operational accelerations respectively.
\item \textit{configuration-space dynamic model}:
\begin{equation}\label{eq:configuration_dynamic_problem}
    \begin{aligned}
    &{\bf M}({\bq}) \ddot{\bq} = \boldsymbol{\tau} - {\bv}({\bq}, \dot{\bq}) - {\bg}({\bq})\\
    \end{aligned}
\end{equation}
where ${\bf M({\bq})}$ is the mass matrix, ${\bv}({\bq}, \dot{\bq})$ is the centrifugal and Coriolis vector and ${\bg}({\bq})$ is the gravity vector.
\item \textit{operational-space dynamic problem}:
\begin{equation}\label{eq:operational_dynamic_problem}
    \begin{aligned}
    &\boldsymbol{\Lambda}({\bq}) \ddot{\bx} = {\bf f} - \boldsymbol{\mu}({\bq}, \dot{\bq}) - {\bf p}({\bq})\\
    \end{aligned}
\end{equation}
where $\boldsymbol{\Lambda}({\bq})$ is the Cartesian mass matrix, $\boldsymbol{\mu}({\bq}, \dot{\bq})$ is the centrifugal and Coriolis vector in Cartesian space and ${\bf p}({\bq})$ is the gravity vector in Cartesian space.
\end{itemize}

Due to the many sources of uncertainties that exist (\emph{e.g.}, noisy sensor data, modeling errors), to reliably predict the capabilities of the robot it can be desirable to consider sets of robot models and states.
Using interval analysis (see~\cite{Moore:107064,jaulin:hal-00845131, MooKea2009}), it is possible to extend these problems in order to describe a family of problems that consider sets of values (\emph{e.g.}, sets of design parameters, sets of configurations $[{\bq}]$, sets of joint velocities $[\dot{\bq}]$, etc.).
Figure~\ref{fig:threelinkpositions} depicts a family of three-link planar manipulators with sets of link lengths, $[l_1]$, $[l_2]$, and $[l_3]$, and sets of configurations, $[q_1]$, $[q_2]$, and $[q_3]$, and shows the corresponding sets of positions associated with the distal ends of the links for the given configurations~\eqref{eq:config_eg} and design parameters~\eqref{eq:parametervalues}. The set of configurations may be used to model dimensional uncertainties in the system, and may also be used to model the temporal evolution of the system. The set-based approaches presented in this work can be used to reliably predict the robot’s various capabilities for these type of problems, where the reliable and efficient evaluation of imprecise/variable systems  and their current or future capabilities over a given prediction period provides many avenues for improving the performance and safety of robotic manipulators.
Classes of problems may be described in terms of quantifiers which allow to describe reliable inner approximations of the capabilities of a robot over sets of values.
Alternatively, reliable outer approximations may also be computed~\cite{PopHla2013,Shary2001,Kol2004c,PopKra2007}, although these types of problems are not considered in this work.
Considering velocity kinematics related models, when a set of configurations $[{\bq}]$ are considered the Jacobian matrix ${\bJ}$ becomes an interval matrix $[{\bJ}]$. Generally, the evaluation of $[{\bJ}]$ is overestimated due to the interval wrapping effect~\cite{Neumaier1993} and dependency problem~\cite{Nedialkov2004} and there exists some matrices ${\bJ}_o \in [{\bJ}]$ such that ${\bJ}_o \not\in \{{\bJ}({\bq}) ~|~ {\bq} \in [{\bq}] \}$. Nevertheless, each ${\bJ} \in [{\bJ}]$ has associated capabilities and the intersection of these capabilities describes the common capabilities for all configurations ${\bq} \in [{\bq}]$. This extends to the dynamic models similarly.

The interval analysis framework has proven to be a useful tool for the analysis and synthesis of robotic mechanisms as it guarantees certified numerical solutions to problems~\cite{merlet:inria-00362431}. When combined with classical branch-and-bound approaches, entire parameter spaces can be thoroughly and reliably explored allowing the worst-case conditions throughout the parameter space to be properly understood. Such an approach has been widely used for workspace analysis, \emph{e.g.}, considering flexure-jointed mechanisms~\cite{10.1115/1.3042151}, evaluating wrench capabilities~\cite{5657268,doi:10.1139/tcsme-2016-0012} and positioning errors~\cite{Merlet2008}, and computing collision-free poses~\cite{farzanehkaloorazi_masouleh_caro_2017} and singularity-free poses~\cite{10.1007/978-3-642-14743-2_13,10.1007/978-1-4020-4941-5_5}.

By considering the intersections of manipulator capabilities for all values in the sets, this allows to compute common capabilities of the robotic manipulator corresponding to the considered sets of values.
To reliably describe the capabilities of a robotic manipulator corresponding to a set of values, inner approximations of the intersections of manipulator capabilities must be computed.
For a given interval linear system of equations, several classes of problems may be considered.
The main contribution of this work is the development of efficient polytope, n-cube, and n-ball inner approximations which are applicable to each class of problem considered.
Furthermore, these inner approximations are applied to several relevant robotics problems to demonstrate their capability to easily handle typical complexities such as sets of joint states and design parameter uncertainties.

The remainder of this paper is as follows.
First, in Section~\ref{sec:notations} the relevant interval notations are introduced.
Next, in Section~\ref{sec:class} the classes of interval linear system of equation problems considered in this work are described. In Section~\ref{sec:innerapprox} the polytope, n-cube, and n-ball inner approximations for each class of problem are presented. Several applications are presented in Section~\ref{sec:applications} for a redundant planar manipulator to locally evaluate the manipulator's velocity, static force, and acceleration capabilities, and also its future acceleration capabilities over a given time horizon.

\section{Notation}\label{sec:notations}
The interval extension of a variable $x$ is given by
\begin{equation}
    [x] = \{x~|~\unum{x} \le x \le \onum{x}\}
\end{equation}
where $\unum{x}$ is the lower bound (infimum) $\unum{x} = \cinf{x}$ and $\onum{x}$ is the upper bound (supremum) $\onum{x} = \csup{x}$ of the interval.

A few terms useful in this work are:
\begin{enumerate}
    \item The \textit{midpoint} of an interval is the real value given by
    \begin{equation}
        \cmid{[x]} = \frac{1}{2}(\unum{x} + \onum{x})
    \end{equation}

    \item The \textit{radius} of an interval is the real value given by
    \begin{equation}
        \crad[x] = \frac{1}{2}(\onum{x} - \unum{x})
    \end{equation}

    \item The \textit{mignitude} of an interval is the real value given by
    \begin{equation}
        \cmig{[x]} = \min(\{\cabs{x}~|~ x \in [x]\})
    \end{equation}

    \item The \textit{magnitude} of an interval is the real value given by
    \begin{equation}
        \cmag{[x]} = \max(\{\cabs{x}~|~ x \in [x]\})
    \end{equation}

    \item The \textit{absolute value} of an interval is the interval given by
    \begin{equation}
        \cabs{[x]} = [\cmig{[x]},\cmag{[x]}]
    \end{equation}
\end{enumerate}

Let a vector and matrix be given by $\bx$ and $\bA$ respectively. The interval extensions of the vector and matrix are then given by $[\bx]$ and $[\bA]$ and the previously described terms may be applied to each element in the vector or matrix.
Classes of problems may be described with the $\forall$ (for all) and $\exists$ (there exists) quantifiers which allow to describe reliable inner approximations to the interval linear system of equations over the considered sets of values.

\section{Classes of Problems}\label{sec:class}
Several classes of problems are considered in this work which are presented in a general form $\bA \bx = \bb$. These problems and their corresponding sub-problems are as follows.

\subsection{Problems $\tolsetAx$ and $\tolsetx$}
The first class of problems considered are of the form
\begin{equation}\label{eq:linear_system_b}
    \tolsetAx = \{ \bb ~|~(\forall \bA \in [\bA])(\exists \bx \in [\bx])(\bA \bx = \bb)\}
\end{equation}
which applies a set of linear transformations $[\bA]$ to a set of vectors $[\bx]$.
The sub-problem
\begin{equation}
    \tolsetx = \{ \bb ~|~(\exists \bx \in [\bx])(\bA \bx = \bb)\}
\end{equation}
is geometrically a convex polytope~\cite{Bouchard2009,carretero-gosselin-2010-cable,Gouttefarde2010475482}. Therefore, \eqref{eq:linear_system_b} is equivalent to the intersection of all sub-problems $\bA \in [\bA]$. That is,
\begin{equation}\label{eq:linear_system_b_inter}
    \tolsetAx = \bigcap_{\bA \in [\bA]}\tolsetx
\end{equation}

\subsection{Problems $\tolsetAb$ and $\tolsetb$}
The second class of problems considered are of the form
\begin{equation}\label{eq:linear_system_x}
    \tolsetAb = \{\bx~|~(\forall \bA \in [\bA])(\exists~\bb \in [\bb])(\bA \bx = \bb)\}
\end{equation}
which is precisely the tolerance solution set of the interval linear system as described in~\cite{Shary2002}.
Considering the sub-problem
\begin{equation}
    \tolsetb = \{\bx~|~(\exists~\bb \in [\bb])(\bA \bx = \bb)\}
\end{equation}
which is also geometrically a convex polytope, it is clear that
\begin{equation}\label{eq:linear_system_x_inter}
\begin{aligned}
\tolsetAb = \bigcap_{\bA \in [\bA]} \tolsetb
\end{aligned}
\end{equation}

\subsection{Robotics Use Cases}
The use cases of each class of problem applied to the relevant robotics problems~\eqref{eq:velocity_problem} -- 
\eqref{eq:operational_dynamic_problem} are outlined in Tables~\ref{tab:usecases_tolsetx} 
and \ref{tab:usecases_tolsetb}.
The classes $\tolsetx$ and $\tolsetb$ consider a unique configuration $\bq$ or state $\{\bq, \dot{\bq}\}$ whereas the classes $\tolsetAx$ and $\tolsetAb$ can consider a set of configuration $[\bq]$ or states $\{[\bq], [\dot{\bq}]\}$.

\bgroup
\def\arraystretch{1.5}
\begin{table*}[t]
\setlength{\fboxsep}{0pt}%
\setlength{\fboxrule}{0pt}%
\begin{center}
\caption{Inner approximation use cases for $\tolsetx$ and $\tolsetAx$ \label{tab:usecases_tolsetx}}
\begin{tabular}{|l|c|c|c|}
     \hline
     {\bf Problems} & Unknown & Description of $\tolsetx$ & Description of $\tolsetAx$   \\\hline
     ${\bJ}({\bq})\dot{\bq} = \dot{\bx}$ & $\dot{\bx}$ &
     $\{\dot{\bx}~|~(\exists \dot{\bq} \in [\dot{\bq}])({\bJ}({\bq})\dot{\bq} = \dot{\bx})\}$ &
     $\{\dot{\bx}~|~(\forall {\bJ}({\bq}) \in {\bJ}([{\bq}]))(\exists \dot{\bq} \in [\dot{\bq}])({\bJ}({\bq})\dot{\bq} = \dot{\bx})\}$ \\\hline

     ${\bJ}({\bq})^T{\bf f} = \boldsymbol{\tau}$ & $\boldsymbol{\tau}$ &
     $\{\boldsymbol{\tau}~|~(\exists {\bf f} \in [{\bf f}])({\bJ}({\bq})^T{\bf f} = \boldsymbol{\tau})\}$ &
     $\{\boldsymbol{\tau}~|~(\forall {\bJ}({\bq}) \in {\bJ}([{\bq}]))(\exists {\bf f} \in [{\bf f}])({\bJ}({\bq})^T{\bf f} = \boldsymbol{\tau})\}$ \\\hline

     $\dot{\bJ}({\bq},\dot{\bq})\dot{\bq} + {\bJ}({\bq})\ddot{\bq} = \ddot{\bx}$ & $\ddot{\bx}$ &
     $\{\ddot{\bx}~|~(\exists \dot{\bq} \in [\dot{\bq}])(\dot{\bJ}({\bq},\dot{\bq})\dot{\bq} = \ddot{\bx})\}$ &
     $\{\ddot{\bx}~|~(\forall \dot{\bJ}({\bq},\dot{\bq}) \in \dot{\bJ}([{\bq}],[\dot{\bq}]))(\exists \dot{\bq} \in [\dot{\bq}])(\dot{\bJ}({\bq},\dot{\bq})\dot{\bq} = \ddot{\bx})\}$ \\
     & & $+ \{\ddot{\bx}~|~(\exists \ddot{\bq} \in [\ddot{\bq}])({\bJ}({\bq})\ddot{\bq} = \ddot{\bx})\}$ &
     $+ \{\ddot{\bx}~|~(\forall {\bJ}({\bq}) \in {\bJ}([{\bq}]))(\exists \ddot{\bq} \in [\ddot{\bq}])({\bJ}({\bq})\ddot{\bq} = \ddot{\bx})\}$
     \\\hline
\end{tabular}
\end{center}
\end{table*}
\egroup

\bgroup
\def\arraystretch{1.5}
\begin{table*}[t]
\setlength{\fboxsep}{0pt}%
\setlength{\fboxrule}{0pt}%
\begin{center}
\caption{Inner approximation use cases for $\tolsetb$ and $\tolsetAb$.\label{tab:usecases_tolsetb}}
\begin{tabular}{|l|c|c|c|}
     \hline
     {\bf Problems} & Unknown & Description of $\tolsetb$ & Description of $\tolsetAb$ \\\hline
     ${\bJ}({\bq})\dot{\bq} = \dot{\bx}$ & $\dot{\bq}$ &
     $\{\dot{\bq}~|~(\exists \dot{\bx} \in [\dot{\bx}])({\bJ}({\bq})\dot{\bq} = \dot{\bx})\}$ &
     $\{\dot{\bq}~|~(\forall {\bJ}({\bq}) \in {\bJ}([{\bq}]))(\exists \dot{\bx} \in [\dot{\bx}])({\bJ}({\bq})\dot{\bq} = \dot{\bx})\}$\\\hline

     ${\bJ}({\bq})^T{\bf f} = \boldsymbol{\tau}$ & ${\bf f}$ &
     $\{{\bf f}~|~(\exists \boldsymbol{\tau} \in [\boldsymbol{\tau}])({\bJ}({\bq})^T{\bf f} = \boldsymbol{\tau})\}$&
     $\{{\bf f}~|~(\forall {\bJ}({\bq}) \in {\bJ}([{\bq}]))(\exists \boldsymbol{\tau} \in [\boldsymbol{\tau}])({\bJ}({\bq})^T{\bf f} = \boldsymbol{\tau})\}$\\\hline

     ${\bf M}({\bq}) \ddot{\bq} = \boldsymbol{\tau} - {\bv}({\bq}, \dot{\bq}) - {\bg}({\bq})$ & $\ddot{\bq}$ &
     $\{\ddot{\bq}~|~(\exists \boldsymbol{\tau} \in [\boldsymbol{\tau}])(\exists \bq \in [\bq])(\exists \dot{\bq} \in [\dot{\bq}])$&
     $\{\ddot{\bq}~|~(\forall {\bf M}({\bq}) \in {\bf M}([{\bq}]))(\exists \boldsymbol{\tau} \in [\boldsymbol{\tau}])(\exists \bq \in [\bq])(\exists \dot{\bq} \in [\dot{\bq}])$\\
     &&$({\bf M}({\bq}) \ddot{\bq} = \boldsymbol{\tau} - {\bv}({\bq}, \dot{\bq}) - {\bg}({\bq}))\}$&
     $({\bf M}({\bq}) \ddot{\bq} = \boldsymbol{\tau} - {\bv}({\bq}, \dot{\bq}) - {\bg}({\bq}))\}$\\\hline

     $\boldsymbol{\Lambda}({\bq}) \ddot{\bx} = {\bf f} - \boldsymbol{\mu}({\bq}, \dot{\bq}) - {\bf p}({\bq})$ & $\ddot{\bx}$ &
     $\{\ddot{\bx}~|~(\exists {\bf f} \in [{\bf f}])(\exists \bq \in [\bq])(\exists \dot{\bq} \in [\dot{\bq}])$&
      $\{\ddot{\bx}~|~(\forall \boldsymbol{\Lambda}({\bq}) \in \boldsymbol{\Lambda}([{\bq}]))(\exists {\bf f} \in [{\bf f}])(\exists \bq \in [\bq])(\exists \dot{\bq} \in [\dot{\bq}])$\\
      &&$(\boldsymbol{\Lambda}({\bq}) \ddot{\bx} = {\bf f} - \boldsymbol{\mu}({\bq}, \dot{\bq}) - {\bf p}({\bq}))\}$&
     $(\boldsymbol{\Lambda}({\bq}) \ddot{\bx} = {\bf f} - \boldsymbol{\mu}({\bq}, \dot{\bq}) - {\bf p}({\bq}))\}$\\\hline
\end{tabular}
\end{center}
\end{table*}
\egroup

\section{Inner Approximations}\label{sec:innerapprox}
\subsection{$\tolsetx$ Inner Approximations}\label{sec:tolsetx}

Geometrically, polytope $\tolsetx$ represents a so called zonotope \cite{Zie2004}.
The $\tolsetx$ polytope can be obtained as the convex hull ($\conv$) of the set of $\bb$ associated with the $2^n$ vertices ($\vertex$) of $[\bx]$. That is, the $\tolsetx$ polytope can be computed by
\begin{equation}\label{eq:tolsetx}
\tolsetx = \conv(\{\bb~|~ (\exists \bx \in \vertex([\bx]))(\bA \bx = \bb)\})
\end{equation}
An equivalent non-iterative approach called the hyperplane shifting method, proposed in~\cite{Bouchard2009} and further improved in~\cite{Gouttefarde2010475482}, may be used as a fast alternative in place of the convex hull routine. It requires to consider
sub-matrices formed from all of the possible combinations of $m-1$ columns of $\bA$ and directly computes the corresponding set of shifted hyperplanes.

Inner approximations of the largest inscribed n-ball and n-cube can be computed from the half-space representation of the $\tolsetx$ polytope. Let the half-space representation of the $\tolsetx$, describing the associated set of $\bb$, be given by
\begin{equation} \label{eq:tolsetx_HP}
    {\bf H}\bb \le {\bf d}
\end{equation}
with ${\bf H} \in \mathbb{R}^{k \times m}$ and ${\bf d} \in \mathbb{R}^k$. The hyperplane shifting method provides an efficient means of evaluating the half-space representation, especially when $m$ is close to $n$ as is common for many robotics problems.

\begin{theorem}[Largest inscribed n-cube centered at ${\bb_c}$]\label{th:r_cube_tolsetx}
If ${\bb_c}$ belongs to $\tolsetx$, then the n-cube
\begin{equation}
\begin{aligned}[]
[\bc] = {\bb_c} + r[\be]
\end{aligned}
\end{equation}
where $[\be]$ is a vector of $[-1,1]$ intervals and
\begin{equation} \label{eq:tolsetx_cube_r}
\begin{aligned}
r=\min_{i=1,\ldots,k} \frac{-\bh_i {\bb_c} + d_i}{\|\bh_i\|_1}
\end{aligned}
\end{equation}
 where $\bh_i$ is the $i^{th}$ row of ${\bf H}$ and ${\|\cdot\|_1}$ is the 1-norm, is contained in $\tolsetx$.
\end{theorem}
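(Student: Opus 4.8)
The plan is to show that every point of the n-cube $[\bc]$ satisfies all $k$ halfspace inequalities $\bh_i \bb \le d_i$ that define $\tolsetx$, which by \eqref{eq:tolsetx_HP} is exactly the condition for containment in the polytope. First I would take an arbitrary point $\bb \in [\bc]$ and write it as $\bb = \bb_c + r\bv$ where $\bv \in [-1,1]^m$, i.e. $\|\bv\|_\infty \le 1$. For a fixed row index $i$, I then evaluate $\bh_i \bb = \bh_i \bb_c + r\,\bh_i \bv$ and bound the second term from above.

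The key inequality is $\bh_i \bv \le \|\bh_i\|_1$, which follows from the dual-norm (Hölder) bound $|\bh_i \bv| \le \|\bh_i\|_1 \|\bv\|_\infty \le \|\bh_i\|_1$ since $\|\bv\|_\infty \le 1$. Because $r \ge 0$ (which I would note holds precisely because $\bb_c \in \tolsetx$ forces $-\bh_i\bb_c + d_i \ge 0$ for every $i$, so each term in the minimum defining $r$ is nonnegative), multiplying preserves the direction of the inequality and gives $r\,\bh_i \bv \le r\|\bh_i\|_1$. Hence $\bh_i \bb \le \bh_i \bb_c + r\|\bh_i\|_1$.

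It then remains to check $\bh_i \bb_c + r\|\bh_i\|_1 \le d_i$. By the definition \eqref{eq:tolsetx_cube_r}, $r$ is the minimum over all rows, so in particular $r \le \frac{-\bh_i\bb_c + d_i}{\|\bh_i\|_1}$, which rearranges to exactly $r\|\bh_i\|_1 \le -\bh_i\bb_c + d_i$, i.e. $\bh_i\bb_c + r\|\bh_i\|_1 \le d_i$. Combining the two displayed bounds yields $\bh_i \bb \le d_i$. Since $i$ was arbitrary, $\bb$ satisfies all the halfspace constraints, so $\bb \in \tolsetx$; and since $\bb \in [\bc]$ was arbitrary, $[\bc] \subseteq \tolsetx$.

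I expect no serious obstacle here: the argument is the standard Chebyshev-ball-style computation, with the $1$-norm (rather than the $2$-norm) appearing in \eqref{eq:tolsetx_cube_r} precisely because the inscribed body is an $\ell_\infty$ cube and the relevant dual norm of $\bh_i$ against an $\ell_\infty$ perturbation is $\|\bh_i\|_1$. The only points deserving care are the degenerate case $\|\bh_i\|_1 = 0$ (a trivial row, which I would assume is excluded from the irredundant halfspace description, or handle by noting it imposes no constraint) and the explicit verification that $r \ge 0$ so that the sign of the multiplication step is correct; both are straightforward given the hypothesis $\bb_c \in \tolsetx$.
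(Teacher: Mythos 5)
Your proof is correct and follows essentially the same route as the paper: the paper simply cites the standard Chebyshev-center linear program with the $1$-norm coefficient and reads off the minimum ratio for a fixed center, while you explicitly verify the underlying containment via the H\"older bound $\bh_i \bv \le \|\bh_i\|_1\|\bv\|_\infty$. Your added remarks on $r\ge 0$ and the degenerate row $\|\bh_i\|_1=0$ are sensible details the paper leaves implicit.
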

\begin{proof}
The Chebyshev center ${\bb_c}$ of a polytope yielding the largest inscribed n-cube can be determined by solving the linear program
\begin{equation}\label{eq:tolsetx_chebyshev_center_cube}
    \begin{aligned}
        &\max              & &r\\
        &\text{subject to} & &\bh_i {\bb_c} + r \|\bh_i\|_1 \le d_i\\
        & & & r \ge 0\\
        & & & i=1,\ldots,k\\
    \end{aligned}
\end{equation}
The maximum value for $r$ may be computed directly for a given ${\bb_c}$ as~\eqref{eq:tolsetx_cube_r}.
\end{proof}

Many extensions have been considered for Theorem~\ref{th:r_cube_tolsetx}, especially on the topic of tolerance sensitivity analysis (see~\cite{WenChen2010,Hla2011c}).

\begin{theorem}[Largest inscribed n-ball centered at ${\bb_c}$]\label{th:r_ball_tolsetx}
If ${\bb_c}$ belongs to $\tolsetx$, then the n-ball
\begin{equation}
\begin{aligned}
\mathcal{B} = \{ \bb ~ | ~ \|\bb - {\bb_c}\|_2 \le r\}
\end{aligned}
\end{equation}
with
\begin{equation} \label{eq:tolsetx_ball_r}
\begin{aligned}
r=\min_{i=1,\ldots,k} \frac{-\bh_i {\bb_c} + d_i}{\|\bh_i\|_2}
\end{aligned}
\end{equation}
 where ${\|\cdot\|_2}$ is the 2-norm, is contained in $\tolsetx$.
\end{theorem}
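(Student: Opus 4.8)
The plan is to follow the template of the proof of Theorem~\ref{th:r_cube_tolsetx} almost verbatim, changing only the norm that measures the ``width'' of the inscribed body: the $\infty$-ball (n-cube) is replaced by the Euclidean ball, so the $1$-norm $\|\bh_i\|_1$ must be replaced by the $2$-norm $\|\bh_i\|_2$. Concretely, I would first write the analogue of the Chebyshev-center program \eqref{eq:tolsetx_chebyshev_center_cube}, namely maximize $r$ subject to $\bh_i \bb_c + r\|\bh_i\|_2 \le d_i$ for $i = 1,\dots,k$ and $r \ge 0$, and then argue that its optimal value of $r$ for a fixed center $\bb_c$ is exactly \eqref{eq:tolsetx_ball_r}.

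The single substantive step is to justify the constraint $\bh_i \bb_c + r\|\bh_i\|_2 \le d_i$ as the precise condition for $\mathcal{B}$ to lie inside the $i$-th half-space of \eqref{eq:tolsetx_HP}. For this I would observe that $\mathcal{B} \subseteq \{\bb : \bh_i \bb \le d_i\}$ holds if and only if $\max_{\bb \in \mathcal{B}} \bh_i \bb \le d_i$. Substituting $\bb = \bb_c + \mathbf{v}$ with $\|\mathbf{v}\|_2 \le r$ and evaluating the support function of the Euclidean ball, \[ \max_{\|\mathbf{v}\|_2 \le r} \bh_i \mathbf{v} = r\,\|\bh_i\|_2, \] where the maximum is attained at $\mathbf{v} = r\,\bh_i^{T}/\|\bh_i\|_2$ by the Cauchy--Schwarz inequality, yields $\max_{\bb \in \mathcal{B}} \bh_i \bb = \bh_i \bb_c + r\|\bh_i\|_2$, which gives the claimed equivalence.

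Finally, solving each half-space constraint for $r$ gives $r \le (d_i - \bh_i \bb_c)/\|\bh_i\|_2$; since $\bb_c \in \tolsetx$ satisfies $\bh_i \bb_c \le d_i$, every right-hand side is nonnegative, so the largest admissible radius for the fixed center is the minimum over $i$, i.e.\ \eqref{eq:tolsetx_ball_r}, and the resulting $\mathcal{B}$ lies inside every half-space and hence in $\tolsetx$. The only point that needs care --- the ``hard part,'' though it is elementary --- is the norm swap: the n-cube proof uses the duality $\sup_{\|\mathbf{v}\|_\infty \le r} \bh_i \mathbf{v} = r\|\bh_i\|_1$, whereas here I use that the Euclidean norm is self-dual, $\sup_{\|\mathbf{v}\|_2 \le r} \bh_i \mathbf{v} = r\|\bh_i\|_2$. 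The hypothesis $\bb_c \in \tolsetx$ is exactly what guarantees both that $r \ge 0$ and that the center is itself feasible.
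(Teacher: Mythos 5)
Your proposal is correct and follows essentially the same route as the paper: the paper's proof simply writes down the Chebyshev-center linear program with constraints $\bh_i \bb_c + r\|\bh_i\|_2 \le d_i$ and reads off the optimal $r$ for a fixed center as the minimum in \eqref{eq:tolsetx_ball_r}. Your additional justification of the half-space constraint via the support function $\max_{\|\mathbf{v}\|_2\le r}\bh_i\mathbf{v}=r\|\bh_i\|_2$ (self-duality of the Euclidean norm) is exactly the step the paper leaves implicit, so the two arguments coincide in substance.
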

\begin{proof}
The Chebyshev center ${\bb_c}$ of a polytope yielding the largest inscribed n-ball can be determined by solving the linear program
\begin{equation}\label{eq:tolsetx_chebyshev_center_ball}
    \begin{aligned}
        &\max              & &r\\
        &\text{subject to} & &\bh_i {\bb_c} + r \|\bh_i\|_2 \le d_i\\
        & & & r \ge 0\\
        & & & i=1,\ldots,k\\
    \end{aligned}
\end{equation}
The maximum value for $r$ may be computed directly for a given ${\bb_c}$ as~\eqref{eq:tolsetx_ball_r}.
\end{proof}

As an example, consider the $\tolsetx$ problem given
\begin{equation}\label{eq:example_values}
    \bA =
    \begin{pmatrix}
    0.8947  & 0.6707  & 0.2409\\
    0.3348  & 0.3899  & 0.6958\\
    \end{pmatrix},
    [\bx] =
    \begin{pmatrix}
    [-1,1]\\
    [-1,1]\\
    [-1,1]\\
    \end{pmatrix}.
\end{equation}
The following inner approximations of $\tolsetx$ are given in Figure~\ref{fig:tolsetx}:
\begin{itemize}
    \item the associated polytope (purple); 
    \item the largest n-cube centered at the origin with $r=0.4686$ (red);
    \item the largest n-ball centered at the origin with $r=0.6407$ (green);
    \item the largest n-cube with variable center with $b_c = (0.4855,0.2823)$ and $r=0.4686$ (dashed line);
    \item the largest n-ball with variable center with $b_c = (0.2921,0.1698)$ and $r=0.6407$ (dashed line).
\end{itemize}
When reliable arithmetic operations are used which account for round-off errors, the computed inner approximations provide a reliable description of the solution of the linear transformation.
In this example there is a diagonal line of optimal solutions corresponding to the parallel edges that the variable center inner approximation algorithm can return.

\begin{figure}[h!]
    \centering
    \includegraphics[width=0.4\textwidth,trim={0cm 0.0cm 1cm 0.0cm},clip]{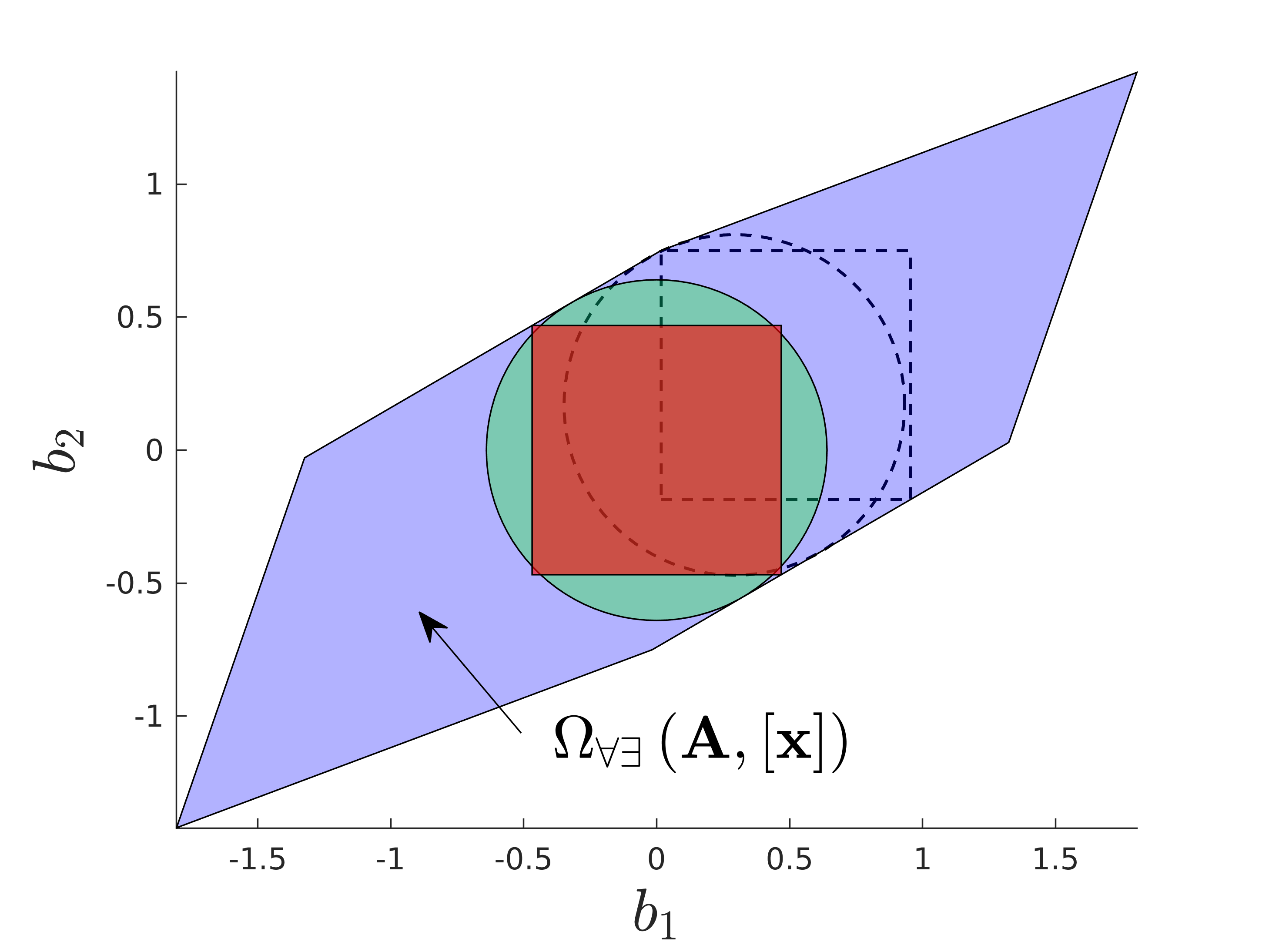}
    \caption{The associated polytope, n-cube, and n-ball inner approximations of $\tolsetx$.}
    \label{fig:tolsetx}
\end{figure}

\subsection{$\tolsetAx$ Inner Approximations}\label{sec:tolsetAx}
As stated in~\eqref{eq:linear_system_b_inter}, the common capabilities are the intersection of the capabilities for all sub-problems $\bA \in [\bA]$.
Since the capabilities associated with each sub-problem are geometrically convex polytopes, the common capabilities are the intersection of the convex polytopes for all sub-problems.

\begin{theorem}
We have $\bb\in\tolsetAx$ if and only if the linear system
\begin{align*}
&\bb=(\cmid{[\bA]}-\diag(\bs)\RAD[\bA])\bx_1\\
&~~~~~~~~-(\cmid{[\bA]}~+\diag(\bs)\RAD[\bA])\bx_2,\\
&\bx_1,\bx_2\geq0,\\
&\unum{\bx}\leq\bx_1-\bx_2\leq\onum{\bx}
\end{align*}
is solvable for each $\bs\in\{\pm1\}^m$, where $\diag(\bs)$ generates a diagonal matrix from the vector $\bs$.
\end{theorem}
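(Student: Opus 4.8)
The plan is to reduce the claim to a half-space (support-function) comparison and then to evaluate the resulting worst-case value by a minimax argument. By~\eqref{eq:linear_system_b_inter} I write $\tolsetAx=\bigcap_{\bA\in[\bA]}\bA[\bx]$, where $\bA[\bx]:=\{\bA\bx:\bx\in[\bx]\}$ is the zonotope $\tolsetx$. Since each $\bA[\bx]$ is a compact convex polytope, $\bb\in\bA[\bx]$ holds if and only if $\bc^{T}\bb\le\max_{\bx\in[\bx]}\bc^{T}\bA\bx$ for every direction $\bc\in\mathbb{R}^{m}$. Intersecting over $\bA$, the first step gives the equivalence $\bb\in\tolsetAx$ iff $\bc^{T}\bb\le g(\bc)$ for all $\bc$, where $g(\bc):=\min_{\bA\in[\bA]}\max_{\bx\in[\bx]}\bc^{T}\bA\bx$.

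The \emph{crux} is to evaluate $g$. The map $(\bA,\bx)\mapsto\bc^{T}\bA\bx$ is bilinear and the boxes $[\bA],[\bx]$ are compact and convex, so Sion's minimax theorem lets me interchange the order of optimization. For fixed $\bx$ the inner minimization over $\bA\in[\bA]$ is separable in the entries of $\bA$ and is attained at $A_{ij}=\cmid{[\bA]}_{ij}-\sign(c_{i}x_{j})\RAD[\bA]_{ij}$, which yields $\min_{\bA\in[\bA]}\bc^{T}\bA\bx=\bc^{T}\cmid{[\bA]}\bx-\cabs{\bc}^{T}\RAD[\bA]\cabs{\bx}$. Hence $g(\bc)=\max_{\bx\in[\bx]}\bigl(\bc^{T}\cmid{[\bA]}\bx-\cabs{\bc}^{T}\RAD[\bA]\cabs{\bx}\bigr)$.

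Next I identify the feasible set of the displayed system with a polyhedron and compute its support function. Putting $\bx=\bx_{1}-\bx_{2}$ and $\by=\bx_{1}+\bx_{2}$, the constraints $\bx_{1},\bx_{2}\ge0$ are equivalent to $\by\ge\cabs{\bx}$, so for a given $\bs$ the system is solvable precisely when $\bb\in P_{\bs}:=\{\cmid{[\bA]}\bx-\diag(\bs)\RAD[\bA]\by:\bx\in[\bx],\ \by\ge\cabs{\bx}\}$, a closed polyhedral set (it is a linear image of the polyhedron $\{(\bx,\by):\bx\in[\bx],\ \by\ge\bx,\ \by\ge-\bx\}$). A direct computation of its support function gives $h_{P_{\bs}}(\bc)=\max_{\bx\in[\bx]}\bigl(\bc^{T}\cmid{[\bA]}\bx-(\diag(\bs)\bc)^{T}\RAD[\bA]\cabs{\bx}\bigr)$ whenever $\RAD[\bA]^{T}\diag(\bs)\bc\ge0$ (and $+\infty$ otherwise, the direction $\by\to\infty$ being unbounded). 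Because $(\diag(\bs)\bc)^{T}\RAD[\bA]\cabs{\bx}\le\cabs{\bc}^{T}\RAD[\bA]\cabs{\bx}$ with equality at $\bs=\sign(\bc)$, I obtain $h_{P_{\bs}}(\bc)\ge g(\bc)$ for every $\bs$ and $\min_{\bs\in\{\pm1\}^{m}}h_{P_{\bs}}(\bc)=g(\bc)$. Therefore $\bb\in\bigcap_{\bs}P_{\bs}$ iff $\bc^{T}\bb\le h_{P_{\bs}}(\bc)$ for all $\bs$ and $\bc$, iff $\bc^{T}\bb\le g(\bc)$ for all $\bc$, iff $\bb\in\tolsetAx$, which is the assertion.

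I expect the minimax interchange to be the main obstacle, since it is exactly what converts the intractable universal quantifier over $[\bA]$ into the finitely many sign patterns $\bs$, and its validity must be justified carefully from bilinearity and compactness. A secondary technical point is the unboundedness of $P_{\bs}$ in the $\by$ direction, which produces the finiteness side-condition $\RAD[\bA]^{T}\diag(\bs)\bc\ge0$ and must be tracked so that the support-function comparison stays legitimate; the observation that the minimizing pattern $\bs=\sign(\bc)$ automatically satisfies it (because $\RAD[\bA]\ge0$) is what makes the two descriptions coincide.
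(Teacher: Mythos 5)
Your proof is correct, but it takes a genuinely different route from the paper's. The paper disposes of the theorem in two lines: it observes that $\bb\in\tolsetAx$ means the mixed system $\bA\bx=\bb$, $\unum{\bx}\leq\bx\leq\onum{\bx}$ must be solvable for every $\bA\in[\bA]$ (strong solvability), and then simply cites a known characterization of strong solvability of interval systems, of which the displayed sign-vector system is the direct specialization. You instead give a self-contained derivation: you characterize membership in the compact convex zonotopes $\tolsetx$ and in the closed polyhedra $P_{\bs}$ via support functions, evaluate the worst-case value $g(\bc)=\min_{\bA\in[\bA]}\max_{\bx\in[\bx]}\bc^{T}\bA\bx$ by a bilinear minimax interchange followed by an entrywise optimization over $[\bA]$, and then show $\min_{\bs}h_{P_{\bs}}(\bc)=g(\bc)$ with minimizer $\bs=\sign(\bc)$. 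The details check out: Sion's theorem does apply to a bilinear function on the compact convex boxes $[\bA]\times[\bx]$; the substitution $\bx=\bx_{1}-\bx_{2}$, $\by=\bx_{1}+\bx_{2}$ with $\by\geq\cabs{\bx}$ correctly identifies the feasible set with $P_{\bs}$; and you track the finiteness condition $\RAD[\bA]^{T}\diag(\bs)\bc\geq0$ and verify it for the minimizing sign pattern. What the paper's route buys is brevity and reuse of established machinery; what yours buys is a transparent, citation-free argument that also explains \emph{why} the $2^{m}$ sign vectors appear — they enumerate the sign patterns of the separating direction $\bc$ — and is in effect an independent proof of the relevant special case of the cited strong-solvability theorem (classically obtained by Farkas/duality arguments, of which your support-function computation is the geometric counterpart). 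One small presentational point: when some $c_{i}=0$ the vector $\sign(\bc)$ is not in $\{\pm1\}^{m}$, so you should note explicitly that any choice of $s_{i}\in\{\pm1\}$ then achieves equality in $(\diag(\bs)\bc)^{T}\RAD[\bA]\cabs{\bx}\leq\cabs{\bc}^{T}\RAD[\bA]\cabs{\bx}$ and preserves the finiteness condition, so a minimizing $\bs\in\{\pm1\}^{m}$ always exists.
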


\begin{proof}
For a fixed $\bb$, we have $\bb\in\tolsetAx$ if and only if the linear system
\begin{align}\label{sysPfThmTolSetAxChar}
\bb=\bA\bx,\ \ \unum{\bx}\leq\bx\leq\onum{\bx}
\end{align}
is solvable in variable $\bx$ for each $\bA\in[\bA]$. By~\cite{Hla2013b}, this kind of strong solvability is equivalent to solvability of the system from the theorem statement.
\end{proof}

The above test requires to solve $2^m$ linear systems. This can be a large number, however, it can hardly be decreased in general since the problem is intractable.

\begin{theorem}
The test $\bb\in\tolsetAx$ is co-NP-hard.
\end{theorem}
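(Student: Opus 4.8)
The plan is to establish co-NP-hardness by proving that the complementary decision problem, $\bb\notin\tolsetAx$, is NP-hard, via a polynomial reduction from a problem already known to be NP-hard in interval linear algebra. I would route everything through the strong-solvability characterization of the preceding theorem: $\bb\in\tolsetAx$ holds iff, for every $\bA\in[\bA]$, the box-constrained system $\bA\bx=\bb$, $\unum{\bx}\le\bx\le\onum{\bx}$ is solvable. Consequently $\bb\notin\tolsetAx$ iff there is a \emph{single} realization $\bA\in[\bA]$ for which this system is infeasible, an existential statement that sits naturally in NP (guess the offending vertex realization, then check infeasibility of one linear program).

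First I would reformulate infeasibility geometrically. For a fixed $\bA$ the attainable right-hand sides form the zonotope $\bA[\bx]$, so $\bb\in\tolsetAx$ iff $\bb\in\bigcap_{\bA\in[\bA]}\bA[\bx]$. By a separating-hyperplane (Farkas) argument, $\bb\notin\bA[\bx]$ iff there is a direction $\bc$ with $\bc^{T}\bb>\max_{\bx\in[\bx]}\bc^{T}\bA\bx$, where the right-hand side is the support function of the box, a separable expression in the entries of $\bA^{T}\bc$. Choosing (in the constructed instance) the box symmetric about the origin and minimizing this support function over $\bA\in[\bA]$ collapses each coordinate into a mignitude of an interval linear form, giving: $\bb\notin\tolsetAx$ iff there exists $\bc$ with $\bc^{T}\bb>\sum_{j}\crad{[x_j]}\,\cmig{\sum_{i}[A_{ij}]\,c_i}$. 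This objective is positively homogeneous in $\bc$, so its sign can be tested over a unit box; the decisive point is that each mignitude equals $\max(0,\,|\text{center}|-\text{radius})$, an inherently nonconvex term. It is precisely this $\max(0,\cdot)$ structure — one term per equation, mirroring the $2^{m}$ sign vectors $\bs$ of the preceding theorem — that lets the problem encode a hard combinatorial choice of which equations are ``active''.

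Concretely, I would reduce from a canonical $\{\pm1\}$-optimization whose hardness is classical in this setting — for instance the computation of an $\|\cdot\|_{\infty,1}$-type matrix quantity, or an equivalent partition/MAX-CUT instance (cf. Poljak--Rohn and Rohn). The gadget would choose $[\bA]$, $[\bx]$, and $\bb$ so that the optimal direction $\bc$ is forced into $\{\pm1\}^{m}$ and the threshold inequality above holds iff the source instance is a yes-instance, while keeping $[\bA]$ a legitimate entrywise interval matrix of polynomial size.

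The main obstacle I expect is exactly this gadget: aligning the interval centers and radii so the mignitude terms reproduce the source objective term-by-term, forcing the separating direction to be integral, and doing so under the two-sided box constraint $\unum{\bx}\le\bx\le\onum{\bx}$ rather than a sign-free or nonnegativity constraint. A fully rigorous fallback, which I would state in any case, is that the characterization of the preceding theorem is literally an instance of strong solvability of an interval linear system, whose co-NP-hardness is already known; the hardness then transfers immediately, without constructing a new gadget.
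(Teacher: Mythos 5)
There is a genuine gap, and it sits precisely in the step you label as your ``fully rigorous fallback.'' You observe that the test $\bb\in\tolsetAx$ is, via the preceding characterization, \emph{an instance of} strong solvability of an interval linear system, and conclude that ``the hardness then transfers immediately.'' This reduction points the wrong way: showing that your problem is a special case of a co-NP-hard problem only places it \emph{inside} co-NP-hard territory in the sense of an upper bound (it reduces your problem \emph{to} the hard one); it does not establish hardness. A special case of a hard problem can be easy. What is needed, and what the paper does, is the converse reduction: given an arbitrary instance $[\bA]\bx=\bb$ of strong solvability (with $\bb$ real and no box constraint), choose $[\bx]$ sufficiently large that the constraint $\unum{\bx}\leq\bx\leq\onum{\bx}$ can never bind, so that $\bb\in\tolsetAx$ becomes \emph{equivalent} to strong solvability of $[\bA]\bx=\bb$, whose co-NP-hardness is Rohn's classical result. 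The only technical point is that a suitable (polynomial-size) box can be written down, which the paper asserts and which does need the box to absorb all relevant solutions. Your fallback never addresses the box constraint at all, which is the one feature distinguishing the membership test from plain strong solvability.

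Your main line of attack --- the separating-hyperplane reformulation with mignitudes and a reduction from a $\{\pm1\}$-optimization in the spirit of Poljak--Rohn --- is a legitimate and essentially correct picture of \emph{why} the problem is hard (it is, in effect, how the hardness of strong solvability is itself proved), but you explicitly leave the gadget unconstructed and identify it as the main obstacle. As written, then, neither branch of the proposal closes the argument: the direct route is incomplete, and the shortcut has its reduction inverted. The fix is small --- flip the fallback into a reduction \emph{from} strong solvability by taking $[\bx]$ large --- and with that correction you recover exactly the paper's two-line proof.
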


\begin{proof}
Suppose that $[\bx]$ is sufficiently large. Then the test $\bb\in\tolsetAx$ is equivalent to stating that for each $\bA\in[\bA]$ there is $\bx$ such that $\bA\bx=\bb$. That is equivalent with claiming that the interval linear system $[\bA]\bx=\bb$ is strongly solvable. Strong solvability is, however, known to be co-NP-hard~\cite{Roh1998b}. The real valued right-hand side $\bb$ of the linear system can be easily deduced.
\end{proof}

To make the problem tractable, we first construct an inner approximation of $\tolsetAx$ by a zonotope, that is, by a set of the form $\Omega_{\forall\exists}\left(\bA, [\by]\right)$ for some $\bA$ and $[\by]$. In particular, we will choose $\bA:=\cmid{[\bA]}$ and  $[\by]:=\cmid{[\bx]}+[-r,r]\RAD[\bx]$. The task is now to compute as large as possible value $r\in[0,1]$ such that $\Omega_{\forall\exists}\left(\cmid{[\bA]}, [\by]\right)\subseteq \tolsetAx$. Notice that this need not be satisfied even for $r=0$, so we present a sufficient condition below.

\begin{theorem}\label{th:r_polytope}
Suppose that $\cmid{[\bA]}$ has full row rank and put
\begin{equation}\label{eq:tolsetAx_r_polytope}
\begin{aligned}
&r=\min_{i=1,\dots,n}\left(
  \frac{\left(\RAD[\bx]-|\cmid{[\bA]}^\dag|\cdot\RAD[\bA]
    \cdot\cmag{[\bx]}\right)_i}
  {\RAD[\bx]_i} \right),
\end{aligned}
\end{equation}
where $(\cdot)^{\dag}$ denotes the Moore--Penrose pseudoinverse. If $r\geq0$, then $\Omega_{\forall\exists}\left(\cmid{[\bA]}, [\by]\right)\subseteq \tolsetAx$.
\end{theorem}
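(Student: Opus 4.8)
The plan is to prove the inclusion pointwise. I would take an arbitrary $\bb\in\Omega_{\forall\exists}\left(\cmid{[\bA]}, [\by]\right)$, so that by definition of this zonotope there is some $\by\in[\by]$ with $\cmid{[\bA]}\by=\bb$. It then suffices to show $\bb\in\tolsetAx$, i.e.\ that for \emph{every} $\bA\in[\bA]$ there is an $\bx\in[\bx]$ solving $\bA\bx=\bb$. So I would fix an arbitrary $\bA\in[\bA]$ and write $\bA=\cmid{[\bA]}+E$, where the perturbation satisfies the componentwise bound $|E|\le\RAD[\bA]$ since $\bA\in[\bA]$.

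The key construction is an affine self-map of the box $[\bx]$ whose fixed points solve $\bA\bx=\bb$; it is essentially a preconditioned (Krawczyk-type) operator built from the \emph{center} matrix. Using that $\cmid{[\bA]}$ has full row rank, so that it has the right inverse $\cmid{[\bA]}\,\cmid{[\bA]}^\dag=I$, I would define $\Phi(\bx)=\by-\cmid{[\bA]}^\dag E\bx$. Multiplying the fixed-point equation $\bx=\Phi(\bx)$ on the left by $\cmid{[\bA]}$ and applying $\cmid{[\bA]}\,\cmid{[\bA]}^\dag=I$ collapses it to $\cmid{[\bA]}\bx+E\bx=\cmid{[\bA]}\by=\bb$, that is, $\bA\bx=\bb$. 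The crucial point is that only full row rank of the \emph{center} matrix is used; one never needs $\bA$ itself to be full rank, which is exactly what lets the conclusion hold simultaneously for all $\bA\in[\bA]$.

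It then remains to prove $\Phi([\bx])\subseteq[\bx]$, which is where the formula for $r$ enters and which I expect to be the main technical step. Since $\by\in[\by]=\cmid{[\bx]}+[-r,r]\RAD[\bx]$, the offset satisfies $|\by-\cmid{[\bx]}|\le r\,\RAD[\bx]$, and for $\bx\in[\bx]$ one has $|\bx|\le\cmag{[\bx]}$. Applying the triangle inequality componentwise together with these monotone bounds and $|E|\le\RAD[\bA]$ gives
\[
\left|\Phi(\bx)-\cmid{[\bx]}\right|\le r\,\RAD[\bx]+|\cmid{[\bA]}^\dag|\,\RAD[\bA]\,\cmag{[\bx]}.
\]
The definition of $r$ in~\eqref{eq:tolsetAx_r_polytope} is precisely the statement that $|\cmid{[\bA]}^\dag|\,\RAD[\bA]\,\cmag{[\bx]}\le(1-r)\RAD[\bx]$, so the right-hand side is bounded by $r\,\RAD[\bx]+(1-r)\RAD[\bx]=\RAD[\bx]$, which means $\Phi(\bx)\in[\bx]$. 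Since $\Phi$ is continuous (indeed affine) and maps the nonempty compact convex set $[\bx]$ into itself, Brouwer's fixed-point theorem yields $\bx^\ast\in[\bx]$ with $\Phi(\bx^\ast)=\bx^\ast$, hence $\bA\bx^\ast=\bb$. As $\bA\in[\bA]$ was arbitrary, $\bb\in\tolsetAx$.

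The hard part, in my view, is guessing the operator $\Phi$ in the right form and recognizing that the natural solution formula $\cmid{[\bA]}^\dag(\bb-E\bx)$ plus a null-space term simplifies to $\by-\cmid{[\bA]}^\dag E\bx$. I would also be careful to invoke Brouwer rather than attempt to invert $I+\cmid{[\bA]}^\dag E$ by a Neumann series, because the estimate above only controls the weighted quantity $|\cmid{[\bA]}^\dag E|\,\cmag{[\bx]}$ and the spectral radius of $|\cmid{[\bA]}^\dag E|$ may equal $1$ at the extremal value of $r$, so the naive inversion can fail exactly on the boundary where the fixed-point argument still succeeds.
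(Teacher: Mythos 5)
Your proof is correct, and at its computational core it is the same argument as the paper's: decompose $\bA=\cmid{[\bA]}+E$ around the midpoint, push the perturbation through $\cmid{[\bA]}^\dag$ using the full-row-rank identity $\cmid{[\bA]}\cmid{[\bA]}^\dag=I$, and observe that the definition of $r$ is exactly the componentwise inequality $|\cmid{[\bA]}^\dag|\,\RAD[\bA]\,\cmag{[\bx]}\le(1-r)\RAD[\bx]$ needed to stay inside $[\bx]$. The one genuine difference is logical packaging: the paper treats $\bz=\bA'\bx$ as an \emph{independent} perturbation ranging over $[\bz]=[-1,1]\RAD[\bA]\cmag{[\bx]}$ and reduces the problem to a zonotope inclusion $\Omega_{\forall\exists}\left(\cmid{[\bA]},[\by]\right)+[\bz]\subseteq\Omega_{\forall\exists}\left(\cmid{[\bA]},[\bx]\right)$, quietly ignoring that the required $\bz$ depends on the very $\bx$ being constructed; closing that loop needs precisely the self-consistency step you supply. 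Your Krawczyk-type operator $\Phi(\bx)=\by-\cmid{[\bA]}^\dag E\bx$ together with Brouwer's fixed-point theorem makes that step explicit and rigorous, at the modest cost of invoking a topological fixed-point result where the paper's exposition reads as pure interval arithmetic. Your closing remark about why a Neumann-series inversion of $I+\cmid{[\bA]}^\dag E$ could fail at the extremal $r$ is also well taken.
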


\begin{proof}
We want  $\Omega_{\forall\exists}\left(\cmid{[\bA]}, [\by]\right)\subseteq \tolsetx$ for each $\bA\in[\bA]$. That is, for each $\by\in[\by]$ there must be $\bx\in[\bx]$ such that $\cmid{[\bA]}\by=\bA \bx$. Putting $\bA':=\cmid{[\bA]}-\bA$, we can write it as
$$
\cmid{[\bA]}\by + \bA'\bx=\cmid{[\bA]}\bx.
$$
When $\bx\in[\bx]$ and $\bA\in[\bA]$, the value of $\bA'\bx$ ranges in the interval
$$
[\bz]:=[-1,1]\RAD[\bA]\cmag{[\bx]}.
$$
Thus it is sufficient to find $[\by]$ such that
$$
\Omega_{\forall\exists}\left(\cmid{[\bA]}, [\by]\right)+[\bz]
 \subseteq \Omega_{\forall\exists}\left(\cmid{[\bA]}, [\bx]\right).
$$
That is, for each $\by\in[\by]$ and $\bz\in[\bz]$, there must be $\bx\in[\bx]$ such that
$$
\cmid{[\bA]}\by + \bz = \cmid{[\bA]}\bx,
$$
or, equivalently
$$
\bz = \cmid{[\bA]}(\bx-\by).
$$
If $\cmid{[\bA]}$ has full row rank, this equation can be deduced from
$$
\cmid{[\bA]}^\dag \bz = \bx-\by.
$$
By choosing an appropriate $\bx$, the right-hand side can attain any vector in $[-1,1](1-r)\RAD[\bx]$. Since the interval vector $[\bz]$ is symmetric around zero, it is sufficient to compare the radii
$$
|\cmid{[\bA]}^\dag| \RAD[\bz] \leq (1-r)\RAD[\bx],
$$
from which the statement follows.
\end{proof}

This approach to computing an inner polytope approximation of $\tolsetAx$ by a set of the form $\Omega_{\forall\exists}\left(\cmid{[\bA]}, [\by]\right)$ is particularly convenient when the intervals in $[\bA]$ are narrow relatively to those in $[\bx]$. In that case, the value of $r$ is close to~1, and the inner approximation is tight.
Inner approximations for the largest inscribed n-ball and n-cube in $\tolsetAx$ may then be computed from the inner polytope approximation $\Omega_{\forall\exists}\left(\cmid{[\bA]}, [\by]\right)$
using Theorems~\ref{th:r_cube_tolsetx} and~\ref{th:r_ball_tolsetx} respectively.

As an example, given $\bA$ and $[\bx]$ from~\eqref{eq:example_values} and adding uncertainties of $[-0.01,0.01]$ to each matrix element gives the following inner approximation of $\tolsetAx$ shown in Figure~\ref{fig:error_int_plot}:
\begin{itemize}
    \item the associated polytope with $r=0.9289$ (purple);
    \item the largest n-cube centered at the origin with $r=0.4353$ (red);
    \item the largest n-ball centered at the origin with $r=0.5951$ (green);
    \item the largest n-cube with variable center with $b_c = (0.4510,0.2622)$ and $r=0.4353$ (dashed line);
    \item the largest n-ball with variable center with $b_c = (0.2714,0.1577)$ and $r=0.5951$ (dashed line).
\end{itemize}
For comparison the polytope $\tolsetmidAx$ is also shown which necessarily contains the polytope $\tolsetAx$ due to~\eqref{eq:linear_system_b_inter}.

\begin{figure}[h!]
    \centering
    \includegraphics[width=0.4\textwidth,trim={0cm 0.0cm 1cm 0.0cm},clip]{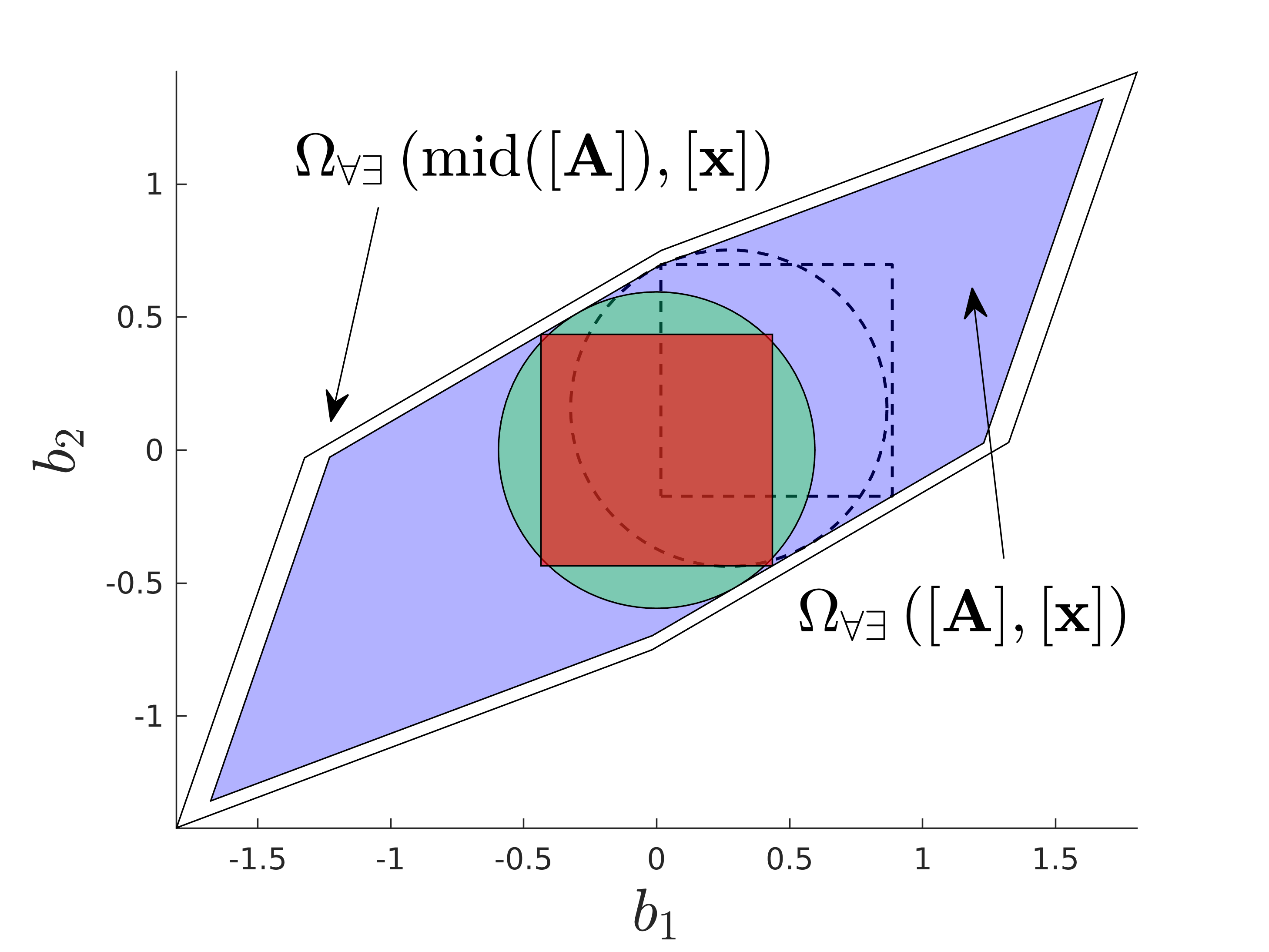}
    \caption{The associated polytope, n-cube, and n-ball inner approximations for $\tolsetAx$. For comparison the polytope $\tolsetmidAx$ is also shown.}
    \label{fig:error_int_plot}
\end{figure}

\subsection{Inner Approximations of $\tolsetAb$}\label{sec:tolsetAb}
According to~\cite{Rohn2006}, a vector $\bx$ belongs to $\tolsetAb$ (\emph{i.e.}, is a tolerance solution)
if an only if
\begin{equation}\label{eq:rohn_tolerable_set}
\begin{aligned}
\left| \cmid{[\bA]} \bx -\cmid{[\bb]}\right| \le -\crad{[\bA]}|\bx| + \crad{[\bb]}
\end{aligned}
\end{equation}
This allows to quickly verify if a desired property exists locally.
Furthermore, the following theorems are proposed for directly computing inner approximations of the local capabilities, which may be used for real-time applications. The problems of computing outer approximations of the tolerable solution set have been addressed in~\cite{PopHla2013}.

\begin{theorem}[Largest inscribed n-cube centered at ${\bx_c}$]
\label{th:n-cube}
If ${\bx_c}$ belongs to $\tolsetAb$ then an n-cube
\begin{equation}\label{eq:n_cube}
\begin{aligned}
~[\bc] = {\bx_c} + r[\be]
\end{aligned}
\end{equation}
with
\begin{equation} \label{eq:tolsetAb_cube_r}
\begin{aligned}
r=\min_{i =1,\ldots,m} \cinf{\frac{\crad{b_i} - \left| \cmid{[b_i]} - [\ba_i] {\bx_c}\right|}{\|[\ba_i]\|_1}}
\end{aligned}
\end{equation}
where $[\ba_i]$ is the $i^{th}$ row of $[\bA]$, is contained in $\tolsetAb$.
\end{theorem}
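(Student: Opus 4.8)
The plan is to verify the inclusion directly through the Rohn characterization \eqref{eq:rohn_tolerable_set}, showing that every point of the cube $[\bc]=\bx_c+r[\be]$ is a tolerance solution. A point $\bx\in[\bc]$ is exactly one with $|x_j-x_{c,j}|\le r$ for every component $j$, so writing $\bx=\bx_c+\boldsymbol\delta$ with $|\delta_j|\le r$, it suffices to show that the $i$-th row of \eqref{eq:rohn_tolerable_set},
\[
|\cmid{[\ba_i]}\bx-\cmid{[b_i]}|\le-\crad{[\ba_i]}|\bx|+\crad{[b_i]},
\]
holds for every such $\bx$ and every $i$ whenever $r$ is chosen as in \eqref{eq:tolsetAb_cube_r}.

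First I would upper-bound the left-hand side by the triangle inequality, $|\cmid{[\ba_i]}\bx-\cmid{[b_i]}|\le|\cmid{[\ba_i]}\bx_c-\cmid{[b_i]}|+r\sum_j|\cmid{[a_{ij}]}|$, using $|\cmid{[\ba_i]}\boldsymbol\delta|\le r\sum_j|\cmid{[a_{ij}]}|$. Next I would lower-bound the right-hand side: since $\crad{[\ba_i]}\ge0$ and $|x_j|\le|x_{c,j}|+r$ componentwise, one gets $-\crad{[\ba_i]}|\bx|\ge-\crad{[\ba_i]}|\bx_c|-r\sum_j\crad{[a_{ij}]}$. Combining the two estimates, the row-$i$ inequality is implied by
\[
r\sum_j\bigl(|\cmid{[a_{ij}]}|+\crad{[a_{ij}]}\bigr)\le\crad{[b_i]}-\crad{[\ba_i]}|\bx_c|-|\cmid{[\ba_i]}\bx_c-\cmid{[b_i]}|.
\]
Recognizing $|\cmid{[a_{ij}]}|+\crad{[a_{ij}]}=\cmag{[a_{ij}]}$, the left coefficient is $\|[\ba_i]\|_1=\sum_j\cmag{[a_{ij}]}$, while the right side equals $\cinf{\crad{[b_i]}-|\cmid{[b_i]}-[\ba_i]\bx_c|}$, because the interval $|\cmid{[b_i]}-[\ba_i]\bx_c|$ has supremum $|\cmid{[\ba_i]}\bx_c-\cmid{[b_i]}|+\crad{[\ba_i]}|\bx_c|$. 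Hence the inequality holds for all $i$ precisely when $r$ does not exceed the minimum over $i$ in \eqref{eq:tolsetAb_cube_r}; taking $r$ equal to that minimum establishes $[\bc]\subseteq\tolsetAb$. Evaluating \eqref{eq:rohn_tolerable_set} at $\bx=\bx_c$ under the hypothesis $\bx_c\in\tolsetAb$ shows each numerator is nonnegative, so $r\ge0$ and the cube is genuine.

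The routine triangle-inequality bounds are straightforward; the step that needs care is the $|\bx|$ term, where the componentwise estimate $|x_j|\le|x_{c,j}|+r$ together with the nonnegativity of $\crad{[\ba_i]}$ produces the contribution $r\sum_j\crad{[a_{ij}]}$ that merges with $r\sum_j|\cmid{[a_{ij}]}|$ to form the denominator $\|[\ba_i]\|_1$. The second point to check carefully is that the interval-arithmetic expression in \eqref{eq:tolsetAb_cube_r}, once the $\cinf$ is resolved, coincides with this worst-case scalar bound, which is where the magnitude of $|\cmid{[b_i]}-[\ba_i]\bx_c|$ enters. Structurally the argument mirrors the fixed-center Chebyshev computation of Theorem~\ref{th:r_cube_tolsetx}, the only new ingredient being the state-dependent term $\crad{[\ba_i]}|\bx|$.
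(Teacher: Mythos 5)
The paper itself offers no proof of this theorem---it only cites Shary---so your argument has to stand on its own, and as a chain of inequalities it is internally correct: expanding Rohn's characterization \eqref{eq:rohn_tolerable_set} row by row around $\bx_c$, bounding $|\cmid{[\ba_i]}\boldsymbol{\delta}|$ by $r\sum_j|\cmid{[a_{ij}]}|$ and $-\crad{[\ba_i]}|\bx|$ by $-\crad{[\ba_i]}|\bx_c|-r\sum_j\crad{[a_{ij}]}$, recombining via $|\cmid{[a_{ij}]}|+\crad{[a_{ij}]}=\cmag{[a_{ij}]}$, and getting $r\geq0$ from $\bx_c\in\tolsetAb$ are all sound. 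But what this establishes is containment of the cube whose radius is the \emph{interval-arithmetic} evaluation of \eqref{eq:tolsetAb_cube_r}, namely the infimum of the numerator divided by the magnitude of the denominator. The remark immediately following the theorem makes clear that the intended $r$ is the \emph{exact} evaluation, $r=\min_i\inf_{\ba_i\in[\ba_i]}\bigl(\crad{[b_i]}-|\cmid{[b_i]}-\ba_i\bx_c|\bigr)/\|\ba_i\|_1$ as a range infimum over $[\ba_i]$, and that interval arithmetic ``provides a lower bound on the largest cube only.'' Your decomposition silently replaces the infimum of the quotient by the quotient of the numerator's infimum and the denominator's supremum; these differ whenever the row $\ba_i$ minimizing the numerator is not the one maximizing the denominator. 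Concretely, with $n=m=1$, $[a]=[1,2]$, $[b]=[0,4]$, $x_c=1$, the exact value is $\inf_{a\in[1,2]}(2-|2-a|)/|a|=1$ and $[0,2]$ really is the largest inscribed cube, whereas your bound yields $1/2$. So for the theorem's $r$ your inequalities do not close, and the ``largest'' property asserted in the theorem's name is likewise out of reach of this argument.

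The repair is in fact simpler than your route and bypasses \eqref{eq:rohn_tolerable_set} entirely. Note that $\bx\in\tolsetAb$ iff $|\ba_i\bx-\cmid{[b_i]}|\le\crad{[b_i]}$ for every $i$ and every $\ba_i\in[\ba_i]$. For a \emph{fixed} $\ba_i$ one has exactly $\sup_{\|\boldsymbol{\delta}\|_\infty\le r}|\ba_i(\bx_c+\boldsymbol{\delta})-\cmid{[b_i]}|=|\ba_i\bx_c-\cmid{[b_i]}|+r\|\ba_i\|_1$, the supremum being attained by aligning the signs of $\boldsymbol{\delta}$ with those of $\ba_i$. Hence the cube $\bx_c+r[\be]$ satisfies the $i$-th pair of constraints for all $\ba_i\in[\ba_i]$ if and only if $r$ does not exceed the range infimum above, and taking the minimum over $i$ gives both the containment for the exactly evaluated $r$ of \eqref{eq:tolsetAb_cube_r} and its maximality among cubes centered at $\bx_c$ in one stroke. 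Your Rohn-based computation remains valuable as the proof of the cheaper, rigorously computable lower bound that the paper alludes to, but it should not be presented as a proof of the theorem as stated.
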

The proof for Theorem~\ref{th:n-cube} is given in~\cite{SHARY199553}. Notice that it provides the largest n-cube centered in ${\bx_c}$. Notice also that it requires that the right-hand side of (\ref{eq:tolsetAb_cube_r}) is evaluated exactly.
 If we evaluate it by interval arithmetic, it provides a lower bound on the largest cube only.
In~\cite{Pop2013a}, the problems of computing inner approximations of the tolerable solution set are also considered.

On the other hand, the test $[\bc]\subseteq\tolsetAb$ can be simply performed by interval evaluation $[\bA][\bc]\subseteq[\bb]$. Therefore, we can extend the above inner n-cube $[\bc]$ by $\varepsilon$-inflation approach to $[\bc]:=[\bc]+\varepsilon[\be]$ and repeat it while the test does not break.

To overcome the above drawbacks, we propose a more general method. It finds the optimal center point of the n-cube and it computes exactly its radius, too.
\begin{theorem}[Largest inscribed n-cube with variable center]
\label{th:n-cube_variable}
The largest inscribed n-cube to $\tolsetAb$ reads $[\bc] = \bx_c + r[\be]$, where $\bx_c ,r$ is an optimal solution to
\begin{subequations}\label{eq:lpMaxInnrCubeVar}
\begin{align}
\max \ \              &r\\
\text{subject to}\ \  &\sum_j y^i_j\leq\onum{b}_i,\quad\forall i,\\
 & y^i_j\geq \onum{a}_{ij}({x_c}_j+r),\\
 & y^i_j\geq \onum{a}_{ij}({x_c}_j-r),\\
 & y^i_j\geq \unum{a}_{ij}({x_c}_j+r),\\
 & y^i_j\geq \unum{a}_{ij}({x_c}_j-r),\\
 &\sum_j z^i_j\geq\unum{b}_i,\quad\forall i,\\
 & z^i_j\leq \onum{a}_{ij}({x_c}_j+r),\\
 & z^i_j\leq \onum{a}_{ij}({x_c}_j-r),\\
 & z^i_j\leq \unum{a}_{ij}({x_c}_j+r),\\
 & z^i_j\leq \unum{a}_{ij}({x_c}_j-r).
\end{align}
\end{subequations}
\end{theorem}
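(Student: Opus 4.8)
The plan is to rewrite the containment $[\bc]\subseteq\tolsetAb$ as a finite system of linear inequalities and then observe that maximizing $r$ over that system is exactly the stated linear program. First I would use the pointwise characterization of membership. For a fixed $\bx$, the defining condition $(\forall\bA\in[\bA])(\exists\bb\in[\bb])(\bA\bx=\bb)$ splits over the rows, and for row $i$ it is equivalent to the interval inclusion $[\ba_i]\bx\subseteq[b_i]$, where $[\ba_i]\bx=\sum_j[a_{ij}]x_j$ is the interval-arithmetic product (this product is exact because the entries $a_{ij}$ vary independently and each occurs once). Spelling out the endpoints of $[\ba_i]\bx$, this is the componentwise form of Rohn's condition~\eqref{eq:rohn_tolerable_set}: $\sum_j\max(\unum{a}_{ij}x_j,\onum{a}_{ij}x_j)\le\onum{b}_i$ and $\sum_j\min(\unum{a}_{ij}x_j,\onum{a}_{ij}x_j)\ge\unum{b}_i$ for all $i$.

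Next I would reduce the quantifier ``for all $\bx\in[\bc]$'' to finitely many corner evaluations. Since $[\bc]=\bx_c+r[\be]$ is the box $\prod_j[{x_c}_j-r,{x_c}_j+r]$ and each of the two sums above is separable in the coordinates $x_j$, the worst case over $[\bc]$ decouples coordinatewise. The map $x_j\mapsto\max(\unum{a}_{ij}x_j,\onum{a}_{ij}x_j)$ is convex, so it attains its maximum over $[{x_c}_j-r,{x_c}_j+r]$ at one of the endpoints ${x_c}_j\pm r$; dually, $x_j\mapsto\min(\unum{a}_{ij}x_j,\onum{a}_{ij}x_j)$ is concave and attains its minimum at an endpoint. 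Consequently $[\bc]\subseteq\tolsetAb$ holds if and only if, for every $i$, the sum over $j$ of the \emph{maximum} of the four corner values $\unum{a}_{ij}({x_c}_j\pm r)$, $\onum{a}_{ij}({x_c}_j\pm r)$ is at most $\onum{b}_i$, and the sum over $j$ of the \emph{minimum} of these four values is at least $\unum{b}_i$.

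The last step is to linearize these max/min sums. Each corner value $\onum{a}_{ij}({x_c}_j\pm r)$ is affine in the decision variables $(\bx_c,r)$, so I would introduce auxiliary variables: $y^i_j$ constrained to dominate the four corner values (the four $y$-inequalities) together with $\sum_j y^i_j\le\onum{b}_i$, and $z^i_j$ constrained to be dominated by the four corner values together with $\sum_j z^i_j\ge\unum{b}_i$. Since any feasible $y^i_j$ is at least, and any feasible $z^i_j$ at most, the corresponding corner maximum and minimum, the summation constraints are implied by the reduced inequalities; conversely, taking $y^i_j$ and $z^i_j$ equal to those extrema shows the reduced inequalities imply feasibility. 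Hence the values of $r$ admissible for the program coincide with those for which $[\bc]\subseteq\tolsetAb$, and maximizing $r$ returns the largest inscribed n-cube, as claimed. I expect the second step to be the main obstacle: one must argue carefully that the separable convex/concave structure lets the universal quantifier over the continuum $\bx\in[\bc]$ be replaced by evaluation at the two endpoints ${x_c}_j\pm r$ in each coordinate; once this is in place, the passage to the linear program is routine.
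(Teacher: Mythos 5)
Your proposal is correct and follows essentially the same route as the paper's (much terser) proof: both reduce $[\bc]\subseteq\tolsetAb$ to the row-wise conditions $\csup{[\ba_i](\bx_c+r[\be])}\le\onum{b}_i$ and $\cinf{[\ba_i](\bx_c+r[\be])}\ge\unum{b}_i$, observe that the supremum and infimum of each product $[a_{ij}]({x_c}_j+r[-1,1])$ are attained at the four endpoint products, and linearize these via the auxiliary variables $y^i_j$ and $z^i_j$. Your convexity/concavity argument for reducing the quantifier over the box to corner evaluations is just an explicit justification of the interval-multiplication formula the paper invokes implicitly, so the two proofs coincide in substance.
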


\begin{proof}
Our problem reads
\begin{align*}
\max\ \  &r\\
\text{subject to}\ \ &  \csup{[\ba_i](\bx_c + r[\be])} \leq\onum{b}_i \quad\forall i,\\
&  \cinf{[\ba_i](\bx_c + r[\be])} \geq\unum{b}_i\quad\forall i.
\end{align*}
Denoting by $y^i_j$ and $z^i_j$ the supremum and the infimum of the product $[a_{ij}]({x_c}_j + r[-1,1])$, respectively, the resulting optimization model follows.
\end{proof}

Notice that \eqref{eq:lpMaxInnrCubeVar} is a linear programming problem, so it can be solved efficiently. The number of variables is $1+n+2mn$, so the problem is still polynomially solvable.

\begin{theorem}[Largest inscribed n-ball centered at ${\bx_c}$]
\label{th:n-ball}
If ${\bx_c}$ belongs to $\tolsetAb$, then the n-ball
\begin{equation}\label{eq:n_ball}
\begin{aligned}
\mathcal{B} = \{ \bx ~ | ~ \|\bx - {\bx_c}\| \le r\}
\end{aligned}
\end{equation}
with
\begin{equation} \label{eq:tolsetAb_ball_r}
\begin{aligned}
r&=\min_{i =1,\ldots,m}\\
&\MIN\left(\cinf{\frac{-[\ba_i] {\bx_c} + \onum{b}_i}{\|[\ba_i]\|_2}},\,\cinf{\frac{[\ba_i] {\bx_c} - \unum{b}_i}{\|[\ba_i]\|_2}}\right)
\end{aligned}
\end{equation}
 where ${\|\cdot\|_2}$ is the 2-norm, is contained in $\tolsetAb$.
\end{theorem}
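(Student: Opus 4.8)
The plan is to reduce membership in $\tolsetAb$ to a family of elementary slab-containment conditions, one per row of $[\bA]$, and then to dominate each of these uniformly over the box $[\bA]$ by a single Cauchy--Schwarz estimate. First I would recall, as already used in Section~\ref{sec:tolsetAb}, that a real vector $\bx$ lies in $\tolsetAb$ if and only if $[\bA]\bx\subseteq[\bb]$; since the product of an interval matrix with a real vector is evaluated exactly row by row, this is equivalent to requiring, for every row index $i$ and every $\ba_i\in[\ba_i]$, the pair of inequalities $\unum{b}_i\le\ba_i\bx\le\onum{b}_i$. Because the rows of $[\bA]$ range independently and the $i$-th constraint involves only $\ba_i$, the n-ball $\mathcal{B}$ lies in $\tolsetAb$ exactly when each such slab contains $\mathcal{B}$.

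Next I would estimate $\ba_i\bx$ for an arbitrary $\bx\in\mathcal{B}$ and an arbitrary $\ba_i\in[\ba_i]$. Writing $\ba_i\bx=\ba_i\bx_c+\ba_i(\bx-\bx_c)$ and applying the Cauchy--Schwarz inequality together with the ball constraint $\|\bx-\bx_c\|_2\le r$ gives $|\ba_i(\bx-\bx_c)|\le\|\ba_i\|_2\,\|\bx-\bx_c\|_2\le N_i\,r$, where $N_i:=\csup{\|[\ba_i]\|_2}=\sqrt{\sum_j\cmag{[a_{ij}]}^2}$ is the largest Euclidean norm attained by a row in $[\ba_i]$. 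Hence $\ba_i\bx$ lies in the real interval $[\ba_i\bx_c-N_i r,\ \ba_i\bx_c+N_i r]$, uniformly in $\ba_i$ and $\bx$.

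It then remains to force this interval into $[\unum{b}_i,\onum{b}_i]$. The upper inequality $\ba_i\bx_c+N_i r\le\onum{b}_i$, required for every $\ba_i\in[\ba_i]$, is tightest at the largest value of $\ba_i\bx_c$ and therefore amounts to $r\le(\onum{b}_i-\csup{[\ba_i]\bx_c})/N_i$; symmetrically the lower inequality yields $r\le(\cinf{[\ba_i]\bx_c}-\unum{b}_i)/N_i$. Since $\bx_c\in\tolsetAb$, both numerators are nonnegative, so these two ratios are precisely the interval-arithmetic infima appearing in~\eqref{eq:tolsetAb_ball_r}, the shared denominator being the norm $N_i$. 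Taking the minimum over the two ratios and over $i$ reproduces the stated $r$, and any additional shrinkage caused by outward rounding in the interval evaluation only decreases $r$, which preserves the inclusion. Thus every $\bx\in\mathcal{B}$ satisfies $\unum{b}_i\le\ba_i\bx\le\onum{b}_i$ for all $i$ and all $\ba_i\in[\ba_i]$, i.e.\ $\mathcal{B}\subseteq\tolsetAb$.

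The step I expect to be the crux is the uniform bound via $N_i=\csup{\|[\ba_i]\|_2}$: because the numerator $\onum{b}_i-\ba_i\bx_c$ and the norm $\|\ba_i\|_2$ both depend on the same $\ba_i$, replacing $\|\ba_i\|_2$ by its box-maximum $N_i$ decouples them and is what makes the row-wise infimum available in closed form. This decoupling is conservative---the true largest inscribed ball may be slightly bigger---but it is exactly the conservatism that guarantees the computed ball is genuinely inscribed, and it is the reason the $2$-norm version keeps the same $\min$-of-two-ratios shape as the $1$-norm n-cube formula~\eqref{eq:tolsetAb_cube_r}.
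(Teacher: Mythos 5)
Your overall strategy is the paper's: reduce $\mathcal{B}\subseteq\tolsetAb$ to row-wise slab containment $\unum{b}_i\le\ba_i\bx\le\onum{b}_i$ for all $\ba_i\in[\ba_i]$, and use the fact that $\sup_{\bx\in\mathcal{B}}\ba_i\bx=\ba_i\bx_c+r\|\ba_i\|_2$. But there is a genuine gap in the last step: you replace $\|\ba_i\|_2$ by its box-maximum $N_i=\csup{\|[\ba_i]\|_2}$ and thereby read \eqref{eq:tolsetAb_ball_r} as the \emph{decoupled} (naive interval-arithmetic) quotient $\bigl(\onum{b}_i-\csup{[\ba_i]\bx_c}\bigr)/N_i$. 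That is not the quantity in the theorem. The paper intends $\cinf{\cdot}$ as the exact infimum of the \emph{coupled} ratio $\bigl(\onum{b}_i-\ba_i\bx_c\bigr)/\|\ba_i\|_2$ over $\ba_i\in[\ba_i]$, with the same $\ba_i$ in numerator and denominator; this is made explicit in the remark immediately after the theorem (``it requires that the right-hand side of \eqref{eq:tolsetAb_ball_r} is evaluated exactly; if we evaluate it by interval arithmetic, it provides a lower bound on the largest ball only''), and it is the entire point of the quasiconcavity lemma and Corollary~\ref{corTolsetAbVert}, which exist to compute that coupled infimum at a vertex of $[\ba_i]$. The two quantities genuinely differ: for a scalar example $[a]=[1,2]$, $x_c=-1$, $\onum{b}=1$, the coupled infimum is $\inf_{a\in[1,2]}(1+a)/a=3/2$ while your decoupled bound is $2/2=1$. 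So your argument establishes containment of a ball that is in general strictly smaller than the one asserted, and your claim that the theorem ``provides the largest n-ball centered at $\bx_c$'' would be false under your reading.

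The fix is to not decouple at all — Cauchy--Schwarz already ties the norm to the same $\ba_i$ as the numerator. For a fixed $\ba_i$, the ball of radius $r$ lies in the half-space $\{\bx:\ba_i\bx\le\onum{b}_i\}$ if and only if $r\le(\onum{b}_i-\ba_i\bx_c)/\|\ba_i\|_2$; requiring this for every $\ba_i\in[\ba_i]$ gives exactly $r\le\cinf{(-[\ba_i]\bx_c+\onum{b}_i)/\|[\ba_i]\|_2}$ in the exact (range) sense, and symmetrically for the lower bound. Taking the minimum over rows and over the two sides reproduces \eqref{eq:tolsetAb_ball_r} as both a sufficient and a necessary condition, which is the paper's proof (largest ball in $\tolsetb$ for each fixed $\bA$, then minimized over $\bA\in[\bA]$ since $\tolsetAb=\bigcap_{\bA}\tolsetb$). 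Your decoupled construction is not wrong as a containment statement, but it is essentially the relaxation the paper uses later for the variable-center problem in \eqref{eq:tolsetAb_ball_r2}, which the paper itself describes only as ``a lower bound to the maximum inscribed n-ball.''
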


\begin{proof}
According to \cite{Rohn2006}, we know that any $\bx \in \tolsetAb$ must satisfy
\begin{equation}
\begin{aligned}
\{\bA \bx ~|~ \bA\in[\bA]\} \subseteq [\underline{\bb},\overline{\bb}]
\end{aligned}
\end{equation}
Therefore, $\tolsetAb$ is described by the convex polytope given by the following inequalities
\begin{equation}\label{eq:tolerance_solution_set}
    \begin{aligned}
        &\csup{[\ba_i]\bx} \le \onum{b}_i,\\
        &\csup{-[\ba_i]\bx} \le -\unum{b}_i,\\
        &\bx \in \mathbb{R}^n\\
        & i=1,\ldots,m\\
    \end{aligned}
\end{equation}
where $\SUP$ ensures the inequalities are well-defined.
In other words,~\eqref{eq:tolerance_solution_set} describes the intersection of a set of half-spaces.

For a given $\bA\in[\bA]$, the largest n-ball centered at ${\bx_c}$ and inscribed to $\tolsetb$ is computed by the linear program
\begin{equation*}
    \begin{aligned}
        &\max              & &r\\
        &\text{subject to} & &{\ba_i {\bx_c} + r \|\ba_i\|_2} \le \onum{b}_i,\quad\forall i,\\
        & & &{-\ba_i {\bx_c} + r \|\ba_i\|_2} \le -\unum{b}_i,\quad\forall i,\\
        & & & r \ge 0,\\
    \end{aligned}
\end{equation*}
from which
\begin{equation*}
r=\min_{i =1,\ldots,m}
\MIN\left({\frac{-\ba_i {\bx_c} + \onum{b}_i}{\|\ba_i\|_2}},\,{\frac{\ba_i {\bx_c} - \unum{b}_i}{\|\ba_i\|_2}}\right).
\end{equation*}
Now, for the radius of the largest n-ball centered at ${\bx_c}$ inscribed to $\tolsetAb$ we just minimize subject to $\ba_i\in[\ba_i]$.

\end{proof}

The theorem provides the largest n-ball which is centered in a given point~${\bx_c}$. Notice also that it requires that the right-hand side of (\ref{eq:tolsetAb_ball_r}) is evaluated exactly.
If we evaluate it by interval arithmetic, it provides a lower bound on the largest ball only.

\begin{lemma}
The function $f(a)=\frac{a^Tx+b}{\|a\|}$ is quasiconcave on the domain $a\not=0$, $a^Tx+b\geq0$.
\end{lemma}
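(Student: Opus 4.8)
The plan is to verify quasiconcavity directly from its defining inequality, working around the fact that the denominator $\|a\|$ is not linear. Recall that $f$ is quasiconcave on the stated domain $D=\{a:\ a\neq0,\ a^Tx+b\geq0\}$ provided that $f(a_\lambda)\geq\min\{f(a_1),f(a_2)\}$ whenever $a_1,a_2\in D$, $\lambda\in[0,1]$, and the point $a_\lambda:=\lambda a_1+(1-\lambda)a_2$ again lies in $D$. First I would observe that $f\geq0$ everywhere on $D$, since the numerator is nonnegative and the denominator is strictly positive there; hence the target level $t:=\min\{f(a_1),f(a_2)\}$ is always nonnegative, and this sign condition is exactly what the argument will need.

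Next I would rewrite the hypotheses in a form that separates the affine numerator from the norm. Because $\|a_1\|,\|a_2\|>0$, the inequalities $f(a_i)\geq t$ are equivalent to $a_i^Tx+b\geq t\|a_i\|$ for $i=1,2$. The numerator $a\mapsto a^Tx+b$ is affine, so along the segment it reproduces the convex combination exactly:
\begin{equation*}
a_\lambda^Tx+b=\lambda(a_1^Tx+b)+(1-\lambda)(a_2^Tx+b)\geq \lambda t\|a_1\|+(1-\lambda)t\|a_2\|.
\end{equation*}
For the denominator I would invoke the triangle inequality together with $t\geq0$ to obtain $\lambda t\|a_1\|+(1-\lambda)t\|a_2\|=t(\lambda\|a_1\|+(1-\lambda)\|a_2\|)\geq t\|a_\lambda\|$. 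Chaining these gives $a_\lambda^Tx+b\geq t\|a_\lambda\|$, i.e. $f(a_\lambda)\geq t$, which is the desired inequality.

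One point I would be careful about is the role of the domain restriction $a\neq0$. Since the numerator is affine and nonnegative at both endpoints, it stays nonnegative on the whole segment, so the constraint $a^Tx+b\geq0$ is automatically preserved; the only way the segment can leave $D$ is by passing through the origin, and the quasiconcavity inequality is only required at those $\lambda$ for which $a_\lambda\in D$, so such points need not be treated. The main obstacle is precisely the nonlinearity of $\|a\|$: the proof hinges on the sign $t\geq0$, which lets the triangle inequality push in the favourable direction, whereas a negative $t$ would reverse the estimate. Equivalently, I could phrase the whole argument through superlevel sets, noting that for $t\geq0$ the set $\{a:\ a^Tx+b\geq t\|a\|\}$ is the sublevel set $\{g\leq0\}$ of the convex function $g(a)=t\|a\|-a^Tx-b$ (convex because $t\|a\|$ is convex for $t\geq0$ and the remaining terms are affine), and is therefore convex.
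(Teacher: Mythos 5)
Your proof is correct and is essentially the paper's argument: the paper simply observes that for each $\alpha\geq0$ the superlevel set $\{a\neq0:\ a^Tx+b\geq\alpha\|a\|\}$ is convex, which is exactly the reformulation you give in your closing remark, and your main segment-by-segment verification via the triangle inequality and the sign condition $t\geq0$ is just that same observation unrolled.
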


\begin{proof}
Let $\alpha\geq0$ be fixed. The set of $a\not=0$ satisfying $f(a)\geq \alpha$ is characterized by
$$
a^Tx+b \geq \alpha \|a\|,
$$
which describes a convex set.
\end{proof}

The elements of the right-hand side of (\ref{eq:tolsetAb_ball_r}) fulfill the assumptions of the lemma. Since quasiconcave functions attain the minima on the border of the domain, we have the corollary below; the second item follows from the fact that (\ref{eq:tolsetAb_ball_r}) is monotone in some cases.

\begin{corollary}\label{corTolsetAbVert}
For every $i,j$ we have:
\begin{enumerate}[(1)]
\item
The right-hand side of (\ref{eq:tolsetAb_ball_r}) is attained for $a_{ij}\in\{\unum{a}_{ij},\onum{a}_{ij}\}$.
\item
If ${x_c}_j\geq0$ and ${a}_{ij}\geq0$, then we can fix $a_{ij}:=\onum{a}_{ij}$ in the first infimum of~(\ref{eq:tolsetAb_ball_r}).
\item
If ${x_c}_j\leq0$ and ${a}_{ij}\leq0$, then we can fix $a_{ij}:=\unum{a}_{ij}$ in the first infimum of~(\ref{eq:tolsetAb_ball_r}).
\item
If ${x_c}_j\leq0$ and ${a}_{ij}\geq0$, then we can fix $a_{ij}:=\onum{a}_{ij}$ in the second infimum of~(\ref{eq:tolsetAb_ball_r}).
\item
If ${x_c}_j\geq0$ and ${a}_{ij}\leq0$, then we can fix $a_{ij}:=\unum{a}_{ij}$ in the second infimum of~(\ref{eq:tolsetAb_ball_r}).

\end{enumerate}
\end{corollary}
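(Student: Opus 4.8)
The plan is to recognize both ratios appearing in~(\ref{eq:tolsetAb_ball_r}) as instances of the function $f(a)=\frac{a^Tx+b}{\|a\|_2}$ treated in the preceding lemma: for the first infimum I take $x:=-\bx_c$ and $b:=\onum{b}_i$, and for the second $x:=\bx_c$ and $b:=-\unum{b}_i$. The first step is to check that the entire box $[\ba_i]$ lies in the domain on which the lemma applies. Since $\bx_c\in\tolsetAb$, the half-space description~(\ref{eq:tolerance_solution_set}) gives $\csup{[\ba_i]\bx_c}\le\onum{b}_i$ and $\cinf{[\ba_i]\bx_c}\ge\unum{b}_i$, so both numerators $-a^T\bx_c+\onum{b}_i$ and $a^T\bx_c-\unum{b}_i$ are nonnegative for every $a\in[\ba_i]$. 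Hence $a^Tx+b\ge0$ throughout the box and the lemma guarantees quasiconcavity there.

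For part~(1) I would use the standard fact that a quasiconcave function attains its minimum over a polytope at an extreme point: if a minimizer were a proper convex combination of two points of $[\ba_i]$, quasiconcavity would force one of them to attain the same minimal value, and iterating drives the minimizer to a vertex. The vertices of the box $[\ba_i]$ are exactly the points whose every coordinate satisfies $a_{ij}\in\{\unum{a}_{ij},\onum{a}_{ij}\}$, which is the assertion.

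Parts~(2)--(5) are the ``monotone in some cases'' refinements, which I would prove by a single-coordinate sign analysis. Freezing all coordinates but $a_j$ and differentiating gives $\frac{\partial}{\partial a_j}\frac{a^Tx+b}{\|a\|_2}=\frac{x_j\|a\|_2^2-(a^Tx+b)a_j}{\|a\|_2^3}$, so the sign of the derivative is the sign of $x_j\|a\|_2^2-(a^Tx+b)a_j$. For the first infimum ($x=-\bx_c$, numerator nonnegative), the hypotheses ${x_c}_j\ge0$ and $a_{ij}\ge0$ make both terms nonpositive, so $f$ is nonincreasing in $a_j$ and its infimum is taken at $a_j=\onum{a}_{ij}$ (part~(2)); the hypotheses ${x_c}_j\le0$ and $a_{ij}\le0$ reverse both signs and yield part~(3). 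The identical computation for the second infimum ($x=\bx_c$) delivers parts~(4) and~(5). Since the derivative's sign is fixed by the sign hypotheses and by the nonnegativity of the numerator, it holds for every choice of the remaining coordinates, so the coordinate $a_{ij}$ may indeed be frozen at the stated endpoint.

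The main obstacle I anticipate is not the monotonicity computation but the domain bookkeeping around the denominator $\|a\|_2$: one must confirm that the numerators remain nonnegative over the entire box (so that both the lemma and the unambiguous derivative signs apply) and rule out the degenerate case $a=0$ by assuming $0\notin[\ba_i]$, so that $f$ and its partial derivatives are well defined at the relevant vertices.
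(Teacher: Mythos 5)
Your proposal is correct and follows essentially the same route as the paper: apply the quasiconcavity lemma to the two ratios in~(\ref{eq:tolsetAb_ball_r}) to get attainment at a vertex of the box for item~(1), and use coordinate-wise monotonicity for items~(2)--(5). The paper dispatches this in a single sentence, so your explicit domain check (nonnegativity of the numerators via~(\ref{eq:tolerance_solution_set}) and the caveat $0\notin[\ba_i]$) and the sign analysis of $\frac{\partial}{\partial a_j}\frac{a^Tx+b}{\|a\|_2}$ actually supply more detail than the original.
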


Therefore, we may quickly check if ${\bx_c}$ belongs to $\tolsetAb$ by using~\eqref{eq:rohn_tolerable_set}. If so, the
corresponding inscribed n-cube and n-ball are computed directly from~\eqref{eq:tolsetAb_cube_r} and~\eqref{eq:tolsetAb_ball_r} respectively.
The proposed theorems are also valid for the $\tolsetb$ sub-problems.

\begin{theorem}[Inscribed n-ball with variable center]
Denote $s_i\coloneqq \csup{ \|[\ba_i]\|_2} $ $\forall i$.
Let $r,{\bx_c},\by$ be an optimal solution of the linear programming problem
\begin{equation} \label{eq:tolsetAb_ball_r2}
    \begin{aligned}
        &\max              & &r\\
        &\text{subject to} & &\cmid{\ba_i} \bx_c+\crad{\ba_i} \by + rs_i\le \onum{b}_i,\quad\forall i,\\
        & & &-\cmid{\ba_i} \bx_c+\crad{\ba_i} \by + r s_i \le -\unum{b}_i,\quad\forall i,\\
        & & & r \ge 0,\  \by\ge \bx_c,\ \by\ge -\bx_c.
    \end{aligned}
\end{equation}
Then the ball \eqref{eq:n_ball} is contained in $\tolsetAb$.
\end{theorem}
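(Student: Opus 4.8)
The plan is to show that every point of the ball satisfies the half-space description of $\tolsetAb$ established in the proof of Theorem~\ref{th:n-ball}. Recall from \eqref{eq:tolerance_solution_set} that $\bx\in\tolsetAb$ if and only if, for every $i$, both $\csup{[\ba_i]\bx}\le\onum{b}_i$ and $\cinf{[\ba_i]\bx}\ge\unum{b}_i$; writing the interval dot product through its midpoint and radius, this is equivalent to $\cmid{\ba_i}\bx+\crad{\ba_i}\cabs{\bx}\le\onum{b}_i$ and $\cmid{\ba_i}\bx-\crad{\ba_i}\cabs{\bx}\ge\unum{b}_i$ for all $i$. So it suffices to verify these two families of inequalities at every $\bx$ with $\|\bx-\bx_c\|_2\le r$.

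First I would linearise the dependence on the radius. Fix $i$, take any $\bx$ with $\|\bx-\bx_c\|_2\le r$, and any $\ba_i\in[\ba_i]$. Splitting $\ba_i\bx=\ba_i\bx_c+\ba_i(\bx-\bx_c)$ and applying Cauchy--Schwarz gives $\ba_i(\bx-\bx_c)\le\|\ba_i\|_2\,\|\bx-\bx_c\|_2\le s_ir$, since $\|\ba_i\|_2\le s_i$ for every $\ba_i\in[\ba_i]$ by the definition $s_i=\csup{\|[\ba_i]\|_2}$. Because $s_ir$ is constant in $\ba_i$, taking the supremum over $\ba_i\in[\ba_i]$ yields $\csup{[\ba_i]\bx}\le\csup{[\ba_i]\bx_c}+s_ir$, and the symmetric argument for the infimum yields $\cinf{[\ba_i]\bx}\ge\cinf{[\ba_i]\bx_c}-s_ir$. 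In midpoint/radius form these read $\csup{[\ba_i]\bx}\le\cmid{\ba_i}\bx_c+\crad{\ba_i}\cabs{\bx_c}+s_ir$ and $\cinf{[\ba_i]\bx}\ge\cmid{\ba_i}\bx_c-\crad{\ba_i}\cabs{\bx_c}-s_ir$.

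The remaining step is to absorb the nonlinear term $\crad{\ba_i}\cabs{\bx_c}$ into the auxiliary variable $\by$, which is precisely the role of the constraints $\by\ge\bx_c$ and $\by\ge-\bx_c$: together they force $\by\ge\cabs{\bx_c}$ componentwise, and since $\crad{\ba_i}\ge0$ this gives $\crad{\ba_i}\cabs{\bx_c}\le\crad{\ba_i}\by$. Substituting into the two displays above and invoking the feasibility constraints $\cmid{\ba_i}\bx_c+\crad{\ba_i}\by+rs_i\le\onum{b}_i$ and $-\cmid{\ba_i}\bx_c+\crad{\ba_i}\by+rs_i\le-\unum{b}_i$ produces $\csup{[\ba_i]\bx}\le\onum{b}_i$ and $\cinf{[\ba_i]\bx}\ge\unum{b}_i$ for every $i$, which is exactly membership in the half-space description. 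As this holds for all $\bx$ in the ball, the ball lies in $\tolsetAb$. Note the argument only uses feasibility of $(r,\bx_c,\by)$, so optimality is not required for containment (it only serves to make $r$ as large as the formulation permits).

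The step deserving the most care is the linearisation of $\cabs{\bx_c}$: an LP cannot contain the term $\crad{\ba_i}\cabs{\bx_c}$ directly, so the construction hinges on replacing it by $\crad{\ba_i}\by$ and confirming that the relaxation $\by\ge\cabs{\bx_c}$, propagated through the nonnegative radii, loosens the constraints in the \emph{safe} direction. Together with the Cauchy--Schwarz bound $\|\ba_i\|_2\le s_i$, this is where the method passes from the exact characterisation \eqref{eq:tolerance_solution_set} to a merely sufficient inner condition; the main thing to check is therefore that each of these two relaxations only weakens the constraints, so that guaranteed containment (rather than approximate containment) is preserved.
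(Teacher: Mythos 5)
Your proof is correct and follows essentially the same route as the paper: both rest on the half-space description \eqref{eq:tolerance_solution_set} of $\tolsetAb$, the bound $\|\ba_i\|_2\le s_i$, and the substitution of $\by$ for $\cabs{\bx_c}$ via $\by\ge\pm\bx_c$. The only difference is presentational — the paper derives \eqref{eq:tolsetAb_ball_r2} as a relaxation of the Chebyshev-center program \eqref{eq:tolsetAb_chebyshev_center}, whereas you verify the containment directly point-by-point via Cauchy--Schwarz, which also makes explicit the (correct) observation that feasibility alone suffices.
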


\begin{proof}
As in the proof of Theorem~\ref{th:n-ball}, $\tolsetAb$ is described by \eqref{eq:tolerance_solution_set}.
The Chebyshev center ${\bx_c}$ of a polytope can be determined by the linear programming problem which reads
\begin{equation}\label{eq:tolsetAb_chebyshev_center}
    \begin{aligned}
        &\max              & &r\\
        &\text{subject to} & &\csup{[\ba_i] {\bx_c} + r \|[\ba_i]\|_2} \le \onum{b}_i,\quad\forall i,\\
        & & &\csup{-[\ba_i] {\bx_c} + r \|[\ba_i]\|_2} \le -\unum{b}_i,\quad\forall i,\\
        & & & r \ge 0\\
    \end{aligned}
\end{equation}
It is hard to express the suprema explicitly, so we estimate them from above, resulting in a lower bound to the maximum inscribed n-ball
\begin{equation}
    \begin{aligned}
        &\max              & &r\\
        &\text{subject to} & &\csup{[\ba_i] {\bx_c}} + rs_i \le \onum{b}_i,\quad\forall i,\\
        & & &\csup{-[\ba_i] {\bx_c}} + r s_i \le -\unum{b}_i,\quad\forall i,\\
        & & & r \ge 0\\
    \end{aligned}
\end{equation}
Since $\csup{[\ba_i] {\bx_c}}=\cmid{\ba_i} \bx_c+\crad{\ba_i}|\bx_c|$, we equivalently have \eqref{eq:tolsetAb_ball_r2}, where $\by$ substitutes for $|\bx_c|$.
\end{proof}

By the first item of Corollary~\ref{corTolsetAbVert}, the largest inscribed n-ball is attained for a vertex of $\bA$. This is true not only for the fixed  center $\bx_c$, but even when the center is variable (because we can fix the center at the optimal one). This yields the following method of exponential complexity (with respect to the size of $[\bA]$). It remains an open question if the problem is computationally tractable or not.

\begin{theorem}[Largest inscribed n-ball with variable center]
Let $r,{\bx_c}$ be an optimal solution of the linear programming problem
\begin{equation}\label{eq:lpMaxInnrBallVar}
    \begin{aligned}
        &\max              & &r\\
        &\text{subject to} & &\ba_i \bx_c+ r\|\ba_i\|_2\le \onum{b}_i,\quad\forall i,\forall \ba_i \in \vertex([\ba_i]),\\
        & & &-\ba_i \bx_c+ r\|\ba_i\|_2\le -\unum{b}_i,\quad\forall i,\forall \ba_i \in \vertex([\ba_i]),\\
        & & & r \ge 0.
    \end{aligned}
\end{equation}
Then the ball \eqref{eq:n_ball} is the largest ball contained in $\tolsetAb$.
\end{theorem}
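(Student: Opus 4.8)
The plan is to reduce the claim to the classical Chebyshev-ball linear program for a polytope presented in half-space form; the only genuinely new step is turning the interval (supremum) constraints into a finite system of linear inequalities. First I would recall, exactly as established in the proof of Theorem~\ref{th:n-ball}, that $\tolsetAb$ is the convex polytope \eqref{eq:tolerance_solution_set} cut out by $\csup{[\ba_i]\bx}\le\onum{b}_i$ and $\csup{-[\ba_i]\bx}\le-\unum{b}_i$ for $i=1,\dots,m$.

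The second step linearizes these constraints. For fixed $\bx$, the map $\ba_i\mapsto\ba_i\bx$ is linear in $\ba_i$, so its maximum over the box $[\ba_i]$ is attained at a vertex; hence $\csup{[\ba_i]\bx}=\max_{\ba_i\in\vertex([\ba_i])}\ba_i\bx$. Consequently the single constraint $\csup{[\ba_i]\bx}\le\onum{b}_i$ is equivalent to the finite family $\ba_i\bx\le\onum{b}_i$ over all $\ba_i\in\vertex([\ba_i])$, and likewise $\csup{-[\ba_i]\bx}\le-\unum{b}_i$ becomes $-\ba_i\bx\le-\unum{b}_i$ over the same vertices. Thus $\tolsetAb$ possesses an exact finite half-space representation whose rows are $\ba_i$ (with bound $\onum{b}_i$) and $-\ba_i$ (with bound $-\unum{b}_i$), ranging over $i$ and over $\ba_i\in\vertex([\ba_i])$.

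The third step is the Chebyshev argument. For a single half-space $\{\bx:\bh^T\bx\le d\}$, the ball $\mathcal{B}$ of \eqref{eq:n_ball} is contained in it precisely when $\bh^T\bx_c+r\|\bh\|_2\le d$. Imposing this condition for every row of the representation above is therefore equivalent to $\mathcal{B}\subseteq\tolsetAb$; in particular the feasible region of \eqref{eq:lpMaxInnrBallVar} is exactly the set of pairs $(\bx_c,r)$ for which $\mathcal{B}\subseteq\tolsetAb$, and any redundant vertex inequality merely duplicates a valid constraint and leaves this feasible region unchanged. Since $\|\ba_i\|_2=\|-\ba_i\|_2$, the two groups of rows carry the same coefficient on $r$, and the program is exactly \eqref{eq:lpMaxInnrBallVar}. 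Maximizing $r$ over this feasible set hence yields the genuinely largest ball inscribed in $\tolsetAb$, not merely an inner approximation.

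The step I expect to demand the most care is the vertex reduction of step two, specifically the reason we must enumerate \emph{all} vertices of each $[\ba_i]$. For a known $\bx_c$ the supremum is attained at the single vertex selected by the sign pattern of $\bx_c$ (take $\onum{a}_{ij}$ where ${x_c}_j\ge0$ and $\unum{a}_{ij}$ otherwise), which is how the fixed-center formula \eqref{eq:tolsetAb_ball_r} arises; but here $\bx_c$ is itself a variable, so no sign pattern can be fixed in advance and every vertex must contribute a constraint. This is precisely what makes \eqref{eq:lpMaxInnrBallVar} carry exponentially many constraints (one per vertex of each interval row), and why the tractability of the variable-center problem remains open.
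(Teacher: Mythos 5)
Your proof is correct, but it reaches the result by a genuinely different route than the paper. The paper gives no standalone proof of this theorem: it derives it from the quasiconcavity Lemma together with item (1) of Corollary~\ref{corTolsetAbVert}, which show that the infima over $\ba_i\in[\ba_i]$ in the fixed-center radius formula \eqref{eq:tolsetAb_ball_r} are always attained at a vertex of $[\ba_i]$, and then extends this to a variable center by noting that one may fix the center at the optimal one. You instead linearize one level earlier: since $\ba_i\mapsto\ba_i\bx$ is linear for fixed $\bx$ (and each $a_{ij}$ occurs once, so the interval evaluation is exact), the supremum constraints in \eqref{eq:tolerance_solution_set} are equivalent to the finite family of ordinary half-space constraints indexed by $\vertex([\ba_i])$; hence $\tolsetAb$ admits an exact finite half-space representation and \eqref{eq:lpMaxInnrBallVar} is literally the classical Chebyshev-ball linear program for that representation. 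Both arguments are sound. Yours is more elementary and self-contained --- it needs neither the quasiconcavity lemma nor the detour through the fixed-center formula, and it makes transparent that the feasible set of the LP is exactly $\left\{(\bx_c,r)\ :\ \mathcal{B}\subseteq\tolsetAb,\ r\ge0\right\}$, so the optimum is the genuinely largest inscribed ball rather than a lower bound. What the paper's route buys is reuse: the same quasiconcavity lemma also yields the sign-pattern refinements in items (2)--(5) of Corollary~\ref{corTolsetAbVert}, which is essentially what you invoke informally in your closing paragraph to explain why, with $\bx_c$ free, no vertex can be discarded a priori. That closing observation, and the resulting exponential constraint count, matches the paper's remark immediately preceding the theorem.
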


The polytope $\tolsetAb$ may be represented as the set of all feasible solutions of a linear programming problem. The corresponding linear program~\eqref{eq:rohninnersolution} was first presented in~\cite{Rohn1986157158}.
\begin{theorem}\label{th:rohninnersolution}
$\bx \in \tolsetAb$ if and only if $\bx = \bx_1 - \bx_2$ is a solution to the system of linear inequalities
\begin{equation}\label{eq:rohninnersolution}
\begin{aligned}
 \onum{\bA} \bx_1 - \unum{\bA} \bx_2 &\le \onum{\bb}\\
 -\unum{\bA} \bx_1 + \onum{\bA} \bx_2 &\le \unum{\bb}\\
 \bx_1 \ge 0,~\bx_2 &\ge 0
\end{aligned}
\end{equation}
\end{theorem}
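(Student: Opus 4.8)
The plan is to reduce the quantified membership condition defining $\tolsetAb$ to the half-space description already obtained in the proof of Theorem~\ref{th:n-ball}, and then to linearize the resulting absolute-value terms by splitting $\bx$ into its nonnegative and nonpositive parts. Recall that $\bx\in\tolsetAb$ means $(\forall\bA\in[\bA])(\exists\bb\in[\bb])(\bA\bx=\bb)$, which is equivalent to $\{\bA\bx\mid\bA\in[\bA]\}\subseteq[\bb]$, i.e. to the Oettli--Prager-type inequality~\eqref{eq:rohn_tolerable_set}, or componentwise to $\csup{[\bA]\bx}\le\onum{\bb}$ and $\cinf{[\bA]\bx}\ge\unum{\bb}$. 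Using $\csup{[\bA]\bx}=\cmid{[\bA]}\bx+\crad{[\bA]}|\bx|$ and $\cinf{[\bA]\bx}=\cmid{[\bA]}\bx-\crad{[\bA]}|\bx|$, these two conditions become
\begin{align*}
\cmid{[\bA]}\bx+\crad{[\bA]}|\bx|&\le\onum{\bb},\\
-\cmid{[\bA]}\bx+\crad{[\bA]}|\bx|&\le-\unum{\bb}.
\end{align*}

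For the forward direction I would take the complementary decomposition $\bx=\bx_1-\bx_2$ with $\bx_1=\max(\bx,0)\ge0$ and $\bx_2=\max(-\bx,0)\ge0$ taken componentwise, so that $\bx_1+\bx_2=|\bx|$ exactly. Substituting $|\bx|=\bx_1+\bx_2$ into the two inequalities above and expanding the endpoints via $\onum{\bA}=\cmid{[\bA]}+\crad{[\bA]}$ and $\unum{\bA}=\cmid{[\bA]}-\crad{[\bA]}$ collapses the midpoint and radius terms into $\onum{\bA}\bx_1-\unum{\bA}\bx_2$ and $-\unum{\bA}\bx_1+\onum{\bA}\bx_2$, yielding exactly the system~\eqref{eq:rohninnersolution}. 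This direction is a routine substitution.

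The reverse direction is where care is needed, because the decomposition $\bx=\bx_1-\bx_2$ with $\bx_1,\bx_2\ge0$ is not unique: a feasible solution of~\eqref{eq:rohninnersolution} need not use the complementary parts, so in general only $\bx_1+\bx_2\ge|\bx|$ holds. The observation that rescues the argument is that $\crad{[\bA]}\ge0$, so replacing $\bx_1+\bx_2$ by the possibly smaller $|\bx|$ only decreases the left-hand sides; hence feasibility of~\eqref{eq:rohninnersolution} for \emph{any} nonnegative $\bx_1,\bx_2$ with $\bx_1-\bx_2=\bx$ already implies $\cmid{[\bA]}\bx+\crad{[\bA]}|\bx|\le\onum{\bb}$ and its companion inequality, i.e. $\bx\in\tolsetAb$. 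I expect this monotonicity step, reconciling the nonuniqueness of the splitting with the exact absolute value appearing in the tolerance condition, to be the main obstacle; the remainder is endpoint bookkeeping.
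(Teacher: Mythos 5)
Your argument is correct and complete. The paper itself supplies no proof for this theorem --- it only cites Rohn's 1986 note --- so there is nothing internal to compare against; what you have written is essentially the classical derivation of that result, built on the characterization $\left|\cmid{[\bA]}\bx-\cmid{[\bb]}\right|\le-\crad{[\bA]}|\bx|+\crad{[\bb]}$ that the paper does state as~\eqref{eq:rohn_tolerable_set}. Both directions are sound: the forward direction via the complementary split $\bx_1+\bx_2=|\bx|$ is routine, and you correctly identify and resolve the only delicate point in the converse, namely that an arbitrary nonnegative split only gives $\bx_1+\bx_2\ge|\bx|$, which suffices because $\crad{[\bA]}\ge0$ makes the left-hand sides monotone in that quantity. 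One small discrepancy worth flagging: your derivation produces $-\unum{\bA}\bx_1+\onum{\bA}\bx_2\le-\unum{\bb}$ for the second inequality, whereas the theorem as printed has $\le\unum{\bb}$; the printed version appears to drop a minus sign (equivalently, it should read $\unum{\bA}\bx_1-\onum{\bA}\bx_2\ge\unum{\bb}$), so your form is the correct one and you should not claim it matches~\eqref{eq:rohninnersolution} ``exactly'' without noting that correction.
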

The proof for Theorem~\ref{th:rohninnersolution} is given in~\cite{Rohn1986157158}.

For example, given $\bA$ as the transpose from~\eqref{eq:example_values} and adding uncertainties of $[-0.01,0.01]$ to each matrix element and an arbitrary non-symmetric box $[\bb]$
\begin{equation}\label{eq:arbitrary_b}
    [\bb] =
    \begin{pmatrix}
[  -74.0,   95.0] \\
[  -24.0,   20.0] \\
[  -22.0,   33.0]
\end{pmatrix}
\end{equation}
the following inner approximations of $\tolsetAb$ are given in Figure~\ref{fig:tolsetAb}:
\begin{itemize}
    \item the associated polytope (purple);
    \item the largest n-cube centered at the origin with $r=18.5082$ (red);
    \item the largest n-ball centered at the origin with $r=25.3333$ (green);
    \item the largest n-cube with variable center with $x_c = (-1.0599,-3.1972)$ and $r=20.3591$ (dashed line);
    \item the largest n-ball with variable center with $x_c = (-2.5291, -0.6963)$ and $r=27.8666$ (dashed line).
\end{itemize}
For comparison the polytope $\tolsetmidAb$ is also shown which necessarily contains the polytope $\tolsetAb$ due to~\eqref{eq:linear_system_x_inter}.

\begin{figure}[h!]
    \centering
    \includegraphics[width=0.4\textwidth,trim={0cm 0.0cm 1cm 0.0cm},clip]{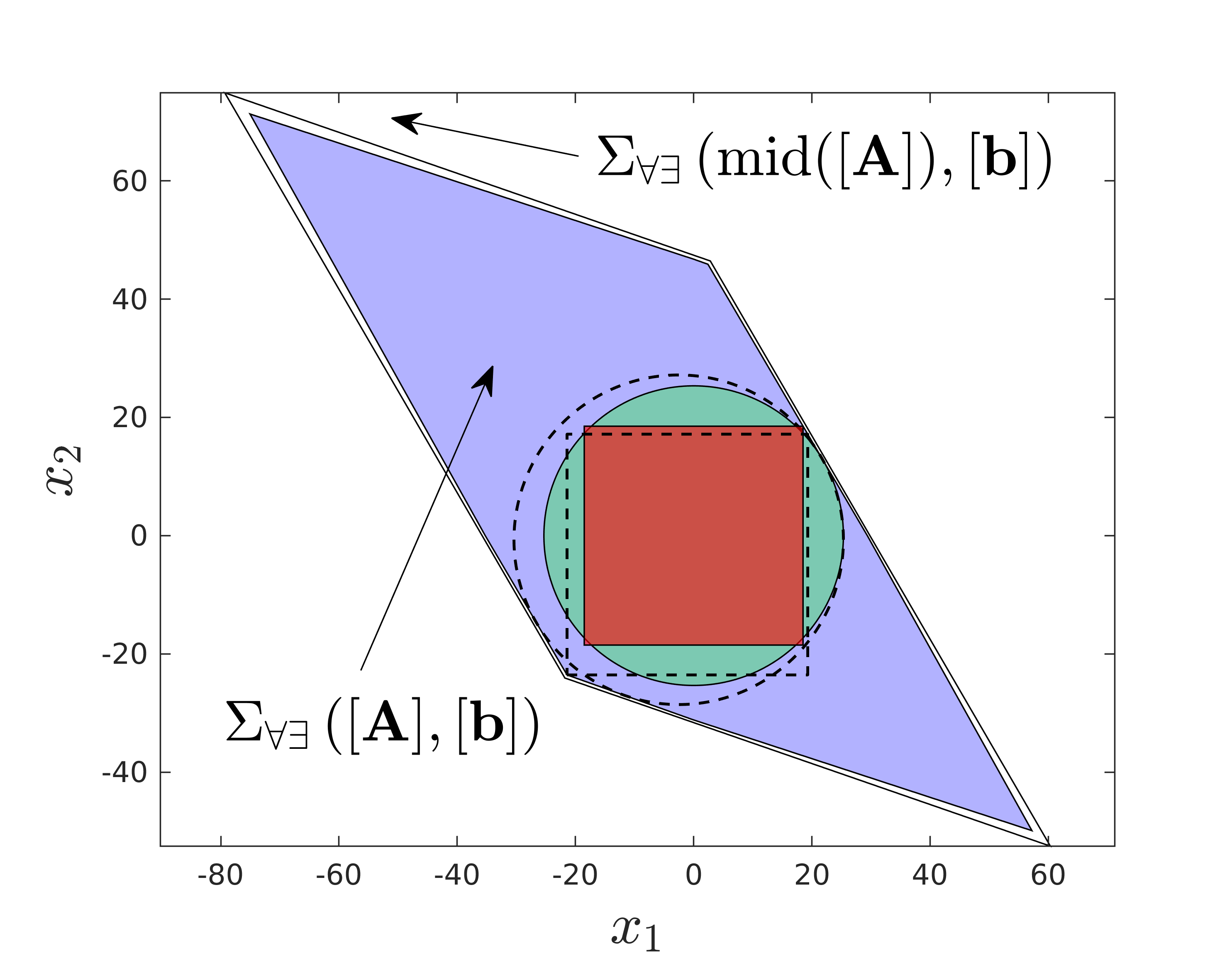}
    \caption{The associated polytope, n-cube, and n-ball inner approximations of $\tolsetAb$. For comparison the polytope $\tolsetmidAb$ is also shown.}
    \label{fig:tolsetAb}
\end{figure}

\subsection{$\tolsetb$ Inner Approximations}\label{sec:tolsetb}
When $\bA$ is square, the $\tolsetb$ polytope can be obtained as the convex hull of the set of $\bx$ associated with the $2^m$ vertices of $[\bb]$ as
\begin{equation}\label{eq:tolsetb_square}
\tolsetb = \conv(\{\bx~|~ (\exists \bb \in \vertex([\bb]))(\bx = \bA^{-1}\bb)\})
\end{equation}

When $\bA$ is non-square and $m>n$ the linear system of equations is over-determined and the Moore--Penrose pseudoinverse solution
\begin{equation}
\begin{aligned}
\bx = \bA^{\dag}\bb,
\end{aligned}
\label{eq:pseudo-inv}
\end{equation}
where $\bA^{\dag} = (\bA^T \bA)^{-1} \bA^T$,
only ensures that the norm of the error $\bb -  \bA \bx$ is minimized.
This error is zero and \eqref{eq:pseudo-inv} has a solution if and only if the set of $\bb$ belongs to the image ($\image$) of $\bA$.
In \cite{Chiacchio1996} the authors propose finding a reduced polytope that is given by the intersection $[\bb] \cap \image(\bA)$ and apply this method to solve the wrench problem of redundant serial manipulators. This approach has been recently extended to allow for online computation of the polytope \cite{skuric2020online}.
For many robotics applications, this is equivalent to considering only the set of $\bb$ that do not cause internal motions.
For example, when considering the velocity kinematics model, only the joint states resulting in motions at the end-effector are considered, as states in the null space produce only internal motions.

The associated $\bx$ for each vertex of the reduced polytope can be computed from \eqref{eq:pseudo-inv} and the convex hull of the resulting set gives the polytope $\tolsetb$.
The $\tolsetb$ polytope for over-determined systems is given by
\begin{equation}\label{eq:tolsetb_nonsquare}
\begin{aligned}
&\tolsetb = \\
&\conv(\{\bx~|~ (\exists \bb \in \vertex([\bb] \cap \image(\bA)))(\bx = \bA^{\dag}\bb)\})
\end{aligned}
\end{equation}
The proposed n-cube and n-ball theorems (Theorems~\ref{th:n-cube}, \ref{th:n-cube_variable}, and \ref{th:n-ball}) are also valid and applicable to the $\tolsetb$ sub-problems.

For example, given $\bA$ as the transpose from~\eqref{eq:example_values} and $[\bb]$ from~\eqref{eq:arbitrary_b},
the box $[\bb]$ and corresponding reduced polytope $[\bb] \cap \image(\bA)$, and
the following inner approximations of $\tolsetb$ are given in Figure~\ref{fig:tolsetb}:
\begin{itemize}
    \item the associated polytope (purple);
    \item the largest n-cube centered at the origin with $r=18.8573$ (red);
    \item the largest n-ball centered at the origin with $r=25.7800$ (green);
    \item the largest n-cube with variable center with $x_c = (-1.0450,-3.3319)$ and $r=20.7430$ (dashed line);
    \item the largest n-ball with variable center with $x_c = (-2.5624, -0.7216)$ and $r=28.3579$ (dashed line).
\end{itemize}

\begin{figure}[h!]
    \centering
    \includegraphics[width=0.4\textwidth,trim={0cm 0.0cm 1cm 0.0cm},clip]{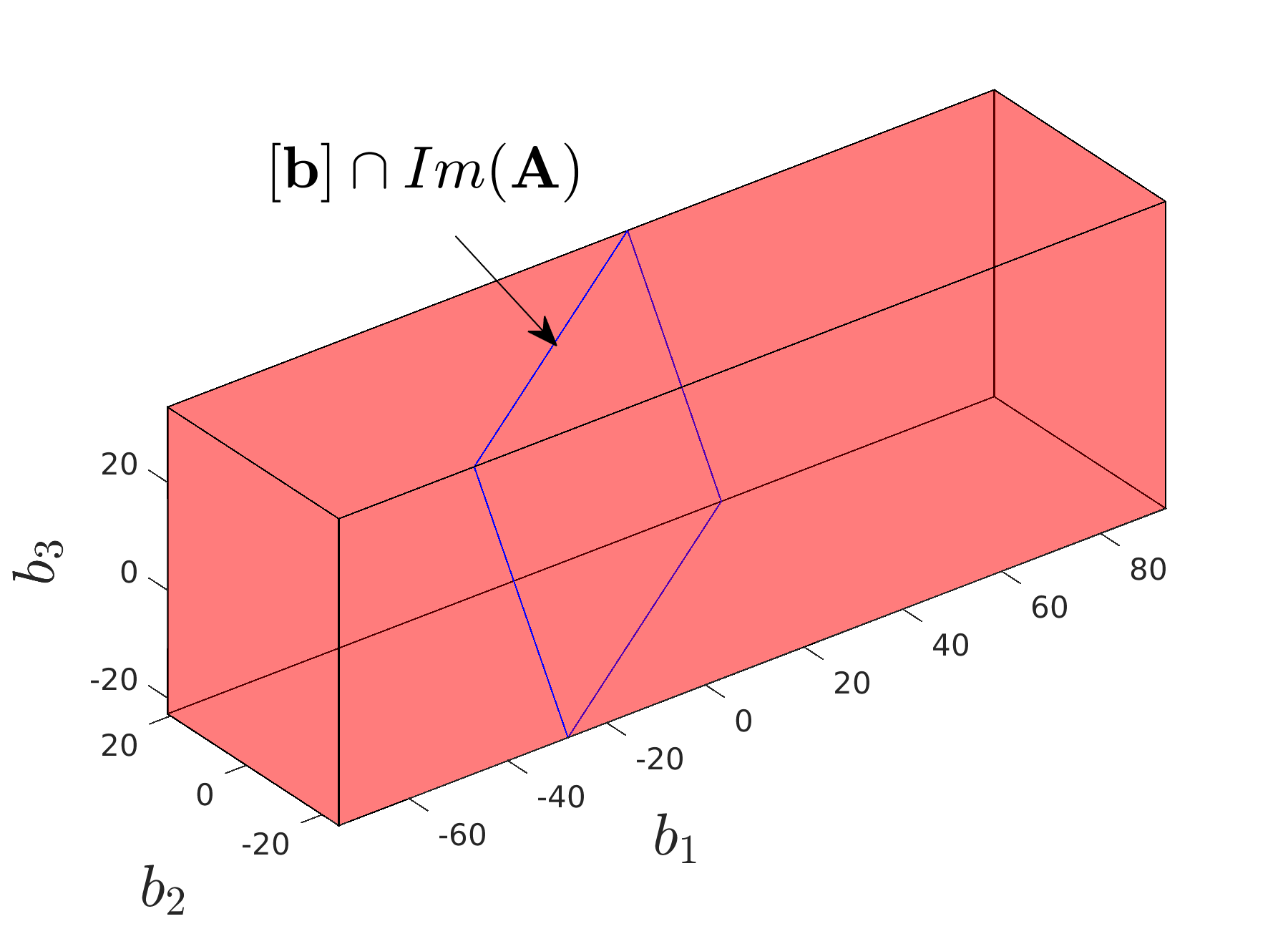}
    \includegraphics[width=0.4\textwidth,trim={0cm 0.0cm 1cm 0.0cm},clip]{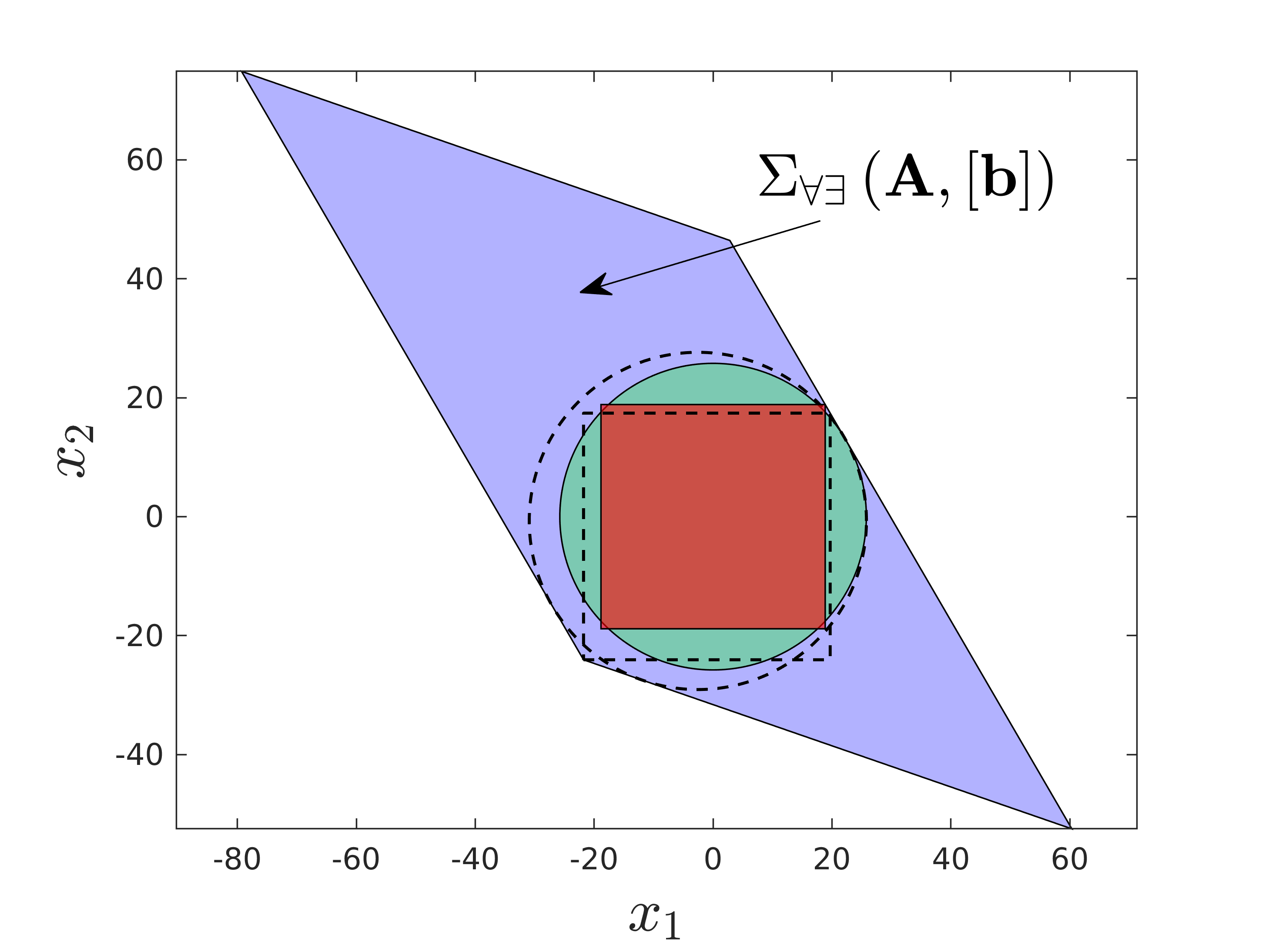}
    \caption{(Top) The box $[\bb]$ and the corresponding reduced polytope $[\bb] \cap \image(\bA)$. (Bottom) The associated polytope, n-cube, and n-ball inner approximations of $\tolsetb$.}
    \label{fig:tolsetb}
\end{figure}

\subsection{Inner Approximations Summary}
The equations associated with each inner approximation for each class of problem are summarized in Table~\ref{tab:summary}.
Software implementations of the proposed inner approximation algorithms are made available at~\cite{Pickard2020} for Matlab.
To get an estimate of the computational times for typical 7-degree-of-freedom redundant robotics applications, for each problem the mean computational times\footnote{A desktop computer with a AMD Phenom(tm) II X6 1045T 2.70 GHz processor and 16.0 GB of RAM is used to compute mean computational times.} and standard deviations (over 20 runs) evaluated for a random system with $n=7$ and $m=6$ are provided in Table~\ref{tab:summary}.
The fixed center point is taken as the origin.
It can be noted that the Matlab implementations are not optimized for speed and the computational times can be significantly reduced with more efficient implementations. However, the already low computational times for many of the inner approximations demonstrates the suitability for real-time applications.

\begin{table*}[t]
\setlength{\fboxsep}{0pt}%
\setlength{\fboxrule}{0pt}%
\begin{center}
\caption{Inner approximation summary.\label{tab:summary}}
\begin{tabular}{|l|l|l|l|l|l|l|l|l|}
     \hline
     {\bf Approximations} &$\tolsetx$& ms $\pm$ std & $\tolsetAx$ & ms $\pm$ std& $\tolsetb$& ms $\pm$ std & $\tolsetAb$ & ms$\pm$ std\\\hline
     Polytope inner
     & \eqref{eq:tolsetx}
     & 56.842
     & \eqref{eq:tolsetAx_r_polytope} + \eqref{eq:tolsetx}
     & 56.719
     & \eqref{eq:tolsetb_square} or \eqref{eq:tolsetb_nonsquare}
     & 187.374
     & \eqref{eq:rohninnersolution}
     & 381.568\\
     approximation
     && $\pm$1.090
     && $\pm$0.430
     && $\pm$0.982
     && $\pm$3.544
     \\\hline
     Largest inscribed n-cube
     & \eqref{eq:tolsetx} + \eqref{eq:tolsetx_cube_r}
     & 1.496
     & \eqref{eq:tolsetAx_r_polytope} + \eqref{eq:tolsetx_cube_r}
     & 1.537
     & \eqref{eq:tolsetAb_cube_r}
     & 0.024
     & \eqref{eq:tolsetAb_cube_r}
     & 0.024\\
     centered at a point
     && $\pm$0.033
     && $\pm$0.030
     && $\pm$0.002
     && $\pm$0.001
     \\\hline
     Largest inscribed n-cube
     & \eqref{eq:tolsetx} + \eqref{eq:tolsetx_chebyshev_center_cube}
     & 20.748
     & \eqref{eq:tolsetAx_r_polytope} + \eqref{eq:tolsetx_chebyshev_center_cube}
     & 20.807
     & \eqref{eq:lpMaxInnrCubeVar} -- exact
     & 30.690
     & \eqref{eq:lpMaxInnrCubeVar} -- exact
     & 33.154\\
     with variable center
     && $\pm$0.389
     && $\pm$0.310
     && $\pm$0.414
     && $\pm$0.675
     \\\hline
     Largest inscribed n-ball
     & \eqref{eq:tolsetx}  + \eqref{eq:tolsetx_ball_r}
     & 1.509
     & \eqref{eq:tolsetAx_r_polytope} + \eqref{eq:tolsetx_ball_r}
     & 1.541
     & \eqref{eq:tolsetAb_ball_r}
     & 0.036
     & \eqref{eq:tolsetAb_ball_r}
     & 0.036\\
     centered at a point
     && $\pm$0.029
     && $\pm$0.023
     && $\pm$0.004
     && $\pm$0.002
     \\\hline
     Largest inscribed n-ball
     & \eqref{eq:tolsetx} + \eqref{eq:tolsetx_chebyshev_center_ball}
     & 19.872
     & \eqref{eq:tolsetAx_r_polytope} + \eqref{eq:tolsetx_chebyshev_center_ball}
     & 19.883
     & \eqref{eq:tolsetAb_ball_r2}
     & 17.287
     & \eqref{eq:tolsetAb_ball_r2}
     & 17.560\\
     with variable center
     && $\pm$0.322
     && $\pm$0.294
     && $\pm$0.181
     && $\pm$0.434\\
     &&
     &&&\eqref{eq:lpMaxInnrBallVar} -- exact
     & 34.701
     & \eqref{eq:lpMaxInnrBallVar} -- exact
     & 48.459\\
     &&
     &&
     && $\pm$ 0.466
     && $\pm$0.481\\
     \hline
\end{tabular}
\end{center}
\end{table*}

\section{Applications}~\label{sec:applications}
One application of this work is to improve the reliability of the computed capabilities, since all sources of error can be easily modeled and managed. Methods for reliably modeling sources of error are described within the appropriate design framework~\cite{Merlet2008,PICKARD2019237}. This allows to compute certifiable results providing confidence of the calculated capabilities of a manipulator. Other applications of this work are to provide reliable inner approximations of the common capabilities of a robotic manipulator over a given time horizon. This allows to reliably compute local capabilities, in real-time in many cases, that can be used online to evaluate the manipulator's current capabilities as well as its future capabilities within a time horizon.

As a simple example, consider a redundant three-link planar manipulator for a positioning task with point masses $m_1,m_2,m_3$ at distal ends of links of lengths $l_1,l_2,l_3$. The manipulator is depicted in Figure~\ref{fig:threelinkmanip}.
 \begin{figure}[h!]
    \centering
    \includegraphics[width=0.3\textwidth]{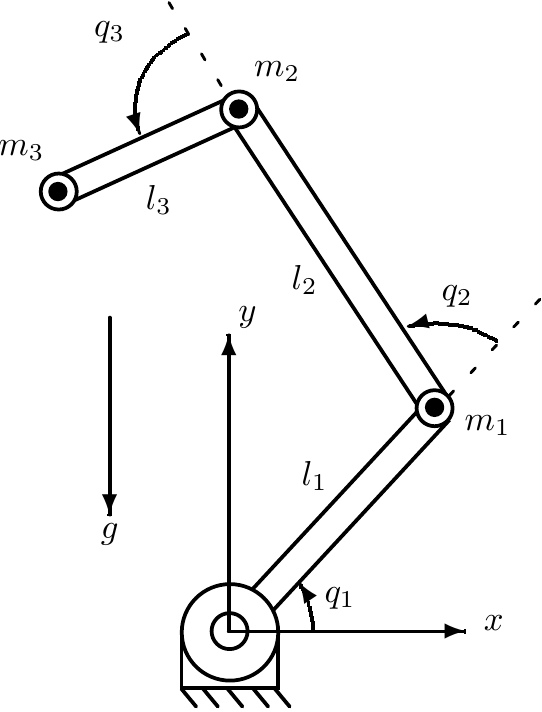}
    \caption{Three-link planar manipulator with point masses at distal ends of links.}
    \label{fig:threelinkmanip}
\end{figure}

The manipulator's $2 \times 3$ Jacobian matrix is given by
\begin{equation}
\begin{aligned}
{\bJ}(\bq) =&
    \begin{pmatrix}
    J_1-sq_{12} l_2-sq_1 l_1 &
    J_1-sq_{12} l_2 &
    J_1\\
    J_2+cq_{12} l_2+cq_1 l_1 &
    J_2+cq_{12} l_2 &
    J_2\\
    \end{pmatrix}
\end{aligned}
\end{equation}
where $q_{ijk} = (q_i+q_j+q_k)$, $sq_{i} = \sin(q_{i})$, $cq_{i} = \cos(q_{i})$,
$J_1 = -sq_{123} l_3$, and
$J_2 = cq_{123} l_3$.

The $2 \times 3$ time derivative of the Jacobian matrix is given by
\begin{equation}
\begin{aligned}
\dot{\bJ}(\bq,\dot{\bq}) =&
    \begin{pmatrix}
    -J_6 -J_3- cq_1 l_1 \dot{q}_{1} &
    -J_6-J_3&
    -J_6\\
    J_5 -J_4-sq_1 l_1 \dot{q}_{1}&
    J_5 -J_4&
    J_5\\
    \end{pmatrix}
\end{aligned}
\end{equation}
where $\dot{q}_{ijk} = \dot{q}_i+ \dot{q}_j + \dot{q}_k$, $J_3 = cq_{12} l_2 \dot{q}_{12}$,
$J_4 = sq_{12} l_2 \dot{q}_{12}$,
$J_5 = J_1 \dot{q}_{123}$, and
$J_6 = J_2 \dot{q}_{123}$.

The mass matrix, centrifugal and Coriolis vector, and gravity vector are given by
\begin{equation}
\begin{aligned}
{\bf M}(\bq) =&
    \begin{pmatrix}
    K_7 &
    K_1+K_3+K_4 &
    K_2+K_6 \\
    K_1+K_3+K_4 &
    2 K_5+K_4 &
    K_5+K_6\\
    K_2+K_6 &
    K_5+K_6 &
    K_6\\
    \end{pmatrix}
    \\
{\bf c}(\bq,\dot{\bq}) =&
    \begin{pmatrix}
    K_{10}( K_8+K_{12})+K_{11}( K_8+K_{9}) \\
    \dot{q_1}^2(K_8 + K_{12})+K_{9} K_{11}\\
    -K_{9} K_{10}+\dot{q_1}^2( K_8 + K_{9}) \\
    \end{pmatrix}
    \\
{\bg}(\bq) =& g
    \begin{pmatrix}
    cq_{123} l_3 m_3+l_2 cq_{12} m_{23}+l_1 cq_{1} m_{123}\\
    cq_{123} l_3 m_3+l_2 cq_{12} m_{23}\\
    cq_{123} l_3 m_3\\
    \end{pmatrix}
\end{aligned}
\end{equation}
where
\begin{equation}
\begin{aligned}
K_1 &= l_1 l_2 m_{23} cq_2\\
K_2 &= l_3 m_3 (l_1 cq_{23}+l_2  cq_3)\\
K_3 &= l_3 m_3 (l_1 cq_{23}+2 l_2 cq_3)\\
K_4 &= l_2^2 m_{23}+K_6\\
K_5 &= l_2 l_3 m_3 cq_3\\
K_6 &= l_3^2 m_3\\
K_7 &= 2 K_1 + 2 K_2 +K_4+l_1^2 m_{123}\\
K_8 &= l_1 l_3 m_3 sq_{23}\\
K_{9} &= l_2 l_3 m_3 sq_3\\
K_{10} &= -(2 \dot{q}_{1} \dot{q}_{2}+\dot{q}_{2}^2)\\
K_{11} &= -(2 \dot{q}_{1} \dot{q}_{3}+2 \dot{q}_{2} \dot{q}_{3}+\dot{q}_{3}^2)\\
K_{12} &= l_1 l_2 m_{23} sq_2
\end{aligned}
\end{equation}
with $m_{ijk} = m_i+ m_j + m_k$,  and $g$ is the gravitational constant.

For the following examples, realistic parameters and joint state limits are selected to approximate a simplified planar model of joints 2, 4, and 6 of the Franka Emika Panda robot. The parameter values with assumed uncertainties are
\begin{equation}\label{eq:parametervalues}
    \begin{aligned}
        [l_1]&=0.328 \pm 0.0001~\text{m}\\
        [l_2]&=0.394 \pm 0.0001~\text{m}\\
        [l_3]&=0.1385 \pm 0.0001~\text{m}\\
        [m_1]&=1.9 \pm 0.001~\text{kg}\\
        [m_2]&=1.6 \pm 0.001~\text{kg}\\
        [m_3]&=1.3 \pm 0.001~\text{kg}\\
    \end{aligned}
\end{equation}
and the corresponding joint state limits (\emph{i.e.}, $[{\bq}_{lim}]$, $[\dot{\bq}_{lim}]$, $[\ddot{\bq}_{lim}]$, $[\boldsymbol{\tau}_{lim}]$, $[\dot{\boldsymbol{\tau}}_{lim}]$) are taken from the manufacturer's specifications
as
\begin{equation}
    \begin{aligned}[]
        [{\bq}_{lim}]&=
        \begin{pmatrix}
            [-1.7628,1.7628]\\
            [-3.0718,-0.0698]\\
            [-0.0175,3.7525]
        \end{pmatrix}~\text{rad}\\
        [\dot{\bq}_{lim}]&=
        \begin{pmatrix}
            [-2.175,2.175]\\
            [-2.175,2.175]\\
            [-2.610,2.610]
        \end{pmatrix}~\text{rad/s}\\
        [\ddot{\bq}_{lim}]&=
        \begin{pmatrix}
            [-7.5,7.5]\\
            [-12.5,12.5]\\
            [-20,20]
        \end{pmatrix}~\text{rad/s}^2\\
        [\boldsymbol{\tau}_{lim}] &=
        \begin{pmatrix}
            [-87,87]\\
            [-87,87]\\
            [-12,12]
        \end{pmatrix}~\text{Nm}\\
        [\dot{\boldsymbol{\tau}}_{lim}] &=
        \begin{pmatrix}
            [-1000,1000]\\
            [-1000,1000]\\
            [-1000,1000]
        \end{pmatrix}~\text{Nm/s}\\
    \end{aligned}
\end{equation}

To visualize the effects of considering a set of configurations, for
\begin{equation}\label{eq:config_eg}
\begin{aligned}[]
[{\bq}] &= (1.0, -1.0, 1.0) \pm 0.1~\text{rads}\\
\end{aligned}
\end{equation}
an approximation of the set of positions associated with the distal ends of the links is depicted in Figure~\ref{fig:threelinkpositions}.
The proposed inner approximations provide a convenient means of evaluating the robot's common capabilities over the corresponding set of poses, provided that the intersection of capabilities is not empty.

\subsection{Evaluating the Velocity Problem}
A necessary procedure when analyzing the performance of a robotic manipulator is to evaluate the possible end-effector velocities through the mapping of known joint velocity limits with \eqref{eq:velocity_problem}.
The velocity problem is of the class $\tolsetx$ and the associated inner approximations of section~\ref{sec:tolsetx} may be used to evaluate the possible end-effector velocities of the manipulator at a given configuration ${\bq}$.
When considering a set of configurations $[{\bq}]$, the manipulator's Jacobian matrix becomes an interval matrix and the associated inner approximations of section~\ref{sec:tolsetAx} may be used.

At the following configurations
\begin{equation}\label{eq:config}
\begin{aligned}[]
[{\bq}] &= (0.0, -1.5708, 1.8675) \pm 0.01~\text{rads}\\
\end{aligned}
\end{equation}
and with the joint velocities $[\dot{\bq}] = [\dot{\bq}_{lim}]$
the inner approximations -- associated polytope with $r=0.9136$, largest n-cube centered at the origin with $r=0.4748$~m/s, and largest n-ball centered at the origin with $r=0.6657$~m/s approximations of $\Omega_{\forall\exists}\left({\bJ}([{\bq}]), [\dot{\bq}]\right)$ -- are given in Figure~\ref{fig:tolsetJqd_xd_plot}.

Analysis of the manipulator's velocity capabilities is performed by bisecting the joint configuration space (given in terms of an interval vector $[\bq]$) into a set of $k$ sub-intervals $[\bq_1],\ldots,[\bq_k]$ at a desired resolution.
Each sub-interval is used to evaluate an inner approximation of the corresponding n-ball centered at the origin, thus the continuous configuration space is completely explored without sampling.
Plots of the values of $r$ throughout the configuration space (see Figure~\ref{fig:tolsetJqd_xd_plot}) can be quickly generated and due to the independence of each sub-interval the performance can benefit significantly from parallel computing.
Furthermore, if given a desired velocity capability, a branch-and-bound loop can iteratively refine the joint configuration space $[\bq]$ finding all satisfying configurations for a desired resolution. This has applications for trajectory planning in the redundant configuration space, where feasibility requires a connected path of satisfying configurations and planning involves selection of an optimal path~\cite{Jaulin2001}. Note that the use of outer approximations of the manipulator's velocity capabilities, which are outside of the scope of this current work, can provide a complementary test allowing to quickly remove non-satisfying configurations from the branch-and-bound loop.

Considering the relationship between the velocity kinematics model~\eqref{eq:velocity_problem} and manipulability, the largest n-cube and n-ball inner approximations can be used to rapidly evaluate manipulability measures that correspond to given sets of robot states.

 \begin{figure}[t]
    \centering
    \includegraphics[width=0.4\textwidth]{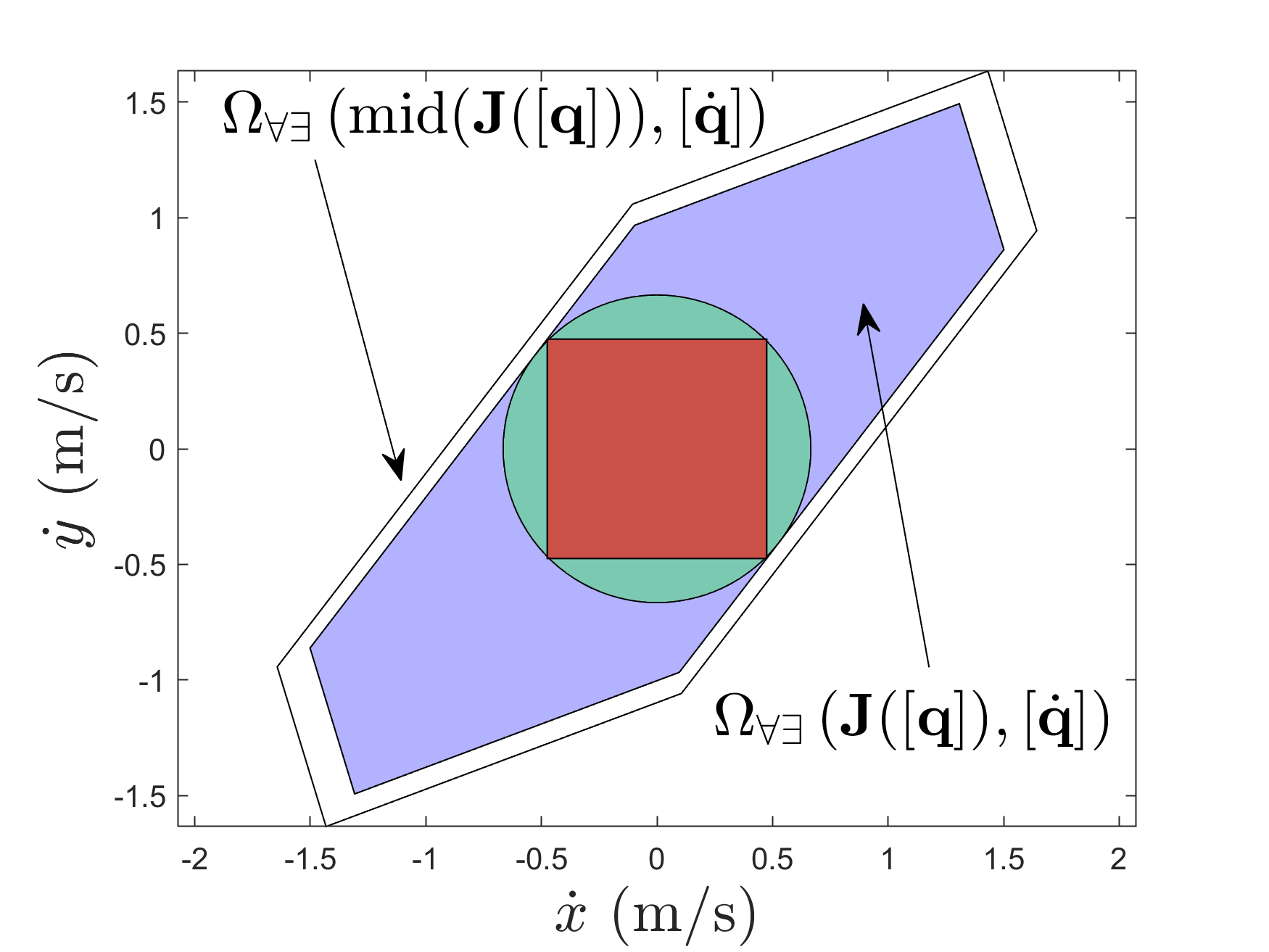}
    \includegraphics[width=0.4\textwidth]{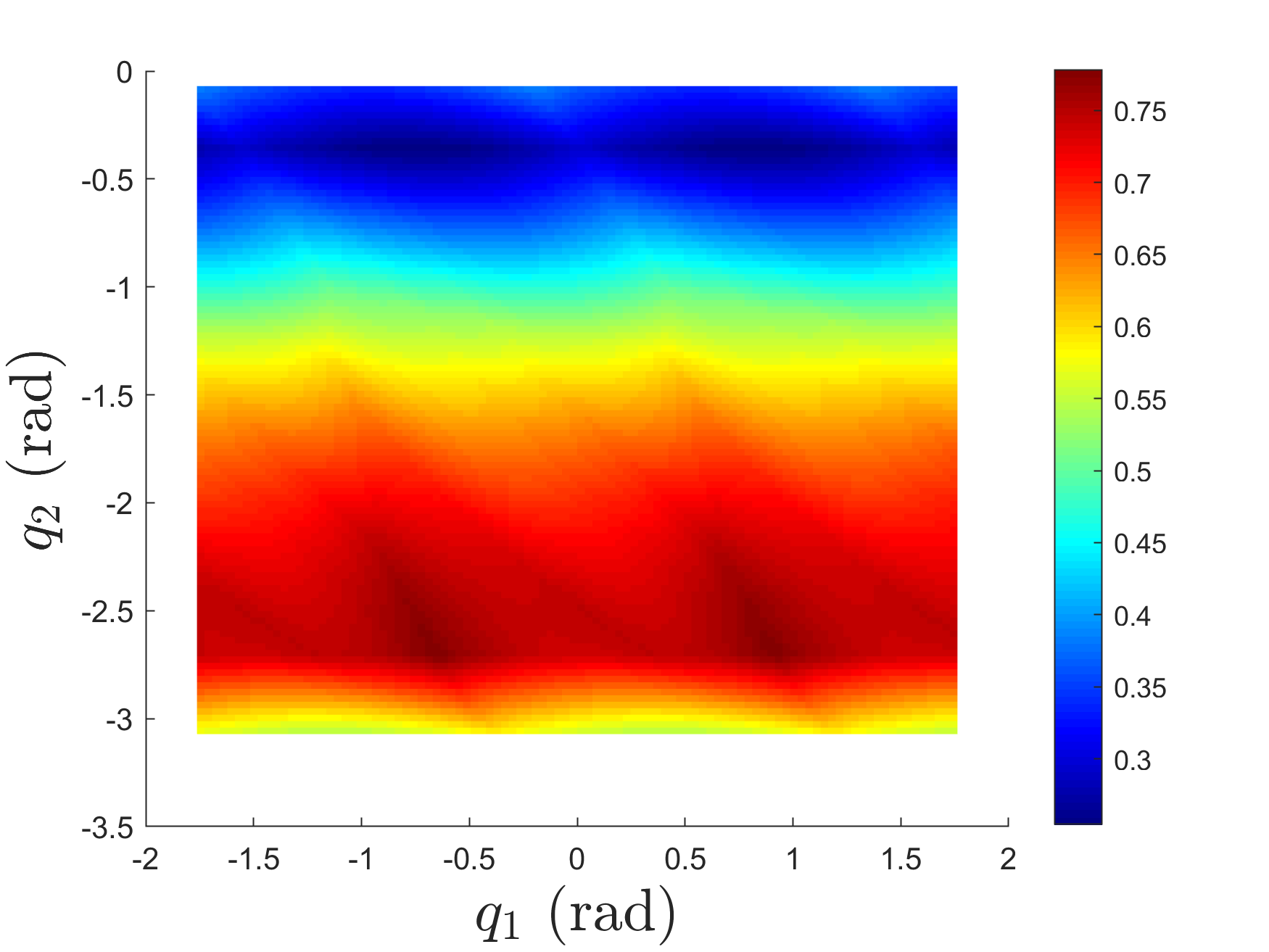}
    \caption{(Top) The associated polytope, largest n-cube centered at the origin, and largest n-ball centered at the origin for the example velocity problem $\Omega_{\forall\exists}\left({\bJ}([{\bq}]), [\dot{\bq}]\right)$. (Bottom) A plot of the values of $r$ throughout the configuration space (with $[{q}_3]=1.8675 \pm 0.01~\text{rads}$) for the largest n-ball centered at the origin.}
    \label{fig:tolsetJqd_xd_plot}
\end{figure}

\subsection{Evaluating the Kinetostatic Problem}
Another necessary procedure when analyzing the performance of a robotic manipulator is to evaluate the possible end-effector wrenches through the mapping of known joint torque/force limits with~\eqref{eq:wrench_problem}.
The kinetostatic problem is of the class $\tolsetb$ and the associated inner approximations of section~\ref{sec:tolsetb} may be used to evaluate the possible end-effector wrenches of the manipulator at a given configuration ${\bq}$.
For a set of configurations $[{\bq}]$, the kinetostatic problem is of the class $\tolsetAb$ and the associated inner approximations of section~\ref{sec:tolsetAb} may be used.

At the same configurations~\eqref{eq:config}
and with the joint torques $[\boldsymbol{\tau}] = [\boldsymbol{\tau}_{lim}]$,
the inner approximations -- associated polytope, largest n-cube centered at the origin with $r=67.3479$~N, and largest n-ball centered at the origin with $r=85.1640$~N approximations of $\Sigma_{\forall\exists}\left({\bJ}([{\bq}])^T, [\boldsymbol{\tau}]\right)$ -- are given in Figure~\ref{fig:tolsetJtau_f_plot_3link}. In addition, the joint configuration space is bisected into sub-intervals and inner approximations of the corresponding n-ball centered at the origin allow to analyze the manipulator's force capabilities.

\begin{figure}[t]
    \centering
    \includegraphics[width=0.4\textwidth]{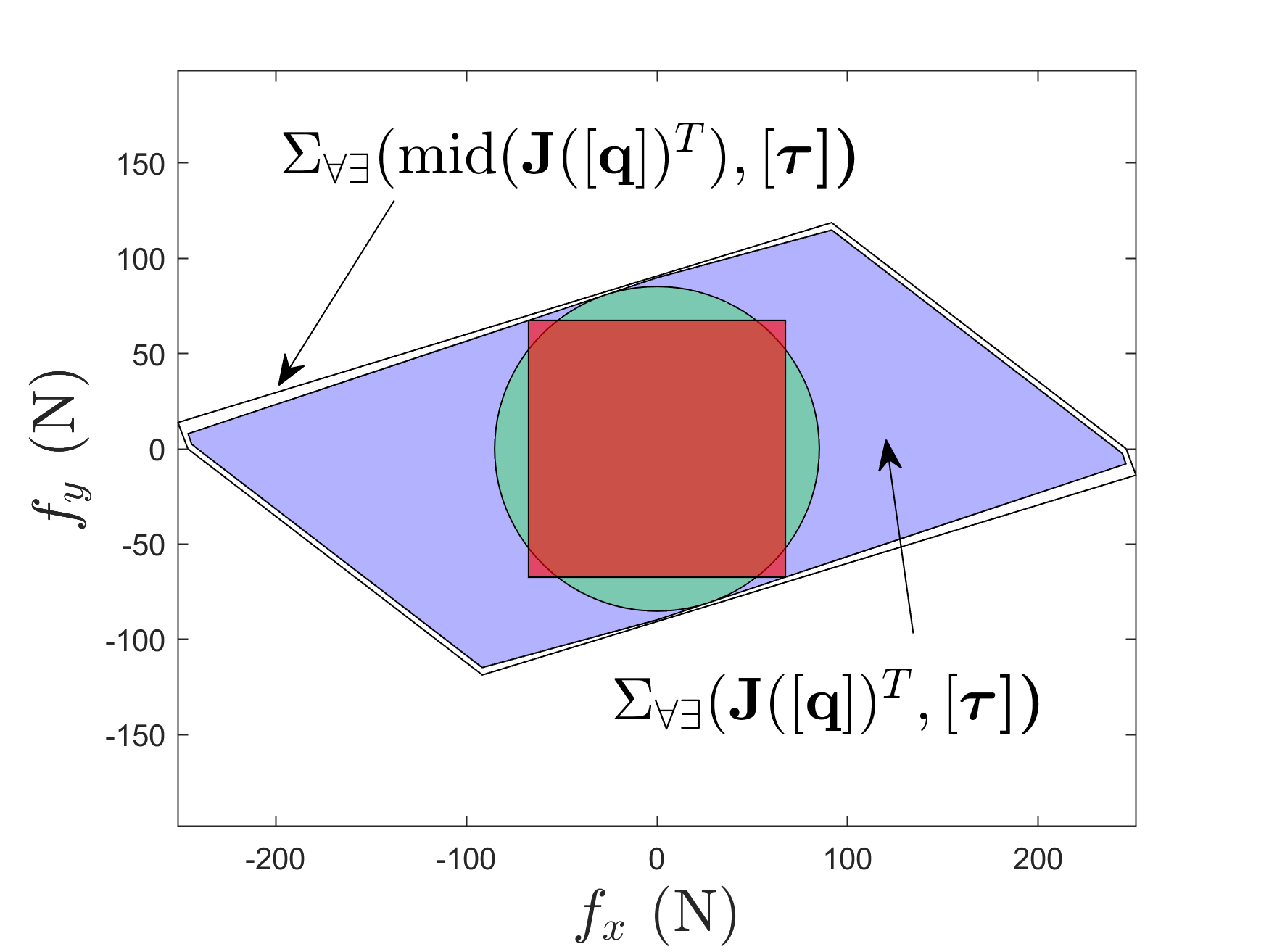}
    \includegraphics[width=0.4\textwidth]{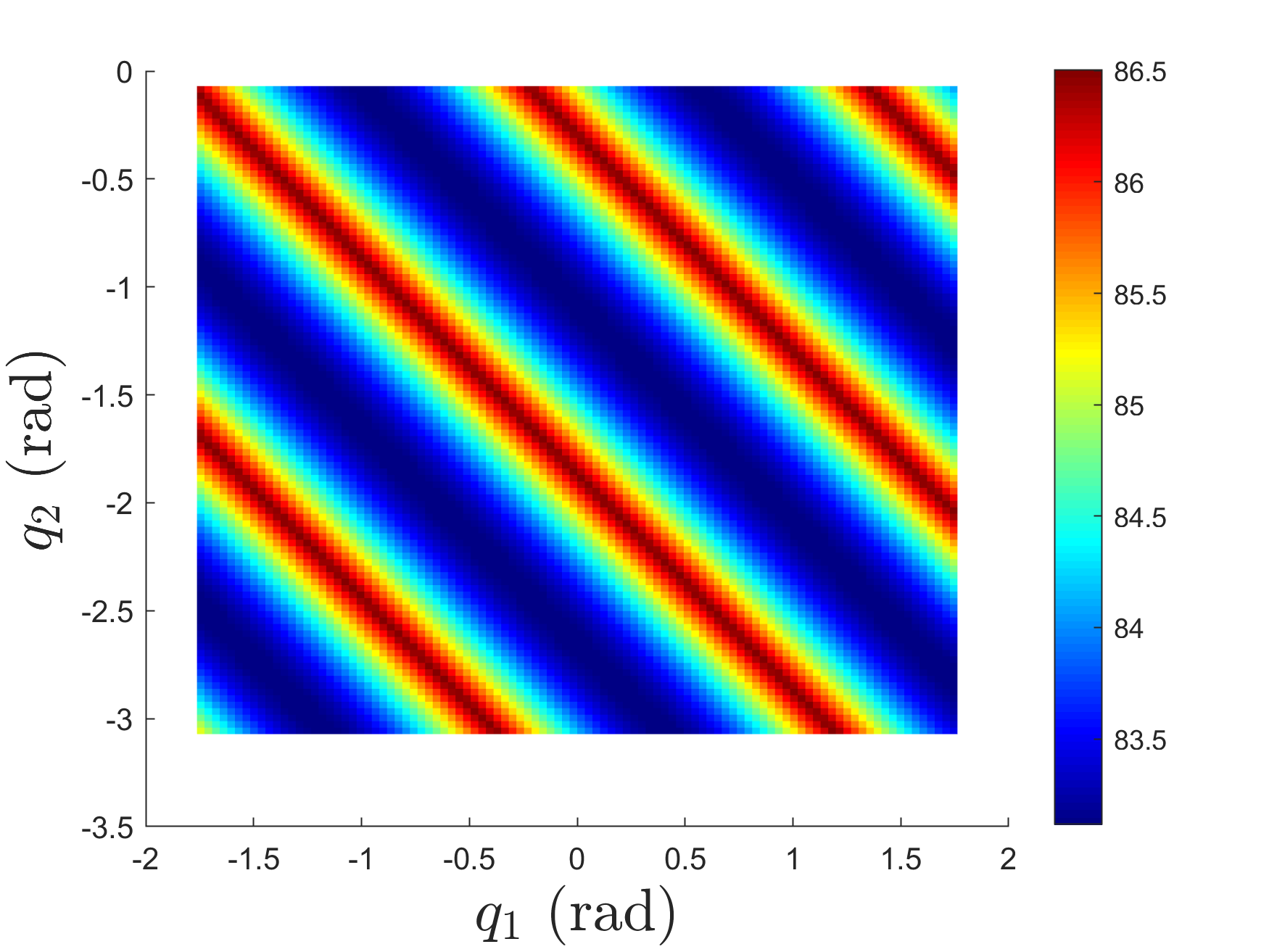}
    \caption{(Top) The associated polytope, largest n-cube centered at the origin, and largest n-ball centered at the origin for the example kinetostatic problem $\Sigma_{\forall\exists}\left({\bJ}([{\bq}])^T, [\boldsymbol{\tau}]\right)$. (Bottom) A plot of the values of $r$ throughout the configuration space (with $[{q}_3]=1.8675 \pm 0.01~\text{rads}$) for the largest n-ball centered at the origin.}
    \label{fig:tolsetJtau_f_plot_3link}
\end{figure}

\subsection{Evaluating the Forward Acceleration Problem}

The local operational acceleration capabilities can be determined from the forward acceleration model~\eqref{eq:acceleration_problem}. The problem can be split into two interval linear systems of equations where each system needs to be solved to give a corresponding set of operational accelerations (see Tables~\ref{tab:usecases_tolsetx}). 
The Minkowski sum of the two sets then gives the local operational acceleration capabilities. Inner polytope approximations of the two sets of class $\tolsetx$ or $\tolsetAx$ are obtained from:
\begin{equation}\label{eq:forward_accel_problem}
    \begin{aligned}
     \ddot{\bx}_1 &= \dot{\bJ}({\bq},\dot{\bq}) \dot{\bq}\\
     \ddot{\bx}_2 &= {\bJ}({\bq}) \ddot{\bq}\\
    \end{aligned}
\end{equation}
Since each inner polytope approximation is convex, the Minkowski sum of the two sets is easily computed as the convex hull of all combinations of polytope vertices. The inner approximations~\eqref{eq:tolsetx_cube_r} and~\eqref{eq:tolsetx_ball_r} can then be applied.

Let $\dot{\bJ}({\bq},\dot{\bq})$ and ${\bJ}({\bq})$ be evaluated at the same configurations~\eqref{eq:config}
and with the joint velocities
\begin{equation}\label{eq:velconfig}
[\dot{\bq}] = (1.0,1.0,1.0) \pm 0.01~\text{rads/s}\\
\end{equation}
As more uncertainties or variabilities in the system are considered, the greater the widths of the intervals in the matrices $[\bA]$ become and therefore the greater the widths of the vectors $[\bx]$ must be to in order to determine an inner polytope approximation with $r$ greater than zero.
Considering the joint state limits $[\dot{\bq}_{lim}]$ and $[\ddot{\bq}_{lim}]$, the resulting inner polytope approximations of $\Omega_{\forall\exists}(\dot{\bJ}([{\bq}],[\dot{\bq}]), [\dot{\bq}_{lim}])$ and $\Omega_{\forall\exists}({\bJ}([{\bq}]), [\ddot{\bq}_{lim}])$ are $r=0.8446$ and $r=0.9237$, respectively (see Figure~\ref{fig:forward_accel_init} top).
The inner approximations of the operational acceleration capabilities is Minkowski sum of the two sets (see Figure~\ref{fig:forward_accel_init} bottom) with: largest n-cube centered at the origin with $r=5.7386$~m/s$^2$, and largest n-ball centered at the origin with $r=7.7526$~m/s$^2$.
While uncertainties may be largely ignored from a control point of view, accounting for these uncertainties is the only way to truly certify the capabilities of the system.

\begin{figure}[t]
    \centering
    \includegraphics[width=0.45\textwidth]{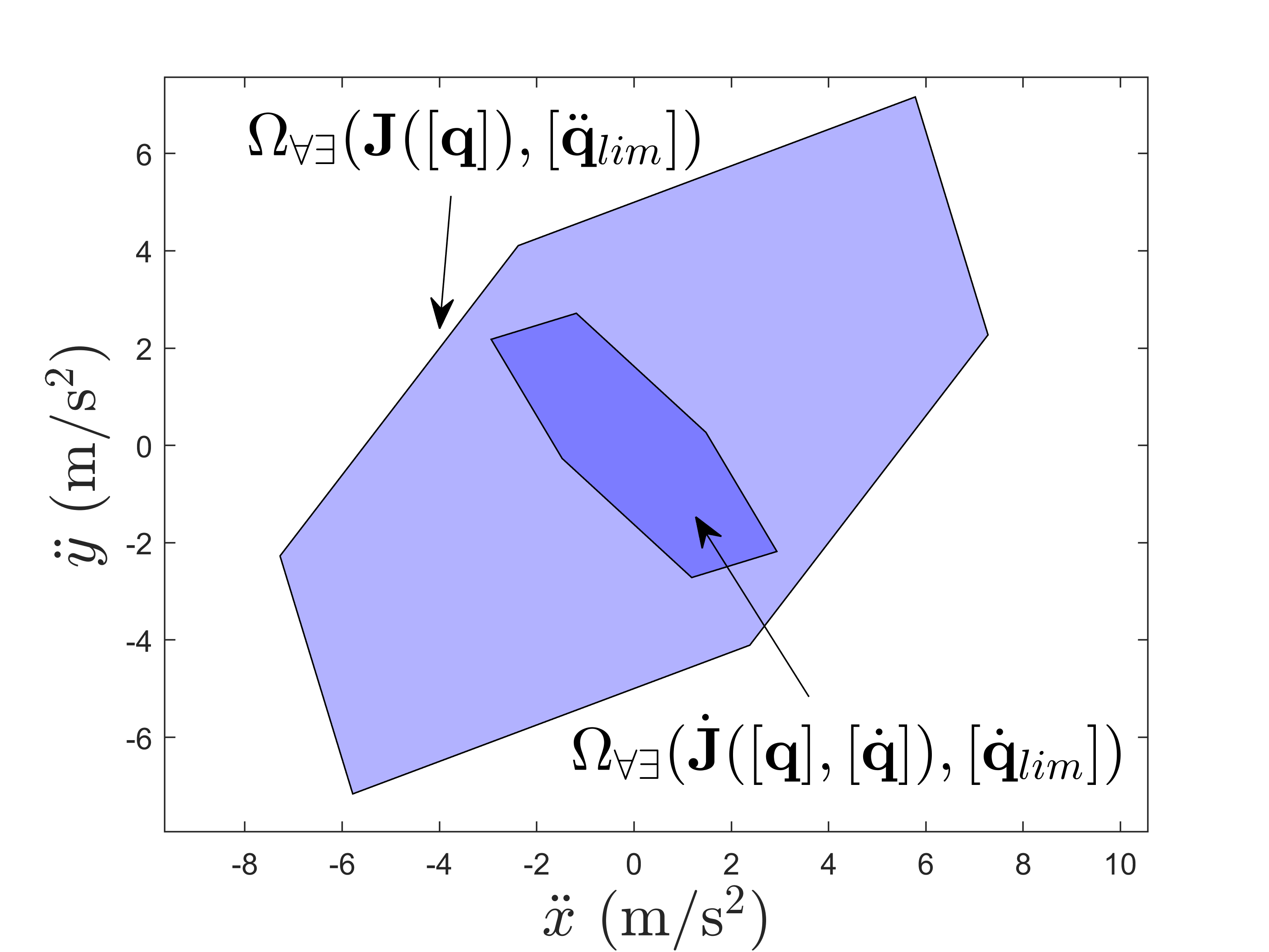}
    \includegraphics[width=0.45\textwidth]{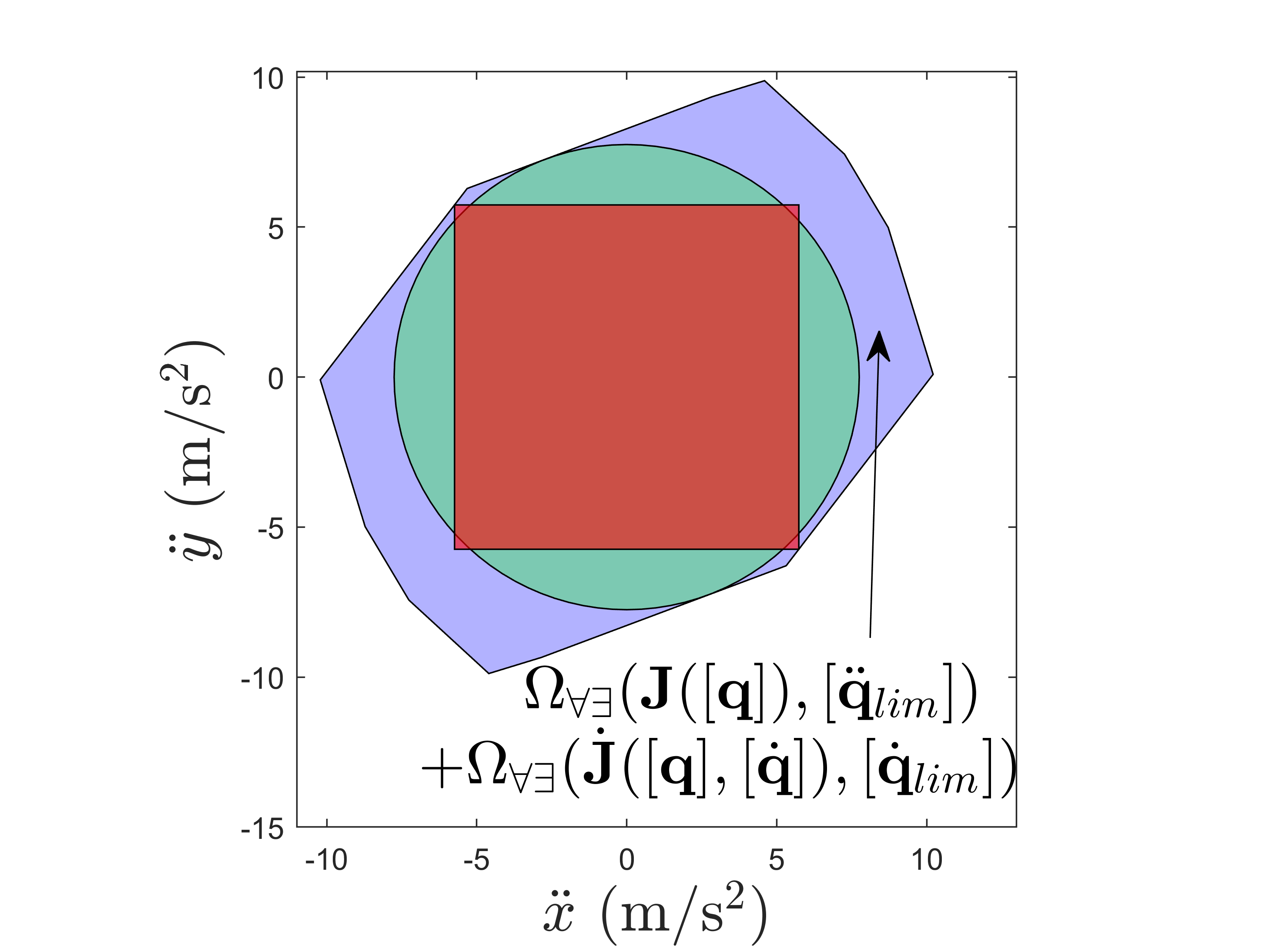}
    \caption{(Top) The associated inner polytope approximations for $\Omega_{\forall\exists}(\dot{\bJ}({\bq},\dot{\bq}), [\dot{\bq}_{lim}])$ and $\Omega_{\forall\exists}({\bJ}({\bq}), [\ddot{\bq}_{lim}])$. (Bottom) The associated inner approximations for $\Omega_{\forall\exists}(\dot{\bJ}({\bq},\dot{\bq}), [\dot{\bq}_{lim}]) + \Omega_{\forall\exists}({\bJ}({\bq}), [\ddot{\bq}_{lim}])$.
    }
    \label{fig:forward_accel_init}
\end{figure}

\subsection{Evaluating Future Dynamic Capabilities}

Let the current state of a robot be given by $\{\bq,\dot{\bq},\boldsymbol{\tau}\}$ and let the joint state limits be given by the intervals $\{[\bq_{lim}],[\dot{\bq}_{lim}],[\boldsymbol{\tau}_{lim}]\}$ and the control limit be given by the interval $[\dot{\boldsymbol{\tau}}_{lim}]$.
Consider a prediction period interval $[t] = [0,t_{max}]$.
The future joint torques $[\boldsymbol{\tau}_{f}]$ are bounded by
\begin{equation}
    [\boldsymbol{\tau}_{f}] = \left(\boldsymbol{\tau} +  [\dot{\boldsymbol{\tau}}_{lim}] [t]\right)  \cap [\boldsymbol{\tau}_{lim}]
\end{equation}
The future joint accelerations $[\ddot{\bq}_{f}]$ are estimated from joint torque at the current state by
\begin{equation}
    [\ddot{\bq}_{f}] = {\bf M}({\bq})^{-1}\left([\boldsymbol{\tau}_{f}] - \bv({\bq},\dot{{\bq}}) -{\bg}({\bq})\right).
\end{equation}


The future joint velocities $[\dot{\bq}_{f}]$ are given by
\begin{equation}
    [\dot{\bq}_{f}] = \left(\dot{\bq} +  [\ddot{\bq}_{{f}}] [t]\right)  \cap [\dot{\bq}_{lim}].
\end{equation}
The future joint configurations $[\boldsymbol{{{\bq}}}_{f}]$ are given by
\begin{equation}
\begin{aligned}[]
[\boldsymbol{{{\bq}}}_{f}] = \left(\boldsymbol{{{\bq}}} + [\dot{\bq}_{f}] [t] + \frac{1}{2}[\ddot{\bq}_{{f}}] [t] ^ 2 \right) \cap [\boldsymbol{{{\bq}}}_{lim}].
\end{aligned}
\end{equation}
The joint state limit intersections ensure that the limits are not exceeded during the prediction of the future set of states.


It is clear that joint acceleration capabilities are state dependent and therefore if constant joint acceleration limits are enforced, the controller may either require infeasible accelerations from the system or sub-optimally use its actual capabilities.
Furthermore, due to the noise associated with the estimation of the current $\ddot{\bq}$, enforcing constant joint acceleration limits is difficult in practice.
Alternatively, the use of joint torques to evaluate joint acceleration capabilities can provide a much better estimate of the capabilities of the system.

The joint acceleration capabilities achievable during the prediction period $[t]$ can be determined from the configuration-space dynamic model~\eqref{eq:configuration_dynamic_problem}.
However, the right hand side of~\eqref{eq:configuration_dynamic_problem} should not be overestimated. Therefore, let the effective joint torques be given by $[\boldsymbol{\tau}_{e}]= [\underline{\boldsymbol{\tau}_{f}} + \overline{\bf cg}, \overline{\boldsymbol{\tau}_{f}} + \underline{\bf cg}]$, where $[{\bf cg}] = -{\bv}([{\bq}_f],[\dot{\bq}_f]) -{\bg}([{\bq}_f])$.
Inner approximations of the joint acceleration capabilities are then computed from
\begin{equation}
    {\bf M}([{\bq}_f])\ddot{\bq} = [\boldsymbol{\tau}_{e}].
\end{equation}
The problem is of class $\tolsetAb$ and the associated inner approximations of section~\ref{sec:tolsetAb} may be used.

As an example, let the current states be given by \eqref{eq:config} and \eqref{eq:velconfig} and
\begin{equation}\label{eq:states}
    \boldsymbol{\tau} = (18.0,1.0,2.0)~\pm 0.01~\text{Nm}\\
\end{equation}
Selecting the time $[t] = [0,0.01]$~s,
the corresponding joint acceleration inner approximation polytope, the largest n-cube centered at the origin with $r=2.3382~\text{rad/s}^2$, and the largest n-ball centered at the origin with $r=3.2315~\text{rad/s}^2$ are shown in Figure~\ref{fig:tolsetMtaueff_qdd_plot_3link_nouncertain}.
For visibility reasons only, the polytope is bounded by the manufacturer's limits $[\ddot{\bq}_{lim}]$.

The inner polytope approximation of the joint acceleration capabilities represents a set of valid joint accelerations which better describe the future operational accelerations during a prediction period $[t]$. This can be used in the form of inequality constraints inside the robot controller, augmenting the manufacturer's default limits, to ensure that the robot's capabilities are properly considered to select valid control actions.

\begin{figure}[t]
    \centering
    \includegraphics[width=0.5\textwidth]{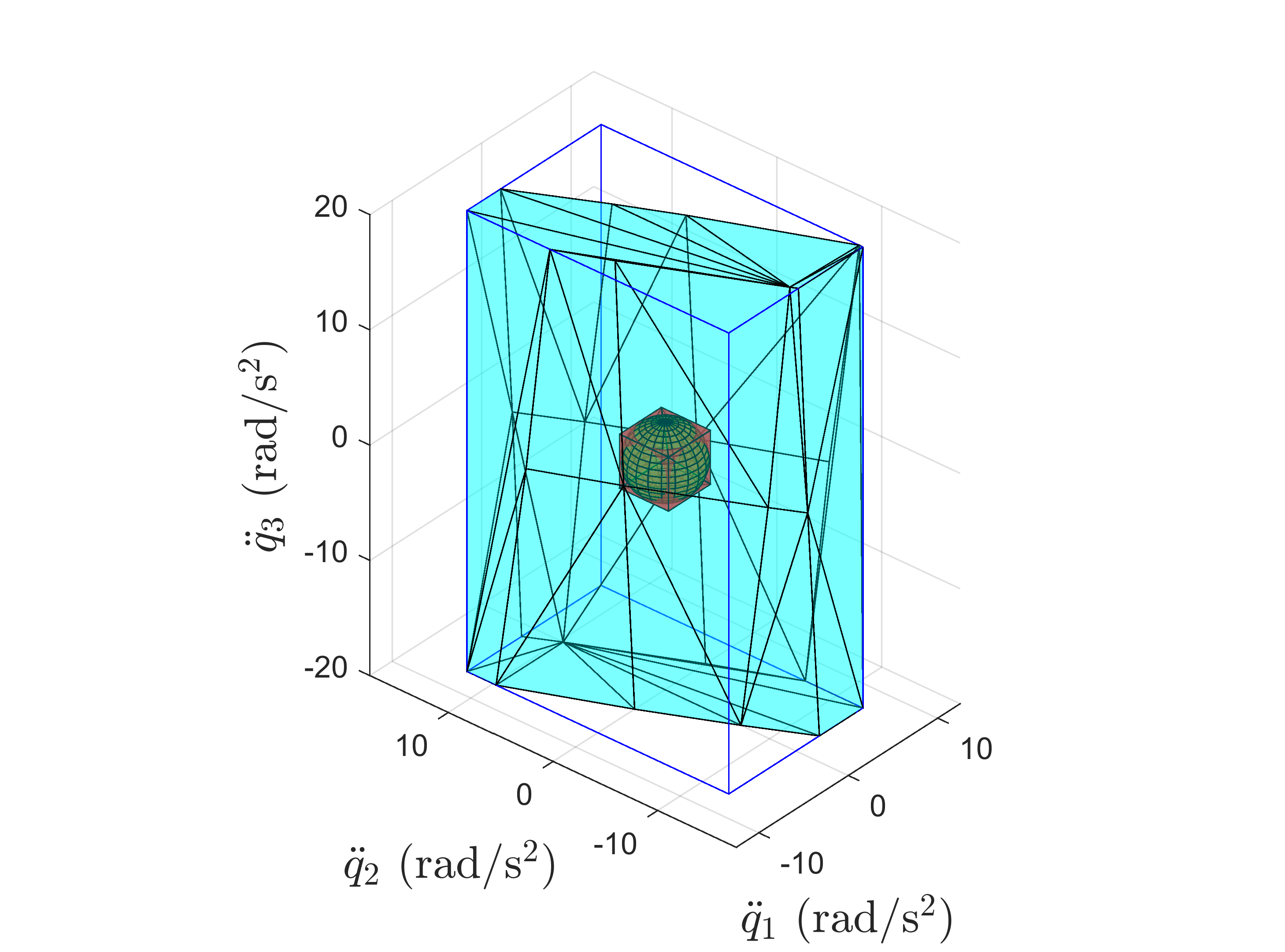}
    \caption{Future joint acceleration capability inner approximations during a prediction period $[t]$. For visibility, the polytope is bounded by the manufacturer's limits $[\ddot{\bq}_{lim}]$.
    }
    \label{fig:tolsetMtaueff_qdd_plot_3link_nouncertain}
\end{figure}

\section{Conclusion}
Efficient set-based approaches for computing reliable inner approximations of a robot's common capabilities over sets of joint states and with design parameter uncertainties are presented in this work. These approaches allow to not only consider the temporal evolution of the system over a small time horizon when computing the localized capabilities, but also allow to manage uncertainties to handle an imprecise/variable system describing a family of robotic manipulators.

The use of set-based approaches to evaluate capabilities provides many benefits over conventional approaches. One key benefit is the computation of certified inner approximations of the robot's capabilities, allowing these approaches to be used in offline and online scenarios to improve the safety of robotic systems. The ability to automatically handle design parameter uncertainties and other system uncertainties affecting performance (e.g., inertial parameter errors~\cite{giftsun:hal-01533136}) allows for the use of simpler imprecise/variable models that accurately describe a complex system (\emph{e.g.}, modeling flexure-jointed mechanisms~\cite{10.1115/1.3042151}).
Also, the ability to consider sets of joint states allows to locally evaluate capabilities over a small time horizon, such that the robot's performance in the near future can be better understood. The sets of joint states may also be used to analyze the capabilities along continuous trajectories via the use of branch-and-bound methods, as opposed to the standard sampling-based approach, allowing to properly understand worst-case conditions which can be used to improve the overall performance of the robot.

The low computation costs of many of the proposed inner approximations allow for real-time applications, enabling their use for rapid design analysis as well as their potential integration directly in to the robot controller. Therefore, a future application of this work may be the development of a certifiably safe robot controller which will use a properly formulated imprecise/variable system to safely estimate the robot's true capabilities over a given time horizon. These capabilities may then be used directly in the controller to ensure that each controller action is achievable. Furthermore, by computing and incorporating the robot's deceleration capabilities into the controller, it is be possible to enforce strict braking criteria to guarantee that collaborative systems are safe when in close proximity to a human.


\bibliographystyle{IEEEtran}
\bibliography{IEEEabrv,bib}

\begin{IEEEbiography}[{\includegraphics[width=1in,height=1.25in,clip,keepaspectratio]{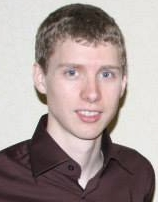}}]{Joshua K. Pickard}
received a B.Sc.E. degree in mechanical engineering with a mechatronics option in 2012, and a Ph.D. in mechanical engineering on uncertainty analysis and mechanism synthesis in 2018 from from the University of New Brunswick, Fredericton, New Brunswick, Canada.
From 2018 to 2020 he was a postdoctoral fellow at Inria Bordeaux Sud-Ouest, Bordeaux, France, with the Auctus Team working on a modeling and analysis framework for imprecise and variable kinematic chains for the study of human motor-variabilities.
Currently, he is a postdoctoral fellow at the University of New Brunswick working on research related to factories of the future and Industry 4.0.

\end{IEEEbiography}
\begin{IEEEbiography}[{\includegraphics[width=1in,height=1.25in,clip,keepaspectratio]{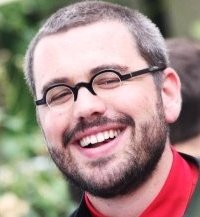}}]{Vincent Padois} is a senior research scientist at Inria, centre Bordeaux Sud-Ouest in the Auctus team (Talence, France). From 2007 to 2020, he was an associate professor of Robotics at Sorbonne Université and ISIR (Paris, France). From 2006 to 2007 he was  a postdoctoral researcher at the Stanford Artiﬁcial Intelligence Laboratory in the group of Prof. O. Khatib (Stanford, USA). He received his Ph.D. degree in Robotics and Automatic Control from INP de Toulouse in 2005. His main research interest is robot control in constrained contexts and dynamic environments, including wheeled mobile manipulators, humanoid robots and collaborative robots. Beyond control, his research activities in collaborative robotics are focused on virtual human models for the ergonomic quantification of the physical assistance of collaborative robots. He is also involved in research activities aiming at bridging the gap between adaptation and decision making techniques and model-based control through the use of machine learning and evolutionary algorithms.
\end{IEEEbiography}

\begin{IEEEbiography}[{\includegraphics[width=1in,height=1.25in,clip,keepaspectratio]{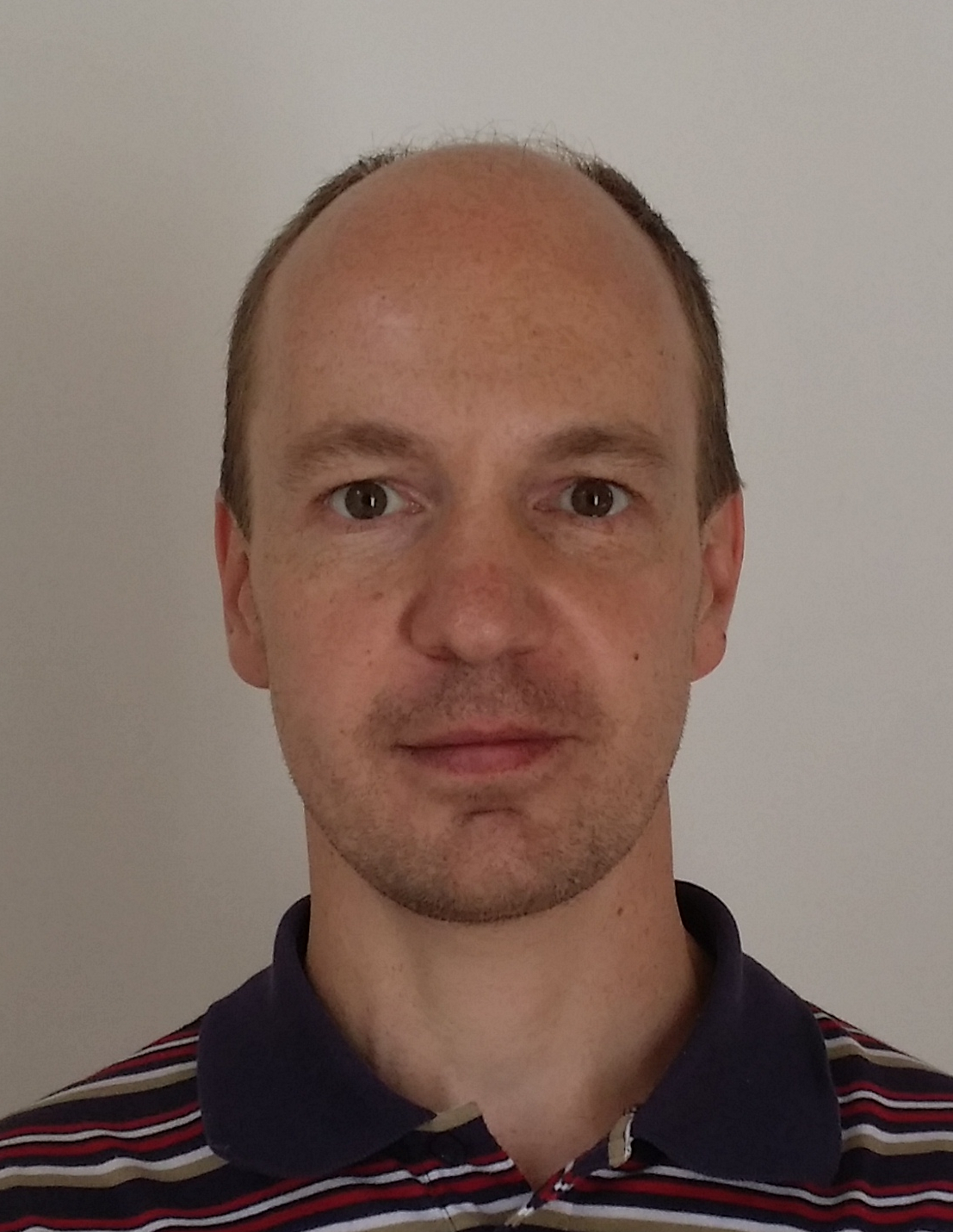}}]{Milan Hlad\'ik}
 is an associate professor at the Department of Applied Mathematics of the Faculty of Mathematics and Physics, Charles University, Prague, Czech Republic. His research interests include interval computation, numerical analysis, optimization and operations research. He is a member of the editorial board of four international journals, including European Journal of Operational Research.
\end{IEEEbiography}
\begin{IEEEbiography}[{\includegraphics[width=1in,height=1.25in,clip,keepaspectratio]{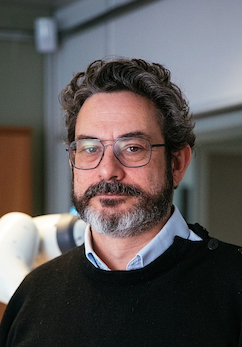}}]{David Daney} is a researcher at Inria Bordeaux Sud-Ouest, and leads the Project-Team Auctus, Inria-IMS (Univ. Bordeaux, Bordeaux INP, CNRS UMR 5218), Talence, France. He received the B. Eng in applied mathematics and computer science from University of Toulouse in 1996 and the M.Sc. and Ph.D. degrees in robotics from the University of Nice, Sophia Antipolis, France, in 1997 and 2000, respectively. He spent two years as a postdoctoral associate with C.M.W., France, to design a robotics system (2000) and with the LORIA Lab, Nancy, France, on computer arithmetic (2001). In 2002, he was an invited researcher with McGill University, Rutgers University, Laval University. From 2003 to 2013, he has been an Inria Research Scientist with the Inria Sophia Antipolis Research Center. His research interests are in the areas of parallel robotics, calibration, interval analysis, mechanical design, and assistive systems. Since 2013, he has been at the Inria research center in Bordeaux and in 2017 he founded the Auctus team in collaboration with the ENSC and the IMS. The aim of this team is to design collaborative robotic systems for man at work in industrial environments.
\end{IEEEbiography}

\end{document}